\def\eqref#1{equation~\ref{#1}}
\def\1{\bm{1}}
\DeclareMathAlphabet{\mathsfit}{\encodingdefault}{\sfdefault}{m}{sl}
\SetMathAlphabet{\mathsfit}{bold}{\encodingdefault}{\sfdefault}{bx}{n}
\newcommand{\E}{\mathbb{E}}
\newcommand{\I}{\mathcal{I}}
\newcommand\numberthis{\addtocounter{equation}{1}\tag{\theequation}}
\def\EX{\mathbb{E}}
\def\0{\mathbf{0}}
\def\begeqn{\begin{equation}}
\def\endeqn{\end{equation}}
\def\begth{\begin{theorem}}
\def\endth{\end{theorem}}
\def\begprop{\begin{proposition}}
\def\endprop{\end{proposition}}
\def\begcor{\begin{corollary}}
\def\endcor{\end{corollary}}
\def\begdef{\begin{definition}}
\def\enddef{\end{definition}}
\def\beglemm{\begin{lemma}}
\def\endlemm{\end{lemma}}
\def\begexm{\begin{example}}
\def\endexm{\end{example}}
\def\begrem{\begin{remark}}
\def\endrem{\end{remark}}
\def\begdef{\begin{definition}}
\def\enddef{\end{definition}}
\def\EX{\mathbb{E}}
\def\cD{\mathcal{D}}
\def\E{\mathbb E}
\def\I{\mathbb I}
\def\w {\mathbf{w}}
\def\x{\mathbf{x}}
\def\v{\mathbf{v}}
\def\y{\mathbf{y}}
\theoremstyle{plain}
\newtheorem{theorem}{Theorem}[section]
\newtheorem{proposition}[theorem]{Proposition}
\newtheorem{lemma}[theorem]{Lemma}
\theoremstyle{corollary}
\newtheorem{corollary}[theorem]{Corollary}
\theoremstyle{definition}
\newtheorem{definition}[theorem]{Definition}
\newtheorem{assumption}[theorem]{Assumption}
\theoremstyle{remark}
\newtheorem{remark}[theorem]{Remark}
\title{Single-loop Algorithms for Stochastic Non-Convex Optimization with Weakly-Convex Constraints}
\author{\name Ming Yang \email myang@tamu.edu \\
      \addr Department of Computer Science \& Engineering\\
      Texas A\&M University\\
      %\AND
      \name Gang Li \email gang-li@tamu.edu \\
      \addr Department of Computer Science \& Engineering\\ Texas A\&M University\\
      %\AND
      \name Quanqi Hu \email quanqi-hu@tamu.edu\\
      \addr Department of Computer Science \& Engineering\\
      Texas A\&M University \\
      %\AND
       \name Qihang Lin \email qihang-lin@uiowa.edu\\
      \addr Tippie College of Business\\
       The University of Iowa \\
      %\AND
      \name Tianbao Yang \email tianbao-yang@tamu.edu\\
      \addr Department of Computer Science \& Engineering\\
      Texas A\&M University \\
      }
\begin{document}

\maketitle

\begin{abstract}
Constrained optimization with multiple functional inequality constraints has significant applications in machine learning. This paper examines a crucial subset of such problems where both the objective and constraint functions are weakly convex. Existing methods often face limitations, including slow convergence rates or reliance on double-loop algorithmic designs. To overcome these challenges, we introduce a novel single-loop penalty-based stochastic algorithm. Following the classical exact penalty method, our approach employs a {\bf hinge-based penalty}, which permits the use of a constant penalty parameter, enabling us to achieve a {\bf state-of-the-art complexity} for finding an approximate Karush-Kuhn-Tucker (KKT) solution. We further extend our algorithm to address finite-sum coupled compositional objectives, which are prevalent in artificial intelligence applications, establishing improved complexity over existing approaches. Finally, we validate our method through experiments on  fair learning with receiver operating characteristic (ROC) fairness constraints and continual learning with non-forgetting constraints. 

%We consider a stochastic non-convex constrained optimization problem, where both objective and constrained functions are   non-smooth weakly-convex. To solve this problem, we consider  hinge penalty method and an efficient single-loop algorithm SONX for finding a nearly $\epsilon$-stationary point. Comparing many existing studies which focus on the double-loop method for solving on stochastic non-convex constrained optimization, our hinge penalty method achieves the same complexity $\mathcal{O}(\epsilon^{-6})$  only using single-loop algorithm. Additionally, our work   expands on this area by examining non-smooth weakly-convex finite-sum coupled compositional optimization(FCCO) with non-smooth weakly convex functional constraints, which is first studied in our work. And our method can be applied to smooth non-convex FCCO with smooth non-convex constraints which only needs $\mathcal{O}(\epsilon^{-6})$ complexity comparing with existing $(\epsilon^{-7})$ complexity in the FCCO case. Lastly, we explore the application of our algorithms  in  model developmental safety, which improve the capabilities on a target task while preserving the performance of the existing protected task and the application of learning with data constraints(ROC constraints). 

\end{abstract}

% keywords can be removed
%\keywords{First keyword \and Second keyword \and More}
%\setlength{\textfloatsep}{1pt}% Remove \textfloatsep

\setlength{\belowdisplayskip}{1pt}
\setlength{\belowdisplayshortskip}{0pt}

\section{Introduction}\label{section1}
\vspace*{-0.1in}
This paper focuses on solving the following non-linear inequality constrained optimization problem:
\begin{align}\label{nonlinearconstrained}
\min_{\mathbf{x}} F(\mathbf{x}), ~~~\text{s.t. }~~ h_k(\mathbf{x})\leq 0,\quad k = 1, \ldots, m, 
\end{align}
where $F$ and $h_k$, $k=1,\dots,m$ are stochastic, weakly convex and potentially non-smooth. This general formulation captures a wide range of practical optimization problems where non-smoothness arises in the objective and constraint functions. Inequality constrained optimization arise in many applications in ML/AI, including continual learning~\citep{li2024modeldevelopmentalsafetysafetycentric}, fairness-aware learning~\citep{vogel2021learning},   multi-class Neyman-Pearson classification (mNPC) problem \citep{ma2020quadratically},  robust learning \citep{robey2021adversarial},  policy optimization problem for reinforcement learning~\citep{schulman2015trust}, and distributed data center scheduling problem~\citep{yu2017online}.

%$F$ is general stochastic case  $F(\mathbf{w})= \EX_{\zeta\sim\mathfrak{D}}[f(\mathbf{w};\zeta)]$   or a finite-sum coupled compositional structure \cite{wang2022finite, li2024modeldevelopmentalsafetysafetycentric}, i.e., $\frac{1}{n}\sum_{i\in \mathcal{S}}f_i(\EX_{\xi\sim\mathcal{D}_i}[g_i(\mathbf{w};\xi)])$, where $\mathcal{S}$ denotes a finite set of $n$ items.  We consider the functions $f_i$ and $g_i$ are non-smooth weakly convex function, or non-convex smooth function. $h_k(\mathbf{w})$is non-convex function and also accessed through stochastic oracles, but not necessarily smooth. For simplicity of discussion, we denote by $g_i(\mathbf{w})=\EX_{\xi\sim\mathcal{D}_i}[g_i(\mathbf{w};\xi)]$. Many optimization models in machine learning contain nonlinear constraints.  Examples including learning with dataset constraints(ROC fairness constraints)\cite{vogel2021learning}  and model developmental safety \cite{li2024modeldevelopmentalsafetysafetycentric}.

In optimization literature, many techniques have been studied for solving convex constrained optimization problems, including primal-dual methods~\citep{huang2025stochastic,boob2023stochastic,pu2024smoothed,10.1007/s10957-009-9522-7}, augmented Lagrangian methods~\citep{10.5555/2655222}, penalty methods~\citep{0521833787}, and level-set methods~\citep{pmlr-v80-lin18c}. However, the presence of non-convexity in the problem often complicates algorithmic design and convergence analysis, while non-smoothness and stochasticity add additional layers of difficulty.

Several recent studies have explored stochastic non-convex constrained optimization under various assumptions. \citet{ma2020quadratically} and \citet{boob2023stochastic} initiated the study of stochastic optimization with weakly convex and non-smooth objective and constraint functions. Inspired by proximal-point methods for weakly-convex unconstrained optimization~\citep{davis2019proximally}, they have proposed a double-loop algorithm, where a sequence of convex quadratically regularized subproblems is solved approximately. They have derived the state-of-the-art complexity in the order of $O(1/\epsilon^6)$ for finding a nearly $\epsilon$-KKT solution under a uniform Slater's condition and a strong feasibility condition, respectively.

Recently,  \cite{alacaoglu2024complexity,li2024modeldevelopmentalsafetysafetycentric,huang2023oracle} have proposed single-loop algorithms for stochastic non-convex constrained optimization. However, these results either are restricted to smooth problems with equality constraints~\citep{alacaoglu2024complexity} or suffer from an iteration complexity higher than $O(1/\epsilon^6)$~\citep{li2024modeldevelopmentalsafetysafetycentric,huang2023oracle}. This raises an intriguing question: {\it ``Can we design a single-loop method for solving non-smooth weakly-convex stochastic constrained optimization problems to match the state-of-the-art complexity of $O(1/\epsilon^6)$ for finding a nearly $\epsilon$-KKT solution?"}

\begin{table*}[t]
\centering
\caption{Comparison with existing works.  We  consider that both objective function and constraint function oracles are stochastic.  In the Constraint column, $h_k$ means multiple constraints, $k$ from 1 to $m$. 
% Denote $\mathbf{h}(\mathbf{w})=[h_1(\mathbf{w}), \ldots, h_m(\mathbf{w})]^{\top}$. 
In Convexity column, NC means Non-convex, WC means Weakly Convex, C means Convex. %In Assumption column,  FRA means full rank assumption on the Jacobian. SC means Slater’s condition. Mangasarian-Fromovitz constraint qualification(MFCQ). 
The monotonicity column means non-decreasing. The last column Complexity means the complexity for finding an (nearly)  $\epsilon$-KKT point of \eqref{nonlinearconstrained}.}\label{tab:1}
\scalebox{0.75}{
\begin{tabular}{ccccclcc}
\toprule
\textbf{Reference} &\textbf{Loop}& \textbf{Objective} & \textbf{Constraints}&\textbf{Smoothness} & \textbf{Convexity} &\textbf{Monotonicity} & \textbf{Complexity} \\ \hline
\citep{alacaoglu2024complexity} &Single Loop&$F(\cdot)$&$h(\cdot)= 0$&$F, h$&NC ($F, h$)&none&$\tilde{\mathcal{O}}(\epsilon^{-5})$\\
\citep{li2024stochastic}&Double Loop&$F(\cdot)+g(\cdot)$&$h(\cdot)= 0$&$F,h$& C ($g$),  NC ($F, h$)&none&$\tilde{\mathcal{O}}(\epsilon^{-5})$\\
\hline
\citep{ma2020quadratically}&Double Loop&$F(\cdot)$&$h_k(\cdot)\leq 0$&none&WC ($F,h_k$)&none&$\mathcal{O}(\epsilon^{-6})$\\
\citep{boob2023stochastic}&Double Loop&$F(\cdot)$&$h_k(\cdot)\leq 0$&none& WC ($F, h_k$)&none&$\mathcal{O}(\epsilon^{-6})$ \\
\hline
\citep{huang2023oracle}&Single Loop&$F(\cdot)$&$h(\cdot)\leq 0$&none&WC ($F$),  C ($h$)&none&$\tilde{\mathcal{O}}(\epsilon^{-8})$\\
\citep{li2024modeldevelopmentalsafetysafetycentric}&Single  Loop& $\sum_{i}f_i(g_i(\cdot)) $&$h_k(\cdot)\leq 0$& $f_i, g_i, h_k$ &  NC ($f_i, g_i, h_k$) &none & $\mathcal{O}(\epsilon^{-7})$ \\ 
\citep{liu2025singleloopspidertypestochasticsubgradient}&Single  Loop& $F(\cdot)$&$h_k(\cdot)\leq 0$& none &  WC ($F, h_k$) &none & $\mathcal{O}(\epsilon^{-6})$ \\ 
\hline
Ours&Single Loop&$F(\cdot)$&$h_k(\cdot)\leq 0$&none&WC($F,h_k$) &none&$\mathcal{O}(\epsilon^{-6})$\\
%Hing(Ours)&$f(\mathbf{w})$&$h_k(\mathbf{w})\leq 0$&$f,h_k$&none&none&$\mathcal{O}(\epsilon^{-6})$\\
Ours&Single  Loop& $\sum_{i}f_i(g_i(\cdot)) $ &$h_k(\cdot)\leq 0$& none &  WC ($f_i$, $g_i$, $h_k$) &$f_i$ & $\mathcal{O}({\epsilon^{-6}})$ \\ 
Ours& Single  Loop&$\sum_{i}f_i(g_i(\cdot)) $ &$h_k(\cdot)\leq 0$& $f_i, g_i$ & WC ($h_k$) &none & $\mathcal{O}(\epsilon^{-6})$ \\
% Hinge(Ours) & $\sum_{i}f(g_i(\mathbf{w})) $ &$h_k(\mathbf{w})\leq 0$& $f_i, g_i,h_k$ & none& none & $\mathcal{O}(\epsilon^{-6})$\\
\bottomrule
\end{tabular}}
\vspace*{-0.15in}
\end{table*}
%has considered smooth non-convex constrained optimization with equality constraints. They proposed single-loop algorithms based on a penalty-based method that simply adds a quadratic penalty functions of the constraints to the objective. \citet{li2024modeldevelopmentalsafetysafetycentric} have extended the single-loop penalty-based method to solving a family of smooth non-convex constrained optimization with inequality constraints. Their penalty function is the squared hinge function of the constraint function.  However, their complexity for finding an $\epsilon$-KKT solution is $O(1/\epsilon^7)$. 
We answer the question in the affirmative. We propose a novel algorithm based on the well-studied exact penalty method~\citep{zangwill1967non,evans1973exact}. Different from \cite{alacaoglu2024complexity,li2024modeldevelopmentalsafetysafetycentric} who used a quadratic or squared hinge penalty, we construct a penalty of the constraints using the {\bf non-smooth hinge function}, i.e., by adding $\frac{1}{m}\sum_{k=1}^m \beta [h_k(\x)]_+$ into the objective, where $\beta$ is a penalty parameter. Despite being non-smooth, the benefits of this penalty function include (i) it preserves the weak convexity of in the penalty; (ii) it allows using a constant $\beta$ as opposed to a large $\beta$ required by the squared hinge penalty~\citep{li2024modeldevelopmentalsafetysafetycentric}; (iii) it introduces the structure of finite-sum coupled compositional optimization (FCCO), enabling the use of existing FCCO techniques in the algorithmic design. Consequently, this approach establishes state-of-the-art complexity for achieving a nearly $\epsilon$-KKT solution, under a regularity assumption similar to those in~\cite{li2024modeldevelopmentalsafetysafetycentric,alacaoglu2024complexity}. 
We summarize our contributions as follows.
\vspace*{-0.1in}
\begin{itemize}
    \item We propose a non-smooth, hinge-based exact penalty method for tackling non-convex constrained optimization with weakly convex objective and constraint functions. We derive a theorem to guarantee that a nearly $\epsilon$-stationary solution of the penalized objective is a nearly $\epsilon$-KKT solution of the original problem under a regularity condition of the constraints.     
    \item We develop algorithms based on FCCO to tackle a non-smooth weakly convex objective whose unbiased stochastic gradient is available and a structured non-smooth weakly-convex objective that is of the FCCO form. We derive a state-of-the-art complexity of $\mathcal{O}(\epsilon^{-6})$  for finding a nearly $\epsilon$-KKT solution.   
    %We propose a novel hinge penalty method and use an efficient single-loop algorithm SONX for finding a nearly $\epsilon$-stationary for problem \eqref{nonlinearconstrained} with complexity $\mathcal{O}(\epsilon^{-6})$ when  both objective and constraints function are non-smooth weakly convex.

%    \item  Additionally, we expand our analysis to the case of non-smooth weakly-convex FCCO with non-smooth weakly-convex constraints. We show that the  complexity of our algorithm for finding a nearly $\epsilon$-stationary point is $\mathcal{O}(\epsilon^{-6})$. And our method can be applied to smooth non-convex FCCO with smooth non-convex constraints in same complexity of $\mathcal{O}(\epsilon^{-6})$. 
    \item We conduct experiments on two applications in machine learning: learning with ROC fairness constraints and  continual learning with non-forgetting constraints. The effectiveness of our algorithms are demonstrated in comparison with an existing method based on squared hinge penalty function and a double-loop method.  
    %Finally, verify our method to solve  model developmental safety introduced by \cite{li2024modeldevelopmentalsafetysafetycentric} and ROC constraints\cite{vogel2021learning}.
\end{itemize}

\vspace*{-0.1in}
\section{Related Work}
\vspace*{-0.1in}
%\cite{robey2021adversarial}
% \cite{schulman2015trust}
Despite their long history of study, recent literature on optimization with non-convex constraints has introduced new techniques and theories for the penalty method~\citep{lin2022complexity}, Lagrangian methods~\citep{sahin2019inexact,sun2024dual,li2024stochastic,li2021rate}, the sequential quadratic programming methods~\citep{berahas2021sequential,berahas2025sequential,curtis2024sequential,na2023adaptive} and trust region method~\citep{fang2024fully}. 
%The methods in this work belong to the class of the exact penalty methods. 
%There has been growing interest in non-convex constrained optimization \cite{fletcher19851, BURKE1989319, cite-key, doi:10.1137/11082381X, doi:10.1137/0327068,CARTIS201968}. Recently, the algorithms for non-convex constrained optimizations include augmented Lagrangian Method(ALM)\cite{}, the penalty method,  the sequential quartic programming method, and inexact projected method. 
These works mainly focused on the problems with smooth objective and constraint functions. On the contrary, based on Moreau envelopes, \cite{ma2020quadratically, boob2023stochastic, huang2023oracle, jia2022first} have developed algorithms and theories for problems with non-smooth objective and  constraint functions. However, these methods cannot be applied when the objective function has a compositional structure. The method by \citet{li2024modeldevelopmentalsafetysafetycentric} can be applied when the objective function has a coupled compositional structure, but it requires the objective and constraint functions to be smooth. Compared to their work, we relax the smoothness assumption, but assume the monotonicity of the outer function in the compositional structure.

\citet{alacaoglu2024complexity} studies a single-loop quadratic penalty method for stochastic smooth nonconvex equality constrained optimization under a generalized full-rank assumption on the Jacobian of the constraints. 
%a regularity assumption on the Jacobian of the constraints. 
%Similar assumptions are made for other existing algorithms that can handle non-convex constraints, e.g., \citet{sahin2019inexact, lin2022complexity, li2024stochastic}. 
%consider the objective function in the form of $f(\mathbf{w})+g(\mathbf{w})$, where $f$ is non-convex and smooth while $g$ is convex but not necessarily smooth.
Under a similar assumption, \citet{li2024stochastic} proposed a double-loop inexact augmented Lagrangian method using the momentum-based variance-reduced proximal stochastic gradient algorithm as a subroutine. Both \cite{alacaoglu2024complexity} and \cite{li2024stochastic} consider {\bf only smooth equality constraints and hence are not applicable to non-smooth constraint functions}. %\footnote{Complexity lower than $\mathcal{O}(1/\epsilon^5)$ can be achieved by  \citet{alacaoglu2024complexity} when the constraints are deterministic.}
\citet{ma2020quadratically} and \citet{boob2023stochastic} studied weakly convex non-smooth optimization with inequality constraints.  They proposed double-loop methods, where a sequence of proximal-point subproblems is solved approximately by a stochastic subgradient method~\citep{yu2017online} or a constraint extrapolation method~\citep{boob2023stochastic}. 
%Under the Uniform Slater's Condition, they establish the computation complexities of $\mathcal{O}(1/\epsilon^6)$ for finding a nearly stationary point by double loop algorithm, online stochastic subgradient. \cite{boob2023stochastic} study composite  optimization problem $f_0+g_0$ with functional constraints $f_k+g_k$. The components $f_0, f_k$ are no-convex smooth function, $g_0, g_k$ are convex, but not necessarily smooth function.  They introduce a new Constraint Extrapolation (ConEx)  method which transforms the initial nonconvex problem into a sequence of convex problems by adding quadratic terms to both the objective and constraints. Under a strong feasibility assumption, they establish
Under a uniform Slater's condition and a strong feasibility assumption, respectively, \citet{ma2020quadratically} and 
\citet{boob2023stochastic} establish a complexity of $\mathcal{O}(1/\epsilon^6)$. Under the uniform Slater's condition, \citet{huang2023oracle} proposed a single-loop switching subgradient method for non-smooth weakly convex constrained optimization problems, but the complexity of their method is $\mathcal{O}(1/\epsilon^8)$ when the constraints are stochastic. 

Motivated by applications from continual learning with non-forgetting constraints, \citet{li2024modeldevelopmentalsafetysafetycentric} introduced a single-loop squared hinge  penalty method for a FCCO~\citep{wang2022finite} problem, where the objective function takes the form of $\sum_{i}f_i(g_i(\cdot))$ and the constraints are non-convex and smooth. They establish a complexity of $\mathcal{O}(1/\epsilon^7)$ under the generalized full-rank assumption on the Jacobian of constraints.  Our results can be easily extended to a FCCO problem with nonconvex but unnecessarily smooth inequality constraints. %Following the traditional exact penalty method, we first add a hinge penalty of the constraints to the objective function, and solve the unconstrained FCCO using a single-loop algorithm. 
We show that our method improves the complexity to $\mathcal{O}(1/\epsilon^6)$ in two cases: (1) $f_i$,  $g_i$ and  $h_k$ are non-smooth weakly convex and $f_i$ is monotone non-decreasing, and (2) $f_i$ and $g_i$ are smooth nonconvex but $h_k$ is non-smooth weakly convex. Note that the second case includes the special case where $f_i$, $g_i$ and $h_k$ are smooth non-convex. A comparison with prior results in presented in Table~\ref{tab:1}.%The first case includes the non-smooth weakly convex  constrained optimization problem without the compositional structure as a special case, where our complexity matches the best result by the existing works~\cite{ma2020quadratically, boob2023stochastic}.

We also note that some works have considered transferring inequality constraints into equality constraints by adding a slack variable for each constraint~\citep{article, ding2023squared,li2021rate}.  However, these approaches either require stronger second-order conditions to ensure that a KKT solution to the equality-constrained problem also applies to the original inequality-constrained problem~\citep{article, ding2023squared}, or they rely on the boundedness of constraint functions to guarantee this transfer~\citep{li2021rate} (cf. Section~\ref{app:eco} for more discussions). Even when such a transformation is feasible, algorithms for non-smooth stochastic constraint functions remain underdeveloped, underscoring the uniqueness and significance of our contributions.

Finally, we would like to acknowledge a concurrent work by~\cite{liu2025singleloopspidertypestochasticsubgradient}. %~\footnote{The first version of the present work~\cite{ming25} was submitted on 01/30/2025.}.  
Their study also explores a penalty method for problems with weakly convex objective and  constraint function. The key differences between our work and theirs are: (i) although they consider a vector-valued function consisting of multiple components, they treat multiple constraint as a single one, i.e., they need to compute mini-batch estimators of  all constraint functions at each iteration in their algorithm. In contrast, we do sampling on the constraint functions, and only require estimating $O(1)$ constraint functions at each iteration.    (ii) they do not directly use the hinge penalty function, but instead adopt a smoothed version of it; and (iii) they employ the SPIDER/SARAH technique to track the constraint function value, which requires periodically using large batch sizes. In contrast, we track multiple constraint function values by employing the MSVR technique~\citep{jiang2022multi} using only a constant batch size.

% \begin{table}[h]
% \centering
% \caption{$\sum_{i}f(\mathbf{w})$ comparison with prior works for solving (1). We  consider both objective function and constraint function are stochastic.  We also denote \eqref{qudraticconstrained} as Quadratic Method, denote \eqref{hingeconstrained} as Hinge Method, but with objective function $f$, not involving compositional structure.  The last column Complexity means our algorithm needs how many iterations to find a nearly  $\epsilon$-stationary point of \eqref{nonlinearconstrained}.}
% \begin{tabular}{ccccccc}
% \hline
% \textbf{Setting}&\textbf{Method} & \textbf{Objective stochastic} & \textbf{Constrained Stochastic} & \textbf{Weak Convexity} & \textbf{Monotonicity} & \textbf{Complexity} \\ \hline
% &Quadratic \cite{li2024modeldevelopmentalsafetysafetycentric}& \eqref{qudraticconstrained} & $f_i, g_i, h_k$ & none & none & $\mathcal{O}(\epsilon^{-7})$ \\ 
% &Hinge(Ours)& \eqref{hingeconstrained} & none & $f_i$, $g_i$, $h_k$ & $f_i$ & $\mathcal{O}(\epsilon^{-6})$ \\ 
% &Hinge(Ours) & \eqref{hingeconstrained} & $f_i, g_i$ & $h_k$ & none & $\mathcal{O}(\epsilon^{-6})$ \\ \hline
% \end{tabular}
% \end{table}

\setlength{\textfloatsep}{5pt}

\section{Notations and Preliminaries}
\vspace*{-0.1in}
Let $\|\cdot\|$ be the $\ell_2$-norm. For $\psi : \mathbb{R}^d \to \mathbb{R} \cup \{+\infty\}$, the subdifferential of $\psi$ at $\mathbf{x}$ is
%\begin{align*}
    $\partial \psi(\mathbf{x}) = \{ \zeta \in \mathbb{R}^d \mid \psi(\mathbf{x}') \geq \psi(\mathbf{x}) + \langle \zeta, \mathbf{x}' - \mathbf{x} \rangle 
   + o(\|\mathbf{x}' - \mathbf{x}\|), \mathbf{x}' \to \mathbf{x} \}$,
%\end{align*}
and $\zeta \in \partial \psi(\mathbf{x})$ denotes a subgradient of $\psi$ at $\mathbf{x}$. Let $[\cdot]_+ := \max\{0, \cdot\}$ denote the hinge function.  We say $\psi$ is $\mu$-strongly convex ($\mu \geq 0$)  if %\vspace{-3mm}
%\begin{align*}
    $\psi(\mathbf{x}) \geq \psi(\mathbf{x}') + \langle \zeta, \mathbf{x} - \mathbf{x}' \rangle + \frac{\mu}{2} \|\mathbf{x} - \mathbf{x}'\|^2$ 
%\end{align*}
for any $(\mathbf{x}, \mathbf{x}') 
$ and any $\zeta \in \partial \psi(\mathbf{x}')$. We say $\psi$ is $\rho$-weakly convex ($\rho > 0$) if %\vspace{-2mm}
%\begin{align*}
    $\psi(\mathbf{x}) \geq \psi(\mathbf{x}') + \langle \zeta, \mathbf{x} - \mathbf{x}' \rangle - \frac{\rho}{2} \|\mathbf{x} - \mathbf{x}'\|^2$
%\end{align*}
for  any $(\mathbf{x}, \mathbf{x}') 
$ and any $\zeta \in \partial \psi(\mathbf{x}').$
We say $\psi$ is $L$-smooth ($L\ge 0$) if %\vspace{-2mm}
%\begin{align*}
    $|\psi(\mathbf{x}) - \psi(\mathbf{x}')+\langle \nabla  \psi(\mathbf{x}'), \mathbf{x}-\mathbf{x}' \rangle| \leq \frac{L}{2}\|\mathbf{x}-\mathbf{x}' \|^2$
%\end{align*}
 for  any $(\mathbf{x}, \mathbf{x}') 
$.
For simplicity, we abuse $\partial \psi(x)$ to denote one subgradient from the corresponding subgradient set when no confusion could be caused. %We use $\partial \psi(x; \mathcal{B})$ to represent a unbiased stochastic estimator of the subgradient $\partial \psi(x)$ based on a  batch of samples $\mathcal{B}$. 

%We consider solving a constrained optimization problem: 
%\begin{align}
%\min_{\mathbf{x}} F(\mathbf{x}), ~~~\text{s.t. }~~ h_k(\mathbf{x})\leq 0,\quad k = 1, \ldots, m.\numberthis \label{nonlinearconstrained}
%\end{align}
Throughout the paper, we make the following assumptions on (\ref{nonlinearconstrained})

\begin{assumption}\label{Lipschitz}
     (a) $F$ is $\rho_0$-weakly convex and $L_F$-Lipschitz continuous; (b) $h_k(\cdot)$ is $\rho_1$-weakly convex and $L_h$-Lipschitz continuous for $k=1, \dots, m.$
\end{assumption}
\vspace*{-0.1in}
For a non-convex optimization problem, finding a globally optimal solution is intractable. Instead, a Karush-Kuhn-Tucker (KKT) solution is of interest, which is an extension of a stationary solution of an unconstrained non-convex optimization problem.  
%\begin{definition}
%\label{def:KKT}
A solution $\x$  is a KKT solution to (\ref{nonlinearconstrained}) if there exist $\boldsymbol{\lambda}=(\lambda_1,\dots,\lambda_m)^\top\in\mathbb{R}_+^m$ such that $0 \in \partial F(\x)+  \sum_{k=1}^m \lambda_k\partial h(\x)$, $h_k(\x)\leq 0, \forall k$ and $\lambda_kh_k(\x)=0$ $\forall k$.
%\end{definition}
%\vspace*{-0.1in}
Of interest in non-asymptotic analysis is finding an $\epsilon$-KKT solution given below. 
\begin{definition}
\label{def:eKKT}
A solution $\x$  is an $\epsilon$-KKT solution to (\ref{nonlinearconstrained}) if there exist $\boldsymbol{\lambda}=(\lambda_1,\dots,\lambda_m)^\top\in\mathbb{R}_+^m$ such that $\text{dist}(0,   \partial F(\x)+  \sum_{k=1}^m \lambda_k\partial h(\x))\leq \epsilon$, $h_k(\x)\leq \epsilon, \forall k$,  and $\lambda_kh_k(\x)\leq \epsilon$ if $h_k(\x)\geq 0$ or $\lambda_k=0$ if $h_k(\x)<0$ for $k=1,\dots,m$.
\end{definition}
\vspace*{-0.1in}

However, since the objective and the constraint functions are non-smooth, finding an $\epsilon$-KKT solution is not tractable, even  the constraint functions are absent. 
% For example, if $F(x) = |x|$ finding $\epsilon$-stationary solution is infeasible unless we find the optimal solution $x=0$. 
Let us consider a simple example $\min_{\x} |\x|$.  The only stationary point is the optimal solution $\x_* = 0$, and any $\x \neq 0$ is not an $\epsilon$-stationary solution ($\epsilon < 1$) no matter how close $\x$ to $0$. In other words, unless the iterate generated by the algorithm can exactly land on $\x=0$, it will not produce an $\epsilon$-stationary point. To address this issue, an effective approach for solving non-smooth optimization is to approximate the original problem by a smoothed one using different smoothing techniques, including Nesterov’s smoothing\citep{cite-keyYU}, randomized smoothing \citep{JMLR:v23:21-1507}, and Moreau envelope \citep{davis2019proximally}. Based on  Moreau envelopes, \cite{ma2020quadratically}; \cite{boob2023stochastic}; \cite{huang2023oracle}; \cite{jia2022first} have developed algorithms and theories for problems with non-smooth objective and constraint functions, which derives a nearly $\epsilon$-KKT solution.
We follow Moreau envelopes technique on non-smooth weakly convex optimization and consider finding a nearly $\epsilon$-KKT solution defined below. 

\begin{definition}
\label{def:neKKT}
A solution $\x$  is a nearly $\epsilon$-KKT solution to (\ref{nonlinearconstrained}) if there exist $\bar\x$ and $\boldsymbol{\lambda}=(\lambda_1,\dots,\lambda_m)^\top\in\mathbb{R}_+^m$ such that (i) $\|\x - \bar\x\|\leq O(\epsilon)$,  $\text{dist}(0,   \partial F(\bar\x)+  \sum_{k=1}^m \lambda_k\partial h(\bar\x))\leq \epsilon$, (ii)  $h_k(\bar\x)\leq \epsilon, \forall k$,  and (iii) $\lambda_kh_k(\bar\x) \leq \epsilon $ if $h_k(\bar\x)\geq 0$ or $\lambda_k=0$ if $h_k(\bar\x)<0$ for $k=1,\dots,m$.
\end{definition}
% \textcolor{red}{fixDefinition3.3 as following}
% \begin{definition}
% \label{def:nearlyKKT}
% A solution $\x$  is a nearly $\epsilon$-KKT solution to \eqref{nonlinearconstrained} if there exist $\bar\x$ and $\boldsymbol{\lambda}=(\lambda_1,\dots,\lambda_m)^\top\in\mathbb{R}_+^m$ such that (i) $\EX[\|\x - \bar\x\|]\leq O(\epsilon)$,  $\EX[\text{dist}(0,   \partial F(\bar\x)+  \sum_{k=1}^m \lambda_k\partial h(\bar\x))]\leq \epsilon$, (ii) it  holds with probability\footnote{If $|h_h(\bar\x)|<+\infty$  for any $k$ and $\bar\x$, this high probability result can be replaced by  $\E[\max_{k}h_k(\bar{\x})]\leq  O(\epsilon)$ and $\E[\lambda_k h_k(\hat{\x})|]\leq  O(\epsilon)$. } $1-\mathcal{O}(\epsilon)$ that $h_k(\bar\x)\leq \mathcal{O}(\epsilon)$ for $\forall k$ and  $\lambda_k h_k(\bar\x) \leq \mathcal{O}(\epsilon) $ if $h_k(\bar\x)\geq 0$ or $\lambda_k=0$ if $h_k(\bar\x)<0$ for $k=1,\dots,m$.
% \end{definition}
\vspace*{-0.1in}

%It is difficult sometimes  impossible to find an $\epsilon$-stationary point of a non-smooth weakly-convex function $\Phi$, i.e., $\text{dist}(0, \partial \Phi(\mathbf{w}))\leq \epsilon$. For example,  an $\epsilon$-stationary point of function $f(x)=|x|$ does not exist for $0\leq\epsilon\leq 1$ unless it is the optimal solution.  To tackle this issue, \cite{davis2019stochastic, hu2024non} proposed to use the stationarity of the problem's 
In order to develop our analysis, we introduce the Moreau envelope of a $\rho$-weakly convex function $\phi$:  %Given a weakly convex function $\phi: \mathbb{R}^m \rightarrow \mathbb{R}$, it's Moreau envelope and proximal map with $\lambda>0$ are constructed as 
%\begin{align*}
    $\phi_{\theta}(\x) := \min_{\y} \left\{ \phi(\y) + \frac{1}{2\theta} \| \y - \x \|^2 \right\}$,
%\end{align*}
where $\theta< \rho^{-1}$.  The Moreau envelope is an implicit smoothing of the original problem (cf. Lemma~\ref{lem:mes} in Appendix). 
%The following lemma follows from standard result \cite{davis2019stochastic}. \vspace{-0.05in}
%\begin{lemma}
%    Given a $\rho$ weakly convex function $\phi$ and $\theta < (\rho)^{-1}$, then the envelope $\phi_\theta$ is smooth with gradient given by $\nabla \phi_{\theta}(\x) = \theta^{-1}(\x-\text{prox}_{\theta\phi}(\x))$, where 
%    \begin{align}
%    \text{prox}_{\theta \phi}(\x) := \arg\min_{\y} \left\{ \phi(\y) + \frac{1}{2\theta} \| \y - \x \|^2 \right\}.
%\end{align}
%The smoothness constant of $\phi_\theta$ is $ \frac{2-\theta\rho}{\theta(1-\theta\rho)}$.   In addition, $\text{dist}(0, \partial \phi(\bar{\x}))\leq \|\nabla \phi_{\theta}(\x)\|$. 
%\end{lemma}
%Moreover, for any point $x\in \mathbb{R}^m$, the proximal point $\bar{x}:= \text{prox}_{\lambda \phi}(x)$ satisfies:
%\begin{align*}
%    &\|\bar{x}-x\|= \lambda \|\nabla \phi_{\lambda}(x)\|, ~~~~\phi(\bar{x})\leq \phi(x),\\ %&~~~~\text{dist}(0, \partial \phi(\bar{x}))\leq \|\nabla \phi_{\lambda}(x)\|
%\end{align*}
%\vspace{-0.1in} 
Hence, if we find $\x$ such that $\| \nabla \phi_{\theta}(\x)\|\leq \epsilon$, then we can say that $\x$ is close to a point $\bar{\x}$ that is $\epsilon$-stationary~\citep{davis2019proximally}. We call $\x$ a nearly $\epsilon$-stationary solution of $\min_{\x}\phi(\x)$. %, which is nearly $\epsilon-$stationary solution of $\phi(x)$.

\textbf{Remark:}
    Our current formulation focuses on the non-smooth case, there is also a simple extension in the smooth setting. Suppose the objective and constraint functions are smooth,  using the Lipschitz contiuity of the gradients, and that of constraint functions, it is easy to prove that  a nearly $\epsilon$-stationary point is also an $O(\epsilon)$-stationary point. This is because $\|\x-\bar \x\|_2\leq \mathcal{O}(\epsilon)$ and leveraging the Lipschitz conunity condition, every condition in the definition of nearly $\epsilon$-KKT solution can be converted to that on $\x$. 

\section{A Hinge Exact Penalty Method and Theory}
\vspace*{-0.1in}
%In this section, we introduce our hinge penalty method for solving~(\ref{nonlinearconstrained}). 
The key idea of the hinge-based exact penalty method is to solve the following unconstrained minimization~\citep{zangwill1967non,evans1973exact}, where $\beta>0$ is a penalty parameter:
\begin{align}\label{eqn:hinge}
\min_{\x}\Phi(\x): = F(\x) +  \frac{\beta}{m}\sum\nolimits_{k=1}^m  [h_k(\x)]_+,
\end{align}
 For clarity of notation, we define 
$
h_k^+(\x)=[h_k(\x)]_+.
$
Let  $\partial [h_k(\x)]_+\in[0,1]$ be the subdifferential of $[z]_+$ at $z=h_k(\x)$, and let  $\partial h_k^+(\x)$ be the subgradient in terms of $\x$.
\begin{lemma}\label{lemma:phi_weakly}
Under Assumption~\ref{Lipschitz}, $\Phi(\x)$ in~(\ref{eqn:hinge}) is $C$-weakly convex and $L$-Lipschitz, where $C = \rho_0 + \beta \rho_1$ and $L = L_F+\beta L_h$. 
\end{lemma}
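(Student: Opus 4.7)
The plan is to verify the two claimed properties of $\Phi$ separately, by first establishing them for each summand $[h_k(\x)]_+$ and then using the fact that both weak convexity constants and Lipschitz constants are additive (with nonnegative weights) under summation. For the $F$ term, Assumption~\ref{Lipschitz}(a) directly supplies $\rho_0$-weak convexity and $L_F$-Lipschitz continuity, so the whole argument reduces to handling the hinge compositions $[h_k(\cdot)]_+$.

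The Lipschitz bound is immediate. Since $[\cdot]_+$ is $1$-Lipschitz on $\mathbb{R}$, for any $\x,\y$ we have
\[
\bigl|[h_k(\x)]_+ - [h_k(\y)]_+\bigr| \le |h_k(\x)-h_k(\y)| \le L_h\|\x-\y\|.
\]
Summing over $k$ with the weight $\beta/m$ and adding the $L_F\|\x-\y\|$ bound for $F$ yields the overall constant $L_F+\beta L_h$.

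For weak convexity, I would start from the midpoint form of the $\rho_1$-weak convexity of $h_k$: using $\lambda\|\x\|^2+(1-\lambda)\|\y\|^2 - \|\lambda\x+(1-\lambda)\y\|^2 = \lambda(1-\lambda)\|\x-\y\|^2$, one obtains
\[
h_k(\lambda\x+(1-\lambda)\y) \le \lambda h_k(\x) + (1-\lambda)h_k(\y) + \tfrac{\rho_1}{2}\lambda(1-\lambda)\|\x-\y\|^2
\]
for every $\lambda\in[0,1]$. Applying $[\cdot]_+$ to both sides (using that it is non-decreasing), then invoking its $1$-Lipschitz property to detach the quadratic remainder, and finally using its convexity to split the convex combination yields
\[
[h_k(\lambda\x+(1-\lambda)\y)]_+ \le \lambda[h_k(\x)]_+ + (1-\lambda)[h_k(\y)]_+ + \tfrac{\rho_1}{2}\lambda(1-\lambda)\|\x-\y\|^2,
\]
which is exactly the midpoint characterization of $\rho_1$-weak convexity for $[h_k(\cdot)]_+$. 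Summing with weights $\beta/m$ and adding the $\rho_0$-weak convexity of $F$ gives $C=\rho_0+\beta\rho_1$.

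The only mildly delicate point is the composition step: neither monotonicity alone nor $1$-Lipschitzness alone suffices to push the weak-convexity inequality through $[\cdot]_+$, and one must invoke both (monotonicity to preserve the inequality when applying $[\cdot]_+$, and $1$-Lipschitzness to strip off the $\tfrac{\rho_1}{2}\lambda(1-\lambda)\|\x-\y\|^2$ error before invoking convexity of $[\cdot]_+$). Once that lemma is in hand, the rest is bookkeeping.
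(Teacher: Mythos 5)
Your proof is correct and follows essentially the same route as the paper's: both reduce the claim to showing that each $[h_k(\cdot)]_+$ is $\rho_1$-weakly convex and $L_h$-Lipschitz by exploiting that the hinge is non-decreasing, convex, and has slope in $[0,1]$, and then sum the constants. The only difference is that the paper verifies weak convexity through the subgradient inequality (multiplying the weak-convexity inequality for $h_k$ by a hinge subgradient $\xi \in [0,1]$), whereas you verify the equivalent midpoint characterization; this is a cosmetic rather than substantive difference.
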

\vspace*{-0.1in}Our hinge exact penalty method is to employ a stochastic algorithm for finding an $\epsilon$-stationary solution to $\Phi_{\theta}(\x)$ for some $\theta<1/C$. We establish a theorem below to guarantee that such $\x$ is a nearly $\epsilon$-KKT solution to~(\ref{nonlinearconstrained}). 

\begin{theorem}\label{thm:main}
Assume that there exists $\delta>0$ such that
\begin{align}\label{eqn:cq}
\text{dist}\left(0, \frac{1}{m}\sum\nolimits_{k=1}^{m}\partial h_k^+(\mathbf{x}')\right)\ge \delta, \forall\x'\in\mathcal V
\end{align} where $\mathcal V=\{\x: \max_k h_k(\x)> 0\}.$ If $\x$ is an $\epsilon$-stationary solution to $\Phi_\theta(\x)$ for some $\theta<1/C$ and a constant $\beta> (\epsilon+L_F)/\delta$, then $\x$ is a nearly $\epsilon$-KKT solution to the original problem~(\ref{nonlinearconstrained}). 
\end{theorem}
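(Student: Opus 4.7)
The plan is to exhibit the auxiliary point required by Definition~\ref{def:neKKT} as $\bar{\x} := \text{prox}_{\theta\Phi}(\x)$, construct explicit multipliers $\lambda_k$ from the subgradient representation of $\Phi$ at $\bar{\x}$, and use the constraint qualification \eqref{eqn:cq} together with the lower bound on $\beta$ to force $\bar{\x}$ to be feasible. Since $\Phi$ is $C$-weakly convex with $\theta<1/C$ (Lemma~\ref{lemma:phi_weakly}), the Moreau-envelope lemma gives $\|\x-\bar{\x}\|=\theta\|\nabla\Phi_\theta(\x)\|\le\theta\epsilon=O(\epsilon)$, and $\text{dist}(0,\partial\Phi(\bar{\x}))\le\|\nabla\Phi_\theta(\x)\|\le\epsilon$. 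In particular, there exists $\zeta\in\partial\Phi(\bar{\x})$ with $\|\zeta\|\le\epsilon$.

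Next I would invoke the chain rule for the subdifferential of the weakly convex composition $[h_k(\cdot)]_+$: since $[\cdot]_+$ is convex and non-decreasing with $\partial[z]_+=\{1\}$ for $z>0$, $\{0\}$ for $z<0$, and $[0,1]$ for $z=0$, any $\xi_k\in\partial h_k^+(\bar{\x})$ can be written as $\xi_k = s_k \eta_k$ with $s_k\in\partial[h_k(\bar{\x})]_+\subseteq[0,1]$ and $\eta_k\in\partial h_k(\bar{\x})$. Hence $\zeta = g_0 + \tfrac{\beta}{m}\sum_{k=1}^m s_k \eta_k$ for some $g_0\in\partial F(\bar{\x})$. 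Set $\lambda_k := \beta s_k/m \ge 0$; then $\text{dist}\!\bigl(0,\partial F(\bar{\x})+\sum_k\lambda_k\partial h_k(\bar{\x})\bigr)\le\|\zeta\|\le\epsilon$, which verifies condition (i) of Definition~\ref{def:neKKT}. The complementary slackness condition (iii) is immediate from the definition of $s_k$: when $h_k(\bar{\x})<0$ we have $s_k=0$ so $\lambda_k=0$, and when $h_k(\bar{\x})=0$ we get $\lambda_k h_k(\bar{\x})=0\le\epsilon$.

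The main content is establishing feasibility (condition (ii)), which I would prove by contradiction. Suppose $\bar{\x}\in\mathcal V$, i.e.\ $\max_k h_k(\bar{\x})>0$. By Assumption~\ref{Lipschitz}(a), $\|g_0\|\le L_F$, so the triangle inequality applied to $\zeta = g_0 + \beta\cdot\tfrac{1}{m}\sum_k \xi_k$ yields
\begin{equation*}
\Bigl\|\tfrac{1}{m}\sum_{k=1}^m\xi_k\Bigr\| \le \frac{\|\zeta\|+\|g_0\|}{\beta} \le \frac{\epsilon+L_F}{\beta} < \delta,
\end{equation*}
where the last inequality uses the hypothesis $\beta>(\epsilon+L_F)/\delta$. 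This contradicts \eqref{eqn:cq}, which asserts that every element of $\tfrac{1}{m}\sum_k\partial h_k^+(\bar{\x})$ has norm at least $\delta$. Therefore $\bar{\x}\notin\mathcal V$, so $h_k(\bar{\x})\le 0\le\epsilon$ for all $k$, giving (ii) and concluding the proof.

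The most delicate step is the chain-rule decomposition of $\partial[h_k(\cdot)]_+$ at points where $h_k(\bar{\x})=0$ together with the identification of a \emph{single} choice of $\xi_k\in\partial h_k^+(\bar{\x})$ that simultaneously realizes the subgradient $\zeta$ of $\Phi$ and satisfies the lower bound \eqref{eqn:cq}; this must be done consistently so that the same $\lambda_k$'s witness both (i) and (iii). Everything else is a direct application of Moreau-envelope calculus and the triangle inequality.
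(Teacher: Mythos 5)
Your proof is correct and follows essentially the same route as the paper's: the same choice $\bar{\x}=\text{prox}_{\theta\Phi}(\x)$, the same chain-rule decomposition $\partial h_k^+(\bar{\x})=\xi_k\,\partial h_k(\bar{\x})$ yielding multipliers $\lambda_k=\beta\xi_k/m$, and the same triangle-inequality contradiction with \eqref{eqn:cq} to force $\max_k h_k(\bar{\x})\le 0$. The consistency concern you flag at the end is handled identically in the paper by fixing one subgradient representation of the near-zero element of $\partial\Phi(\bar{\x})$ and reusing it for conditions (i), (ii), and (iii).
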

\vspace*{-0.2in}
\begin{proof}
Let $\bar \x = \text{prox}_{\theta \Phi}(\x)$.  By optimality of $\bar{\x}$, we have
    $$0\in \partial F(\bar\x)+\frac{\beta}{m}\sum\nolimits_{k=1}^{m}\partial h_k^+(\bar{\x})+(\bar{\x}-{\x})/\theta.$$  Since $\|\nabla\Phi_{\theta}(\x)\|\leq \epsilon$, then we have $\|\bar\x - \x\|= \theta \|\nabla\Phi_{\theta}(\x)\|\leq \theta \epsilon$.   Then we have
    \begin{align*}
       \text{dist}\left(0, \partial F(\bar\x)+\frac{\beta}{m}\sum\nolimits_{k=1}^{m}\partial h_k^+(\bar{\x})\right) \leq \|(\bar{\x}-{\x})/\theta\|\leq \epsilon. 
    \end{align*}
    Since $\partial h_k^+(\bar{\x})= \xi_k \partial h_k(\bar\x)$ (see \citet[Corollary 16.72]{bauschke2017correction}), where 
   \begin{align*}
        \xi_k  = \begin{cases} 
1 & \text{if } h_k(\bar{\x}) > 0, \\
[0,1] & \text{if } h_k(\bar{\x}) = 0, \\ 
0 & \text{if }  h_k(\bar{\x}) < 0,
\end{cases}
\quad \subseteq \partial[h_k(\bar{\x})]_+,
    \end{align*}
 there exists $\lambda_k \in \frac{\beta\xi_k}{m}\geq 0, \forall k$ such that 
 %\begin{align*}
       $\text{dist}\left(0, \partial F(\bar\x)+\sum_{k=1}^{m}\lambda_k \partial h_k(\bar{\x})\right) \leq \epsilon$. 
  %  \end{align*}
Thus, we prove condition (i) in Definition~\ref{def:neKKT}. Next, let us prove condition (ii). We argue that $\max_k h_k(\bar{\x})\leq 0 $. Suppose this does not hold, i.e.,  $\max_k h_k(\bar{\mathbf{x}})> 0 $,  we will derive a contradiction. Since $\exists\v\in\partial F(\bar\x)$ such that  
\begin{align}\label{eqn:beta}
    \epsilon\geq \text{dist}\left(0, \v+\frac{\beta}{m}\sum\nolimits_{k=1}^{m}\partial h_k^+(\bar{\x})\right)\ge \text{dist}\left(0, \frac{\beta}{m}\sum\nolimits_{k=1}^{m}\partial h_k^+(\bar{\x})\right)- \|\v\| \ge \beta \delta -L_F,  
\end{align}
which is a contradiction to the assumption that $\beta  >(\epsilon + L_F)/\delta$. Thus, $\max_k h_k(\bar{\x})\leq 0 $. %In addition, 
%\begin{align*}
%    h_k(\mathbf{x})\leq h_k(\bar{\mathbf{x}})+L_h\|\mathbf{x}-\bar{\mathbf{x}}\|\leq L_h \lambda \epsilon =\mathcal{O}(\epsilon )
%\end{align*}
This proves condition (ii). The last condition (iii) holds because: $\lambda_k =  \frac{\beta\xi_k}{m}$, which is zero if $h_k(\bar\x)<0$; and $\lambda_k h_k(\bar \x)\leq 0$. 
%\begin{align*}
%&\lambda_k [h_k(\mathbf{x})]_+= \frac{\beta}{m}\xi_k[h_k(\mathbf{x})]_+\leq \frac{\beta L_h\lambda %\epsilon}{m} = \mathcal{O}(\epsilon). 
%\end{align*}
\end{proof}
Assumption (\ref{eqn:cq}) is not specific to our method but is shared by all penalty-based approaches, including recent works such as \cite{alacaoglu2024complexity}, \cite{li2024stochastic}, and \cite{li2024modeldevelopmentalsafetysafetycentric}. This assumption reflects a fundamental challenge in non-convex constrained optimization: without a condition such as (\ref{eqn:cq}), the problem becomes ill-posed and is generally unsolvable using first-order methods. There is potential to slightly relax assumption (\ref{eqn:cq}). In our current formulation, we assume (\ref{eqn:cq}) holds over the entire set $\mathcal{V}$. However, it may suffice to require (\ref{eqn:cq}) to hold only on a subset of $\mathcal{V}$, defined as $\mathcal{V}_c := \{x : c \geq \max_k h_k(x) > 0\}$ for some constant $c > 0$. This is a strictly weaker assumption. 

Under this relaxation, the algorithm can be initialized within the feasible region and proceed with step sizes chosen carefully to ensure that all intermediate iterates remain within $\mathcal{V}_c$. Note that even if some iterates become slightly infeasible, as long as the step size is sufficiently small, the gradients of the constraint functions will guide the iterates back toward feasibility. 
This way, we can still conduct the same convergence analysis just select feasible initial point $\mathbf{x}_0$ and proceed with step sizes chosen carefully. This strategy has been successfully adopted in prior work (e.g., \citep{huang2023oracle}).

\subsection{Sufficient Conditions for Eq.~(\ref{eqn:cq})}
%A key assumption in Theorem~\ref{thm:main} is the condition in~(\ref{eqn:cq}). 
Next, we provide some sufficient conditions for~(\ref{eqn:cq}).
% For simplicity of exposition and comparison with prior works, we consider differentiable constraint functions. The results also hold for the non-smooth case, with the gradient replaced by the subgradient.

First, we consider a simple setting where there is only one constraint $h(\x)\leq 0$.   The lemma below states two conditions sufficient for proving the condition (\ref{eqn:cq}). The proof of Lemma \ref{sufficient_conditions} and \ref{egivalue} are presented in Appendix \ref{TechLemma}. 

\begin{lemma} \label{sufficient_conditions}If there exists $\mu>0, c>0$ such that (i) $h(\x)-\min_\y h(\y)\leq \frac{1}{2\mu}\text{dist}(0, \partial h(x))^2$ for any $\x$ satisfying $h(\x)>0$; and (ii) $\min_{\y}h(\y)\leq - c$, then condition (\ref{eqn:cq}) holds for $\delta =\sqrt{2c\mu}$. 
\end{lemma}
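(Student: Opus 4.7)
The plan is to specialize condition~(\ref{eqn:cq}) to the single-constraint setting $m=1$ (as introduced right before the lemma) and identify the subdifferential of $h^+(\x') = [h(\x')]_+$ on the open region $\mathcal V = \{\x : h(\x) > 0\}$. Since $h$ is differentiable and the hinge $[z]_+$ is smooth with derivative $1$ on $\{z>0\}$, the chain rule for $[\cdot]_+\circ h$ (the same formula $\partial h_k^+(\bar\x) = \xi_k \partial h_k(\bar\x)$ used in the proof of Theorem~\ref{thm:main}) gives $\partial h^+(\x') = \{\nabla h(\x')\}$ for every $\x' \in \mathcal V$. Consequently, condition~(\ref{eqn:cq}) collapses to the uniform lower bound $\|\nabla h(\x')\| \geq \delta$ on $\mathcal V$, and the whole task becomes to prove this bound with $\delta = \sqrt{2c\mu}$.

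The core step then just chains the two hypotheses. For any $\x' \in \mathcal V$, we have $h(\x') > 0$, and combining this with (ii) yields
\begin{equation*}
h(\x') - \min_{\y} h(\y) \;>\; 0 - (-c) \;=\; c.
\end{equation*}
Plugging this into (i) gives $c \le h(\x') - \min_{\y}h(\y) \le \frac{1}{2\mu}\|\nabla h(\x')\|^2$, i.e.\ $\|\nabla h(\x')\| \geq \sqrt{2c\mu}$. Since $\partial h^+(\x') = \{\nabla h(\x')\}$ on $\mathcal V$, this exactly yields $\mathrm{dist}(0,\partial h^+(\x'))\ge \sqrt{2c\mu}$, so condition~(\ref{eqn:cq}) holds with $\delta=\sqrt{2c\mu}$.

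I do not expect any serious obstacle; the proof is essentially a direct substitution. The only point requiring mild care is the identification of $\partial h^+(\x')$ on $\{h>0\}$, which is handled by the same subdifferential chain rule already invoked in Theorem~\ref{thm:main}. One could also observe that hypothesis (i) is a one-sided Polyak--Łojasiewicz--type inequality restricted to the infeasible region, which is exactly the right regularity to convert a sublevel-gap lower bound ($h(\x')-\min h \geq c$) into a gradient lower bound; the constant $\sqrt{2c\mu}$ is tight in the usual quadratic PL sense.
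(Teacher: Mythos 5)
Your proposal is correct and follows essentially the same route as the paper: both arguments chain hypotheses (i) and (ii) to get $c < h(\x) - \min_{\y}h(\y) \leq \frac{1}{2\mu}\|\nabla h(\x)\|^2$ on the violating region, yielding $\|\nabla h(\x)\|\geq \sqrt{2c\mu}$. The only difference is that you explicitly spell out the identification $\partial h^+(\x) = \{\nabla h(\x)\}$ on $\{h>0\}$, which the paper leaves implicit.
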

{\bf Remark: } Note that the condition (i) in the above lemma  is the PL condition of $h$ but only for points violating the constraint. We can easily come up a function $h$ to satisfy the two conditions in the lemma. Let us consider $h(x)=|x^2-1| - 1$. Then $\nabla h(0)=0$, which means $x=0$ is a stationary point but not global optimum of $h$. The global optimal are $x\in\{1, -1\}$ and $\min_y h(y)=-1$, and hence condition (ii) holds.  Any $x$ such that $h(x)>0$ would satisfy $x>\sqrt{2}$ or $x<- \sqrt{2}$. For these $x$, we have $\nabla h(x)=2x$ for $x>\sqrt{2}$ or $\nabla h(x)=-2x$ for $x<-\sqrt{2}$. Then for $|x|>\sqrt{2}$,  $h(x)- \min_y h(y)=x^2 - 1\leq \frac{|\nabla h(x)|^2}{2\mu}$ holds for $\mu=2$. Hence condition (i) holds.

Next, we present a sufficient condition for~(\ref{eqn:cq}) for multiple constraints. The following lemma shows that the full rank assumption of the Jacobian matrix of $H(\x) = (h_1(\x),\ldots h_m(\x))^{\top}$ at $\x$ that violate constraints is a sufficient condition. %For simplicity  and comparison with prior works, we just consider differentiable $H(\x)$.  %, then all the three assumptions~(\ref{eqn:cq}), (\ref{eqn:cq2}) and (\ref{eqn:cq3}) hold.

\begin{lemma}\label{egivalue}
If $\partial H(\x)$ is full rank for $\max_k h_k(\x)>0$, i.e., $\text{dist}(0, \lambda_{\min}(\partial H(\x)))\geq\sigma>0, \forall \x \text{ such that } \max_k h_k(\x)>0$, where $\partial H(\x)$ denotes the set of sub-Jacobian matrices and $\lambda_{\min}(\cdot)$ denotes the set of their minimum singular values, then the condition (\ref{eqn:cq})  holds for $\delta=\sigma/m$.  %If further  $\lambda_{min}(H(\x))\geq\sigma>0, \forall \x$, then (\ref{eqn:cq2}) hold.
\end{lemma}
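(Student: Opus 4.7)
The plan is a direct computation that pushes the full-rank assumption through the explicit structure of $\partial h_k^+$. Since the excerpt restricts attention to differentiable constraint functions here, I would start by noting that $\partial h_k^+(\x') = \xi_k\,\nabla h_k(\x')$ with $\xi_k\in\partial[h_k(\x')]_+$ taking values $\{1\}$, $[0,1]$, or $\{0\}$ depending on the sign of $h_k(\x')$, via the chain rule quoted from \cite[Cor.~16.72]{bauschke2017correction} already used in the proof of Theorem~\ref{thm:main}.

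Next, fix any $\x'\in\mathcal V$ and any element $g\in \frac{1}{m}\sum_{k=1}^{m}\partial h_k^+(\x')$. By the above, $g = \frac{1}{m}\sum_{k=1}^{m}\xi_k \nabla h_k(\x') = \frac{1}{m}\nabla H(\x')^\top \xi$, where $\xi = (\xi_1,\dots,\xi_m)^\top$ and $\nabla H(\x')\in\mathbb{R}^{m\times d}$ has rows $\nabla h_k(\x')^\top$. Since $\x'\in\mathcal V$, there exists some index $k^\star$ with $h_{k^\star}(\x')>0$, which forces $\xi_{k^\star}=1$ and hence $\|\xi\|\ge 1$.

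Now I would invoke the hypothesis $\lambda_{\min}(\nabla H(\x'))\ge\sigma$, interpreted as a lower bound on the smallest singular value of $\nabla H(\x')$ (the natural reading when $\nabla H$ is non-square and full row rank). This gives $\|\nabla H(\x')^\top \xi\|\ge \sigma\|\xi\|\ge \sigma$, so that
\begin{align*}
\|g\| \;=\; \frac{1}{m}\bigl\|\nabla H(\x')^\top \xi\bigr\| \;\ge\; \frac{\sigma}{m}.
\end{align*}
Since $g$ was an arbitrary element of $\frac{1}{m}\sum_{k=1}^{m}\partial h_k^+(\x')$, taking the infimum over $g$ yields $\text{dist}\bigl(0,\frac{1}{m}\sum_{k=1}^{m}\partial h_k^+(\x')\bigr)\ge \sigma/m$, which is exactly condition~\eqref{eqn:cq} with $\delta=\sigma/m$.

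There is no real obstacle beyond bookkeeping: the only subtleties are (a) fixing the convention that $\lambda_{\min}(\nabla H(\x'))$ means the smallest singular value so that the inequality $\|\nabla H(\x')^\top \xi\|\ge \sigma\|\xi\|$ is meaningful for rectangular Jacobians, and (b) observing that membership in $\mathcal V$ pins at least one coordinate of $\xi$ to $1$, which is what prevents the otherwise trivial choice $\xi=0$ and is essential for obtaining a positive lower bound.
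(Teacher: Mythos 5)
Your proof is correct and follows essentially the same route as the paper's: apply the chain rule $\partial h_k^+(\x')=\xi_k\nabla h_k(\x')$, assemble the sum as $\frac{1}{m}\nabla H(\x')^\top\boldsymbol{\xi}$, observe that $\x'\in\mathcal V$ forces some $\xi_{k^\star}=1$ so $\|\boldsymbol{\xi}\|\ge 1$, and invoke the smallest-singular-value bound. If anything, your write-up is slightly more careful than the paper's, since you correctly quantify over \emph{every} element $g$ of the subdifferential sum before passing to the infimum, whereas the paper's displayed chain lower-bounds the distance by the norm of a single exhibited element, which as written has the inequality in the wrong direction and is only repaired by the observation you make explicit.
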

We refer to the above condition as Full-Rank-at-Violating-Points condition (FRVP). 
%discuss the assumption~(\ref{eqn:cq}) in Theorem~\ref{thm:main} regarding the constraint functions. First, 
% For simplicity of exposition and comparison with prior works, we consider differentiable constraint functions here.
We note that the FRVP  condition for differentiable constraint functions has been used as a sufficient condition in~\cite{li2024modeldevelopmentalsafetysafetycentric} for ensuring
\begin{align}\label{eqn:cq3}
\|\nabla H(\x)^{\top}[H(\x)]_+\|\geq \delta \|[H(\x)]_+\|, \forall \x, 
\end{align}  
which is central to proving the convergence of their squared-hinge penalty method for solving  inequality constrained smooth optimization problems: $\min_\x F(\x), \text{ s.t. }~ h_k(\x)\leq 0, k=1, \ldots, m$.

A stronger condition for equality constrained optimization with $h_k(\x) = 0, k=1, \ldots, m$ was imposed in \cite{alacaoglu2024complexity}, where they assumed that 
%\begin{align}\label{eqn:cq2}
$\|\nabla H(\x)^{\top}H(\x)\|\geq \delta \|H(\x)\|, \forall \x$.
%\end{align} 
%It is notable that conditions (\ref{eqn:cq}) and (\ref{eqn:cq3}) only need to hold at points whose constraints are violated. In contrast,  
To ensure this, their method requires the full-rank Jacobian assumption to hold at all points, whereas our approach only requires it at constraint-violating points. In our experiments, we verified that the condition in Lemma~\ref{egivalue} holds on the trajectory of the algorithm. Detailed results are provided in Table~\ref{tab:min_singular_values} in Appendix~\ref{moreexperimentresult}.

\subsection{Issue with squared-hinge penalty}
Let us discuss why using the squared-hinge penalty function as in~\cite{li2024modeldevelopmentalsafetysafetycentric}. For simplicity of exposition and comparison with prior works, we consider differentiable constraint functions here. Using a squared-hinge penalty function gives the following objective: 
\begin{align}\label{eqn:sqhinge}
\Phi(\x)= F(\x) +  \frac{\beta}{m}\sum\nolimits_{k=1}^m  [h_k(\x)]^2_+.
\end{align}
Note that this objective is also weakly convex with a weak-convexity parameter given by $\rho_0+ \beta \rho_1$.  However, the key issue of using this approach is that $\beta$ needs to be very large to find an (nearly) $\epsilon$-KKT solution. Using the same argument as in the proof of Theorem~\ref{thm:main} for showing $\max_k h_k(\bar \x)\leq 0$, the inequality~(\ref{eqn:beta}) becomes: 
\begin{align*}
    \epsilon\geq \left\| \v+\frac{\beta}{m}\sum\nolimits_{k=1}^{m}[h_k(\bar\x)]_+\nabla h_k(\bar{\x})\right\|\notag\ge \left\|\frac{\beta}{m}\sum_{k=1}^{m}[h_k(\bar\x)]_+\nabla h_k(\bar{\x})\right\|- \|\v\|\ge \frac{\beta \delta}{m}\|[H(\bar\x)]_+\| -L_F,  
\end{align*}
where the last step follows the assumption in~(\ref{eqn:cq3}). Since $[H(\bar\x)]_+$ could be very small, in order to derive a contradiction we need to set $\beta$ to be very large. In~\cite{li2024modeldevelopmentalsafetysafetycentric}, it was set to be  the order of $O(1/\epsilon)$. As a result, $\Phi(\cdot)$ will have large weak-convexity and Lipschitz parameters of the order $O(1/\epsilon)$. This will lead to a complexity of $O(1/\epsilon^{12})$ if we follow existing studies\citep{hu2024non}  to solve the problem in~(\ref{eqn:sqhinge})~
without the smoothness  of $F$ and $h_k$, which will be discussed in Section \ref{sectionalg}.

\subsection{Issue of solving an equivalent equality constrained optimization}\label{app:eco}
%Does converting the inequality constrained problem into an equality constrained problem work? In general this answer is no. %It is not trivial to derive a nearly $\epsilon$-KKT solution to the original inequality constrained problem. To see this, 
One may consider the following equivalent equality constrained problem: 
\begin{align}\label{eqn:eq}
\min_{\mathbf{x}, s_1, \ldots, s_m} F(\mathbf{x}), ~~~\text{s.t. }~~ h_k(\mathbf{x}) + s_k^2 = 0,  k = 1, \ldots, m
\end{align}
where $s_1, \ldots, s_m$ are introduced dummy variables. A solution $\mathbf{x}^{*}$ is a KKT solution to above if there exists $\boldsymbol{\lambda} =(\lambda_1, \ldots, \lambda_m)^{\top}\in \mathbb{R}^m $ such that
$0\in \partial F(\mathbf{x}^*) + \sum_{k=1}^{m}\lambda_k \partial h_k(\mathbf{x}^*),
$ $\lambda_k s_k=0, \forall k $, and $h_k(\mathbf{x}^*)+s_k^2= 0$ for $\forall{k}= 1,\ldots, m$. However, this does not provide a guarantee that $\x_*$ is a KKT condition to the original constrained optimization problem as $\boldsymbol{\lambda}$ is not necessarily non-negative. For instance, consider $\min x\text{ s.t. } -1\leq x\leq 1$. After adding slack variables, we have $\min x\text{ s.t. } x+s_1^2=1, -x+s_2^2=1$. However, $(x,s_1,s_2)=(1,0,\sqrt{2})$ becomes a stationary point with multipliers $\lambda=(-1,0)$, but it is not a stationary point in the original problem.

While several studies \citep{article, ding2023squared} have employed stronger second-order conditions to ensure that a KKT solution of the equality-constrained problem~(\ref{eqn:eq}) can be converted into one for the original inequality-constrained problem, these conditions generally do not hold for the problems we consider. Moreover, when these second-order conditions fail to be satisfied, the approach of solving the equality-constrained problem~(\ref{eqn:eq}) becomes ineffective. Our solutions will not suffer from this issue. 

 Specifically, Proposition 3.6. in \cite{article} shows that if a stationary solution to the equality-constrained problem (formulated using squared slack variables) satisfies the \textbf{second-order sufficient condition (SOSC)}, then it is also a stationary solution to the original inequality-constrained problem (after simply removing the slack variables). Theorem 3.3. in reference \citep{ding2023squared} proves a similar result but under the weaker \textbf{second-order necessary condition (SONC)}. To the best of our knowledge, no other conditions have been established in the literature.

Both SOSC and SONC conditions above are quite strong, as they require all first-order stationary points of the equality-constrained problem \textbf{to be also second-order stationary }points at the same time. Unfortunately, neither SOSC nor SONC is implied by the assumptions made in our paper. This is true even if we further assume our problem is second-order differentiable. Since our focus is on first-order methods, our assumption is related to first-order conditions only. In fact, in general, our problems may not even have  gradients (non-smooth), let alone the Hessian matrix in SOSC and SONC. While the results in \cite{article, ding2023squared} may be useful in practice as potential certificates of stationarity, their applicability is limited in practice. For example, one could first find a stationary point for the equality-constrained problem, and then verify whether SOSC or SONC holds \textbf{at the obtained solution}. If lucky enough and they hold, one can conclude that the solution is stationary for the original inequality-constrained problem. However, if unlucky and these second-order conditions do not hold, we do not know if the solution is stationary for the original problem. In contrast,  our method directly finds a stationary point for the inequality-constrained problem without requiring such verification, hence there is no need to implement the verification {procedure}. 

Another way is to use non-negative slack variables as following: 
\begin{align}
\min_{\mathbf{x}, s_1, \ldots, s_m\geq 0} F(\mathbf{x}), ~~~\text{s.t. }~~ h_k(\mathbf{x}) + s_k = 0,  k = 1, \ldots, m
\end{align}
where $s_1, \ldots, s_m$ are introduced dummy variables.  \citet{li2021rate} presents some discussion how to convert an $\epsilon$-KKT solution to the above equality constrained into an $\epsilon$-KKT solution to the original inequality constrained optimization. However, such guarantee requires that boundness of the constraint functions.

\vspace*{-0.1in}
\section{Stochastic Algorithms}\label{sectionalg}\vspace*{-0.1in}
In this section, we consider stochastic algorithms for solving~(\ref{eqn:hinge}) %under different settings for the objective function $F(\x)$. The first setting is the following: 
%\begin{align}
    %\text{Setting I:}\quad  
    for $F(\x) = \E_{\zeta}[f(\x,\zeta)]$,
%\end{align}
which has been considered in~\citet{ma2020quadratically, huang2023oracle,boob2023stochastic}. %The second setting is:  
%\begin{align}
%  \text{Setting II:}\quad  F(\x) = \frac{1}{n}\sum_{i=1}^n f(\E_{\zeta}[g_i(\x, \zeta)]),
%\end{align}
%which has been considered in~\cite{li2024modeldevelopmentalsafetysafetycentric}. 
Due to limit of space, we present extension and analysis of compositional objective $F(\x) = \frac{1}{n}\sum_{i=1}^n f(\E_{\zeta}[g_i(\x, \zeta)])$ in the Appendix \ref{AnalysissettingII}. Without loss of generality, we assume  the constrained functions are only accessed through stochastic samples.   Below, we let $h(\x, \mathcal B) = \frac{1}{|\mathcal B|}\sum_{\xi\in\mathcal B}h(\x, \xi)$ with a mini-batch of samples $\mathcal B$. We make the following assumptions regarding $h_k$, and $f$.
\begin{assumption}\label{ass:h}
For any $\x$, assume there exists $h_k(\x, \xi)$, and $\partial h_k(\x, \xi)$, where $\xi$  is random variable, such that (a) $\E[h_k(\x, \xi)] = h_k(\x)$ and $\E[\partial h_k(\x, \xi)]\in\partial h_k(\x)$; (b) $\EX_{\xi}[|h_k(\x, \xi)-h_k(\y, \xi)|^2]\leq L_h^2\|\x-\y\|^2$; (c) $\E_{\xi}[|h_k(\x, \xi) - h_k(\x)|^2]\leq \sigma_h^2$.
% \begin{itemize}\small
% \vspace*{-0.05in}    \item  $\EX_{\xi}[|h_k(\x, \xi)-h_k(\y, \xi)|^2]\leq L_h^2\|\x-\y\|^2$ for $\forall\y$; 
% \vspace*{-0.05in}    \item $\E[h_k(\x, \xi)] = h_k(\x)$, and $\E[\partial h_k(\x, \xi)]\in\partial h_k(\x)$; 
%  \vspace*{-0.05in}   \item $\E[|h_k(\x, \xi) - h_k(\x)|^2]\leq \sigma_h^2$ .
%  % and $\E[\|\partial h_k(\x, \xi) - \E[\partial h_k(\x,\xi)]\|^2]\leq \sigma_h^2$. 
% \end{itemize} 
\end{assumption}
\begin{assumption}\label{VarianceboundF}
    For any $\mathbf{x}$, assume there  exists $\partial f(\x,\zeta)$, where $\zeta$ is random variable such that $\E[\partial f(\x, \zeta)] \in \partial F(\x)$ and $\EX_{\zeta}[\|\partial F(\x)-\partial f(\x,\zeta)\|^2]\leq \sigma_f^2$.
\end{assumption}
%\vspace*{-0.1in}Below, we let $h(\x, \mathcal B) = \frac{1}{|\mathcal B|}\sum_{\xi\in\mathcal B}h(\x, \xi)$ with a mini-batch of samples $\mathcal B$. 
%\vspace*{-0.1in}
%\subsection{Setting I}\vspace*{-0.05in}
\begin{algorithm}[t]
  \caption{\it Algorithm for solving \eqref{eqn:hinge} under Setting I} \label{alg:1}
  \begin{algorithmic}[1]
    \STATE {\bf Initialization:} choose $\mathbf{x}_0, \beta, \gamma_2$ and  $\eta$. 
    \FOR {$t=0$ to $T-1$}
      \STATE {Sample  $\mathcal{B}^t_c\subset\{1,\ldots, m\}$}  
      % \FOR{ each $i\in \mathcal{B}^t$  } 
      % \STATE  {Update $u_{1,i}^{t}$ by Eqn.\eqref{varianced-reduce_u}}
      % \ENDFOR\\
%       \STATE{Compute $G_1^t$ in Eqn~\eqref{fG_1^t}}
       \FOR{ each $k\in \mathcal{B}^t_c$}
       \STATE{Sample a mini-batch  $\mathcal{B}_{2,k}^{t}$ }
       \STATE{Update  $u_{2,k}^{t+1}=(1-\gamma_2)u_{2, k}^{t}+\gamma_2{h_{k}}(\mathbf{x}_{t};\mathcal{B}_{2,k}^{t})+\gamma_2^{\prime}({h_{k}}(\mathbf{x}_{t};\mathcal{B}_{2,k}^{t})-{h_{k}}(\mathbf{x}_{t-1};\mathcal{B}_{2,k}^{t}))$} %by Eqn~\eqref{fvarianced-reduce_u_k}}
       \ENDFOR
       \STATE Let  $u_{2,k}^{t+1}=u_{2,k}^t, k\notin\mathcal B^t_c$
       \STATE{Compute $G_2^t=\frac{\beta}{|\mathcal{B}^t_c|}\sum_{k\in \mathcal{B}^t_c}\partial h_k(\mathbf{x}_{t};\mathcal{B}_{2,k}^{t}) [u_{2,k}^{t}]'_+$,  $G_1^{t} = \frac{1}{|\mathcal {B}^t|}\sum_{\zeta\in\mathcal {B}^t}\partial f( \mathbf{x}_{t}; \zeta)$}. % as in Eqn \eqref{fG2_t}}
       % \STATE{Update gradient estimator $\mathbf{v}^{t+1} = (1 - \theta) \mathbf{v}^{t} + \theta (G_1^t + G_2^t)$}
       \STATE{Update $\x_{t+1} = \x_{t}-\eta (G_1^t + G_2^t)$}
    \ENDFOR
  \end{algorithmic}
\end{algorithm}

%In this setting, 

The key to our algorithm design is to notice that the hinge-based exact penalty function $\frac{\beta}{m}\sum_{k=1}^m[h_k(\x)]_+$ is a special case of non-smooth weakly-convex finite-sum coupled compositional objective as considered in~\citet{hu2024non}, where the inner functions are $h_k(\x)$ and the outer function is the hinge function that is monotone and convex. Hence, we can directly utilize their technique for computing a stochastic gradient estimator of the penalty function. 

The subgradient of the penalty function is given by  
$\frac{1}{m}\sum_{k=1}^{m}\beta \partial[h_k(\mathbf{x})]_+\partial h_k(\mathbf{x})$. However, computing $h_k(\x)$ and $\partial h_k(\x)$ is prohibited due to its stochastic nature or depending on many samples. To this end, we need an estimator for $h_k(\x)$ and $\partial h_k(\x)$.  $\partial h_k(\x)$ can be simply estimated by its mini-batch estimator. The challenge is to estimate $h_k(\x)$ using mini-batch samples. The naive approach that estimates $h_k(\x)$  by its mini-batch estimator does not yield a convergence guarantee for compositional optimization.  To address this issue, we use the variance reduction technique MSVR for estimating constraint functions $h_k(\x), \forall k$~\citep{jiang2022multi}, which maintains and updates an estimator $u_{2,k}$ for each constraint function $h_k$.   We present the key steps in Algorithm~\ref{alg:1}.  In particular, at the $t$-th iteration, we construct a random mini-batch $\mathcal B^t_c\subset\{1,\ldots, m\}$, then we update the estimators of $h_k(\x_t), k\in\mathcal B^t_c$ by Step 6 %the following: 
%\begin{equation}
%\begin{aligned}\label{fvarianced-reduce_u_k}
%    u_{2,k}^{t+1}=(1-\gamma_2)u_{2, k}^{t}+\gamma_2{h_{k}}(\mathbf{x}_{t};\mathcal{B}_{2,k}^{t})+\gamma_2^{\prime}({h_{k}}(\mathbf{x}_{t};\mathcal{B}_{2,k}^{t})-{h_{k}}(\mathbf{x}_{t-1};\mathcal{B}_{2,k}^{t})),
%\end{aligned}   
%\end{equation}
where $\gamma_2\in(0,1), \gamma_2^{\prime} = \frac{m-|\mathcal{B}_c|}{|\mathcal{B}_c|(1-\gamma_2)}+1-\gamma_2$, and $\mathcal{B}^t_{2,k}$ is a mini-batch of samples for estimating $h_k(\x_t)$.  %We randomly sample a mini batch  $\mathcal{B}_{2,k}^{t+1}$  to construct the stochastic estimation ${h_{k}}(\mathbf{w}^{t+1};\mathcal{B}_{2,k}^{t+1})$. 
Then the gradient of the penalty function and the objective function at $\mathbf{x}_{t}$ can be estimated by $G_2^t, G_1^t$ in Step 10, where $ [u_{2,k}^{t}]'_+$ is the derivative of the hinge function at $u_{2,k}^t$.
%\begin{align}
%    G_2^t
%    =\frac{\beta}{|\mathcal{B}^t_c|}\sum_{k\in \mathcal{B}^t_c}\partial h_k(\mathbf{x}_{t};\mathcal{B}_{2,k}^{t}) [u_{2,k}^{t}]'_+,\label{fG2_t}
%\end{align}
%where $ [u_{2,k}^{t}]'_+$ is the derivative of the hinge function at $u_{2,k}^t$.   The subgradient of $F(\mathbf{x}_t)$ can be simply computed by 
% \begin{align}\label{fG_1^t}
%    G_1^{t} = \frac{1}{|\mathcal {B}^t|}\sum_{\zeta\in\mathcal {B}^t}\partial f( \mathbf{x}_{t}; \zeta),
%\end{align} 
%where $\mathcal B^t$ is a mini-batch of samples. 
Then, we can update $\x_{t+1}$ by using SGD with $G_1^t +G_2^t$ as the stochastic gradient estimator. 

Since the estimator $u_{2,k}$ produced by MSVR is used to approximate the true constraint value during optimization, it is useful to clarify how it behaves over iterations. Our analysis of Lemma \ref{Gamma} shows that as long as $\sqrt{\gamma_2}\!\to\!0$, $\eta/\sqrt{\gamma_2}\!\to\!0$, and $t\!\to\!\infty$, the error $\lvert u_{2,k}^t - h_k(\x_t)\rvert$ converges to zero.  This guarantees that the solution converges to a KKT point asymptotically.  For the non-asymptotic case with $\epsilon>0$, although $u_{2,k}^t$ may fluctuate around zero near the feasibility boundary, such oscillations can be effectively controlled by choosing sufficiently small $\eta$ and $\gamma_2$. This ensures that the hinge-based constraint $h_k(\x_t)\le \epsilon$ is reliably enforced. In our experiments, we did not observe any instability in constraint satisfaction.

The convergence of the algorithm for finding an $\epsilon$-stationary solution $\Phi_\theta(x)$ is given below. The proof of the theorem is presented in Appendix \ref{Proof of setting1}.

%In the following, we use an efficient stochastic algorithm to solve \eqref{weakyfh} which is given in Algorithm \ref{alg:1}. Note that the second term of \eqref{weakyfh} has finite-sum  coupled compositional structure\cite{wang2022finite} and the hinge function is a non-smooth weakly convex. We follow the analysis of \cite{hu2024non} to solve the problem \eqref{weakyfh}.

%We discuss how to approximate the gradient of two terms of \eqref{weakyfh} using mini-batch samples below. Let $\mathbf{w}^t$ be the solution at iteration $t$. 
\begin{theorem}\label{nonsmoothweakl}
Suppose  Assumptions \ref{Lipschitz}  \ref{ass:h} and \ref{VarianceboundF}  hold. Let %$\bar{\mathbf{w}}$ be defined  as in $\Bar{\mathbf{w}}:=\text{prox}_{\Phi/\Bar{\rho}}(\mathbf{w})$  and  $\mathbf{w}^{\hat{t}}$ is generated by Algorithm \ref{alg:1}.  Also, in Algorithm \ref{alg:1}, by setting 
    $\gamma_2 = \mathcal{O}( \frac{|\mathcal{B}_{2, k}| \epsilon^4}{\beta^4})$, $\eta = \mathcal{O}(\frac{ |\mathcal{B}_c| |\mathcal{B}_{2, k}|^{1/2}\epsilon^4}{ \beta^5 m})$, after  $T=\mathcal{O}(\frac{ \beta^6 m}{ |\mathcal{B}_c|  |\mathcal{B}_{2, k}|^{1/2}\epsilon^6})$ iterations, the Algorithm \ref{alg:1} satisfies  $\EX[\|\nabla \Phi_{\theta}(\x_{\hat t})\|]\leq \epsilon$ from some $\theta= O(1/(\rho_0 + \rho_1\beta))$,   
%\begin{align*}
 %   &\EX\big[\text{dist}(0, \partial F(\bar{\mathbf{w}}^{\hat{t}})+\sum_{k=1}^{m}\lambda_k\partial h_k(\bar{\mathbf{w}}^{\hat{t}})\big]\leq \epsilon, \\
  %  &\EX[h_k(\mathbf{w}^{\hat{t}})]\leq L_h \epsilon,\quad \EX[ |\sum_{k=1}^m\lambda_k [h_k(\mathbf{w}^{\hat{t}})]_+|]\leq \epsilon
%\end{align*}
where $\hat{t}$ selected uniformly at random from $\{1,\ldots, T\}$. 
\end{theorem}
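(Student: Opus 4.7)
The plan is to combine the Moreau-envelope descent framework for stochastic methods on weakly convex objectives (cf.\ \cite{davis2019stochastic}) with the MSVR tracking-error analysis from the FCCO literature~\cite{jiang2022multi,hu2024non}. By Lemma~\ref{lemma:phi_weakly} and the Moreau-envelope lemma stated above, I would fix $\theta = 1/(2C)$ with $C = \rho_0 + \beta\rho_1$, so that $\Phi_\theta$ is smooth and $\nabla \Phi_\theta(\x_t) = (\x_t - \hat\x_t)/\theta$ for $\hat\x_t := \text{prox}_{\theta\Phi}(\x_t)$. Writing the penalty as $H(\x) := \frac{\beta}{m}\sum_{k=1}^m [h_k(\x)]_+$, the potential $\Phi_\theta(\x_t)$ is what I would track across iterations.

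For the one-step analysis, apply the defining inequality of the Moreau envelope,
\begin{align*}
\Phi_\theta(\x_{t+1}) \le \Phi(\hat\x_t) + \tfrac{1}{2\theta}\|\x_{t+1} - \hat\x_t\|^2,
\end{align*}
and expand $\x_{t+1} = \x_t - \eta(G_1^t + G_2^t)$. Taking conditional expectation, the cross term $-\tfrac{\eta}{\theta}\langle \E[G_1^t + G_2^t], \x_t - \hat\x_t\rangle$ splits into an ideal-subgradient piece and a bias piece. The ideal piece is absorbed using weak convexity of $F$ and $H$ at the pair $(\x_t, \hat\x_t)$, producing a $-\Theta(\eta)\|\nabla\Phi_\theta(\x_t)\|^2$ term after rearrangement. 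The bias of $G_2^t$ arises because the estimator uses $[u_{2,k}^t]'_+$ in place of $[h_k(\x_t)]'_+$; since $[\cdot]_+$ is convex, monotone non-decreasing, and $1$-Lipschitz, the bias is controlled in expectation by $O(\beta L_h)\sqrt{\Delta_t}$, where $\Delta_t := \frac{1}{m}\sum_k \E|u_{2,k}^t - h_k(\x_t)|^2$. Young's inequality then recasts this contribution as $O(\eta)\Delta_t + O(\eta)\|\nabla \Phi_\theta(\x_t)\|^2$ with a small enough constant to be absorbed.

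The main obstacle is controlling $\Delta_t$. The MSVR update~(\ref{fvarianced-reduce_u_k}) is engineered so that, with the prescribed $\gamma_2' = \frac{m-|\mathcal{B}_c|}{|\mathcal{B}_c|(1-\gamma_2)} + 1 - \gamma_2$, the unsampled coordinates contribute no bias in expectation, and $\Delta_t$ admits, following~\cite{jiang2022multi}, a recursion of the form
\begin{align*}
\Delta_{t+1} \le \Bigl(1 - \tfrac{\gamma_2 |\mathcal{B}_c|}{2m}\Bigr)\Delta_t + O\!\left(\tfrac{\gamma_2^2 \sigma_h^2}{|\mathcal{B}_{2,k}|}\right) + O\!\left(\tfrac{m \eta^2 L_h^2}{\gamma_2 |\mathcal{B}_c|}\right)\E\|G_1^t + G_2^t\|^2.
\end{align*}
Summing and telescoping bounds $\sum_{t=0}^{T-1}\Delta_t$ by a term proportional to $T\sigma_h^2 \gamma_2 m/(|\mathcal{B}_{2,k}||\mathcal{B}_c|)$ plus $\sum_t (\eta/\gamma_2)^2 \E\|G_1^t + G_2^t\|^2$, and the second-moment bound $\E\|G_1^t + G_2^t\|^2 \le O(L_F^2 + \beta^2 L_h^2 + \sigma_f^2 + \beta^2 \sigma_h^2)$ comes from Assumptions~\ref{ass:h} and~\ref{VarianceboundF}.

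Finally, I would telescope the one-step Moreau inequality, use $\Phi_\theta(\x_0) - \inf \Phi_\theta \le O(L^2/C)$ from the $L$-Lipschitzness of $\Phi$ in Lemma~\ref{lemma:phi_weakly}, and combine with the bound on $\sum_t \Delta_t$ to obtain
\begin{align*}
\frac{1}{T}\sum_{t=0}^{T-1}\E\|\nabla \Phi_\theta(\x_t)\|^2 \lesssim \frac{L^2}{\eta T} + \eta (L_F^2 + \beta^2 L_h^2) + \frac{\gamma_2 m \beta^2 \sigma_h^2}{|\mathcal{B}_{2,k}||\mathcal{B}_c|} + \frac{\eta^2 m \beta^2 L_h^2 L^2}{\gamma_2^2 |\mathcal{B}_c|}.
\end{align*}
Substituting $L = O(\beta)$ and balancing with $\gamma_2 = \Theta(|\mathcal{B}_{2,k}|\epsilon^4/\beta^4)$, $\eta = \Theta(|\mathcal{B}_c||\mathcal{B}_{2,k}|^{1/2}\epsilon^4/(\beta^5 m))$, and $T = \Theta(\beta^6 m/(|\mathcal{B}_c||\mathcal{B}_{2,k}|^{1/2}\epsilon^6))$ makes the right-hand side $O(\epsilon^2)$; sampling $\hat t$ uniformly and applying Jensen's inequality then gives $\E\|\nabla\Phi_\theta(\x_{\hat t})\| \le \epsilon$, as claimed.
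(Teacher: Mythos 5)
Your overall architecture (Moreau-envelope descent plus MSVR tracking of the $h_k$ values) matches the paper's, but there are two concrete gaps. First, the step where you claim the bias of $G_2^t$ is ``controlled in expectation by $O(\beta L_h)\sqrt{\Delta_t}$'' is not justified and is in fact false as a bound on $\E_t[G_2^t]-\partial H(\x_t)$: that difference involves $[u_{2,k}^t]'_+-[h_k(\x_t)]'_+$, and the hinge derivative is discontinuous at zero, so an arbitrarily small tracking error $|u_{2,k}^t-h_k(\x_t)|$ can still produce a gradient-estimator discrepancy of order $\beta L_h/m$ per constraint (take $u_{2,k}^t=-\delta$, $h_k(\x_t)=+\delta$). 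The paper never compares the two subgradients. Instead it bounds the inner product $\langle\bar\x_t-\x_t,\E_t[G_2^t]\rangle$ directly by anchoring the convexity of $[\cdot]_+$ at the estimator, $[h_k(\bar\x_t)]_+-[u_{2,k}^t]_+\ge[u_{2,k}^t]'_+(h_k(\bar\x_t)-u_{2,k}^t)$, then invoking weak convexity of $h_k$ and $0\le[u_{2,k}^t]'_+\le1$; the residual is $[h_k(\x_t)]_+-[u_{2,k}^t]_+-[u_{2,k}^t]'_+(h_k(\x_t)-u_{2,k}^t)\le 2|h_k(\x_t)-u_{2,k}^t|$, i.e.\ a \emph{first-moment} tracking error with no Young step against $\|\nabla\Phi_\theta(\x_t)\|^2$. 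This is the essential non-smooth-FCCO mechanism (from \cite{hu2024non}) that makes the hinge penalty tractable, and your proposal does not supply a substitute for it.

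Second, even granting your recursion for $\Delta_t$, the final balancing fails: your drift term $\frac{\eta^2 m\beta^2L_h^2L^2}{\gamma_2^2|\mathcal B_c|}$ evaluates, with $L=O(\beta)$, $\eta=\Theta(|\mathcal B_c||\mathcal B_{2,k}|^{1/2}\epsilon^4/(\beta^5m))$ and $\gamma_2=\Theta(|\mathcal B_{2,k}|\epsilon^4/\beta^4)$, to
\begin{align*}
\frac{\eta^2}{\gamma_2^2}\cdot\frac{m\beta^4}{|\mathcal B_c|}=\frac{|\mathcal B_c|^2}{\beta^2m^2|\mathcal B_{2,k}|}\cdot\frac{m\beta^4}{|\mathcal B_c|}=\frac{|\mathcal B_c|\beta^2}{m|\mathcal B_{2,k}|},
\end{align*}
which is independent of $\epsilon$ and hence not $O(\epsilon^2)$ for constant batch sizes. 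The discrepancy traces back both to the extra $1/\gamma_2$ you place in the drift coefficient of the $\Delta_t$ recursion (the paper's Lemma~\ref{Gamma} has coefficient $8mL_h^2/|\mathcal B_c|$, not $m\eta^2L_h^2/(\gamma_2|\mathcal B_c|)$) and to working with the squared error $\Delta_t$ rather than the first moment: the paper unrolls Lemma~\ref{Gamma} into the first-moment bound \eqref{ex1h}, whose drift contribution $\frac{m\bar\rho\beta\eta M}{|\mathcal B_c|\gamma_2^{1/2}}$ does balance to $\Theta(\epsilon^2)$ under the stated parameters. To repair your argument you would need to switch to the paper's convexity-anchored decomposition and carry the error through as $\E|u_{2,k}^t-h_k(\x_t)|$ rather than as $\Delta_t$. (A minor omission: the initialization error of the $u_{2,k}$ estimators contributes an additional $O\!\left(\frac{m\beta\bar\rho}{T\gamma_2|\mathcal B_c|}\right)$ term that must also be checked against the choice of $T$.)
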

{\bf Remark}: While we state the convergence results for a given $\epsilon$, it is also possible to state the convergence for a given $T$. In this case, we can let $\eta, \gamma_2$ depend on $T$, which can yield the same complexity result. In practice, these hyper-parameters are always tuned to achieve the best performance. For example, in the first experiment we choose the initial learning rate from $\{10^{-3}, 10^{-4}\}$ and apply a standard decay schedule; in the second experiment we use the same magnitude choices as in \cite{li2024modeldevelopmentalsafetysafetycentric}. These settings are fully reported in the experimental section, and our results show that they work well in practice.

 Combining the results from Theorem \ref{nonsmoothweakl}  and \ref{thm:main}, we demonstrate  Algorithm \ref{alg:1} can find a nearly $\epsilon$-KKT solution to the original problem~(\ref{nonlinearconstrained}) in the following Corollary. The proof is provided at the end of Appendix~\ref{Proof of setting1}. 
\begin{corollary}\label{Corollary_complexity}
    Suppose  Assumptions \ref{Lipschitz}, \ref{ass:h}, \ref{VarianceboundF} and \eqref{eqn:cq} hold. Let  $\gamma_2 = \mathcal{O}( \frac{|\mathcal{B}_{2, k}| \epsilon^4}{\beta^4})$, $\eta = \mathcal{O}(\frac{ |\mathcal{B}_c| |\mathcal{B}_{2, k}|^{1/2}\epsilon^4}{ \beta^5 m})$ and a constant $\beta> \frac{\epsilon+L_F}{\delta}$, after  $T=\mathcal{O}(\frac{ \beta^6 m}{ |\mathcal{B}_c|  |\mathcal{B}_{2, k}|^{1/2}\epsilon^6})$ iterations,  the Algorithm \ref{alg:1} can find  a nearly $\epsilon$-KKT solution $\mathbf{x}_{\hat t}$ where condition (i) in Definition \ref{def:neKKT} holds in expectation, conditions (ii) and (iii) in Definition \ref{def:neKKT} hold with probability $1-\mathcal{O}(\epsilon)$ for the original problem (\ref{nonlinearconstrained}). And if $|h_k(\bar{\mathbf{x}}_{\hat t})| < +\infty$, the conditions (ii) and (iii) hold in expectation. $\hat{t}$ is selected uniformly at random from $\{1,\ldots, T\}$. 
\end{corollary}

{\bf Remark:} Corollary \ref{Corollary_complexity} shows that Algorithm \ref{alg:1} can find a nearly $\epsilon$-KKT solution to the original problem~(\ref{nonlinearconstrained}) with $T=O(\frac{1}{\epsilon^6})$ iterations, which matches the state-of-the-art complexity of double-loop algorithms~\citep{ma2020quadratically, boob2023stochastic}. It is better than the complexity of $O(1/\epsilon^8)$ of the single-loop algorithm proposed in~\citet{huang2023oracle}. 
It is noticed that \citet{liu2025singleloopspidertypestochasticsubgradient} also achieve the same complexity of $O(\frac{1}{\epsilon^6})$. While both works achieve comparable complexity guarantees, our contribution lies in a different algorithmic design and problem setting: We directly operate on the hinge penalty instead of a smoothed surrogate. Specifically, our MSVR estimator is  designed for estimating many sequences while only accessing $O(1)$ sequence for update. This makes it greatly suitable for us to handle many constraint functions. In contrast, the SPIDER estimator requires accessing all sequences at the same iteration, which is not efficient for handling many constraint functions.  
% As discussed in  related works, the SPIDER estimator periodically draws a large batch of samples for estimating the constraint function (see Eq. (3.1) in \citet{liu2025singleloopspidertypestochasticsubgradient}, which increases the per-iteration cost. In contrast, MSVR does not require any large batch sizes.  This design substantially reduces the computational overhead and accelerates the training process, as also demonstrated in \citet{jiang2022multi}.

% \subsection{Setting II}\label{settingII}
%\begin{assumption}\label{Lipschitz}
%     (a) $f_i$ is $L_f$-Lipschitz continuous,   $g_i(\cdot)$  are $L_g$-Lipschitz continuous for $i=1, \dots, n.$ (c)\textcolor{blue}{need?} There exist $C_g>0$ and $c_g>0$ such that $c_g\leq g(\cdot)\leq C_g$. (c)$h_k(\cdot)$ is $L_h$-Lipschitz continuous for $k=1, \dots, m.$
%\end{assumption}

\section{Applications in Machine Learning}
\vspace*{-0.1in}\subsection{AUC Maximization with ROC Fairness Constraints}\vspace*{-0.1in}
\begin{figure*}[tb!]
\centering
    \includegraphics[width=0.22\linewidth]{./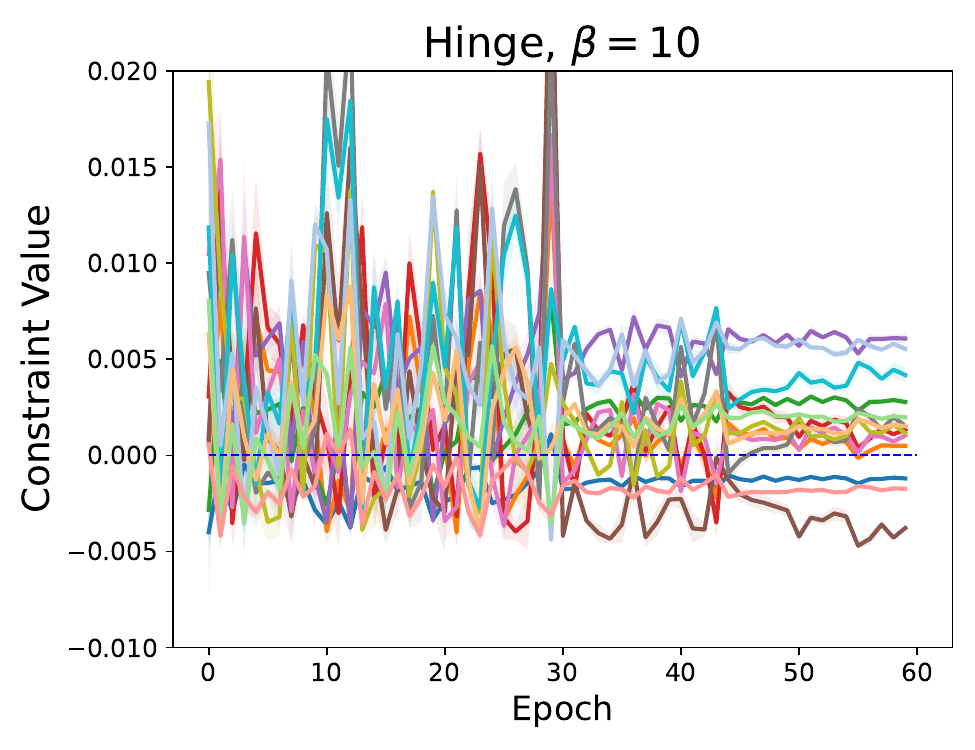}
    \includegraphics[width=0.22\linewidth]{./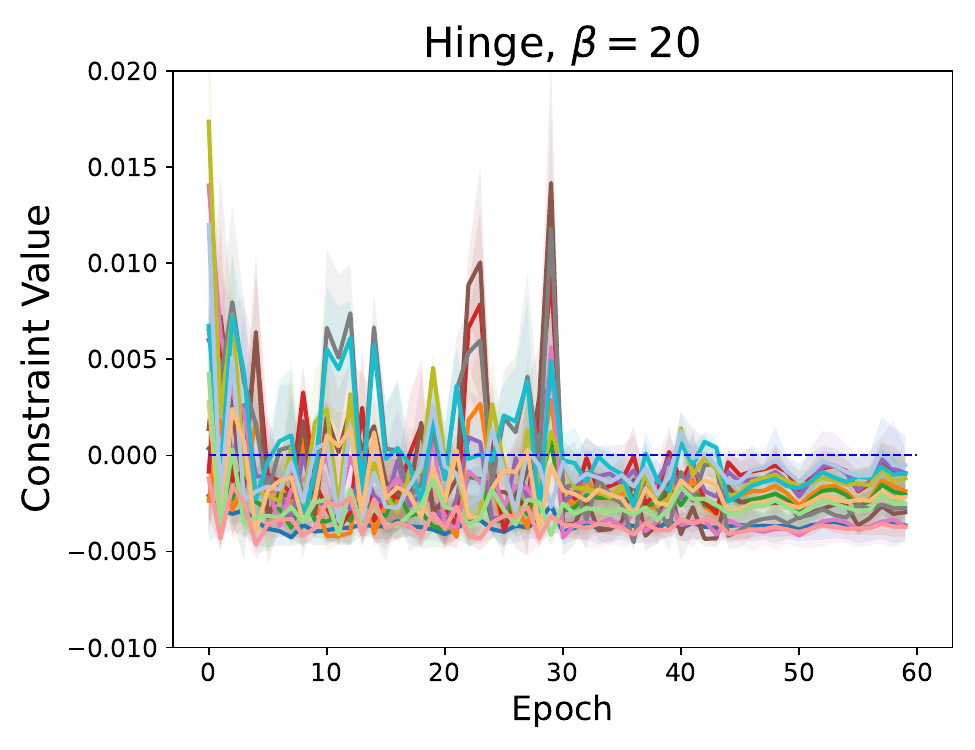}
    \includegraphics[width=0.22\linewidth]{./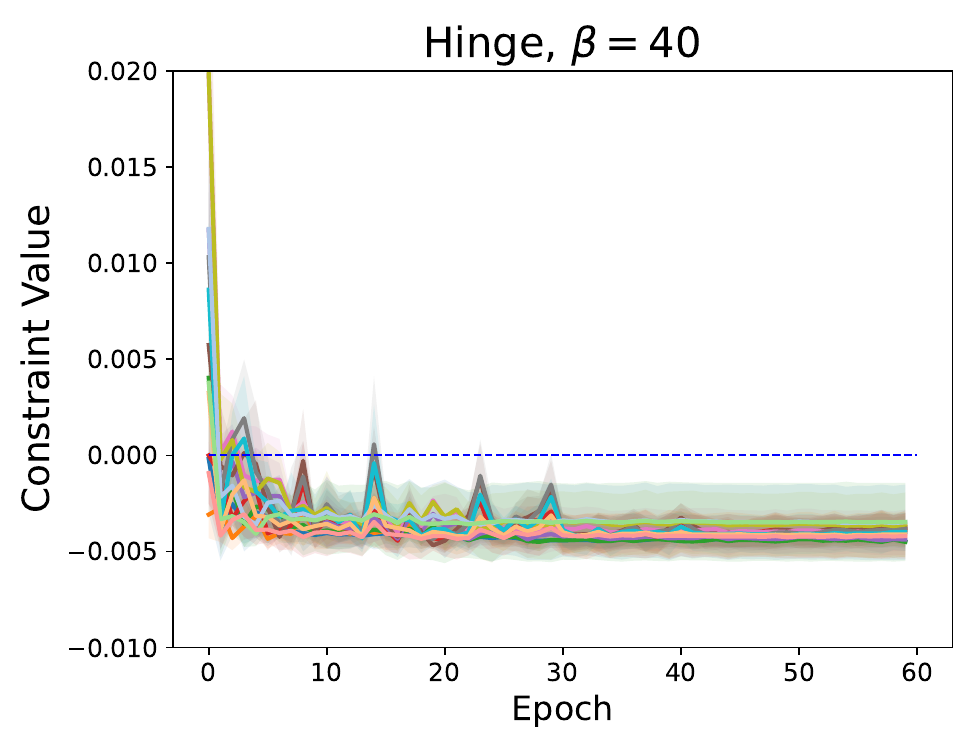}
    \includegraphics[width=0.30\linewidth]{./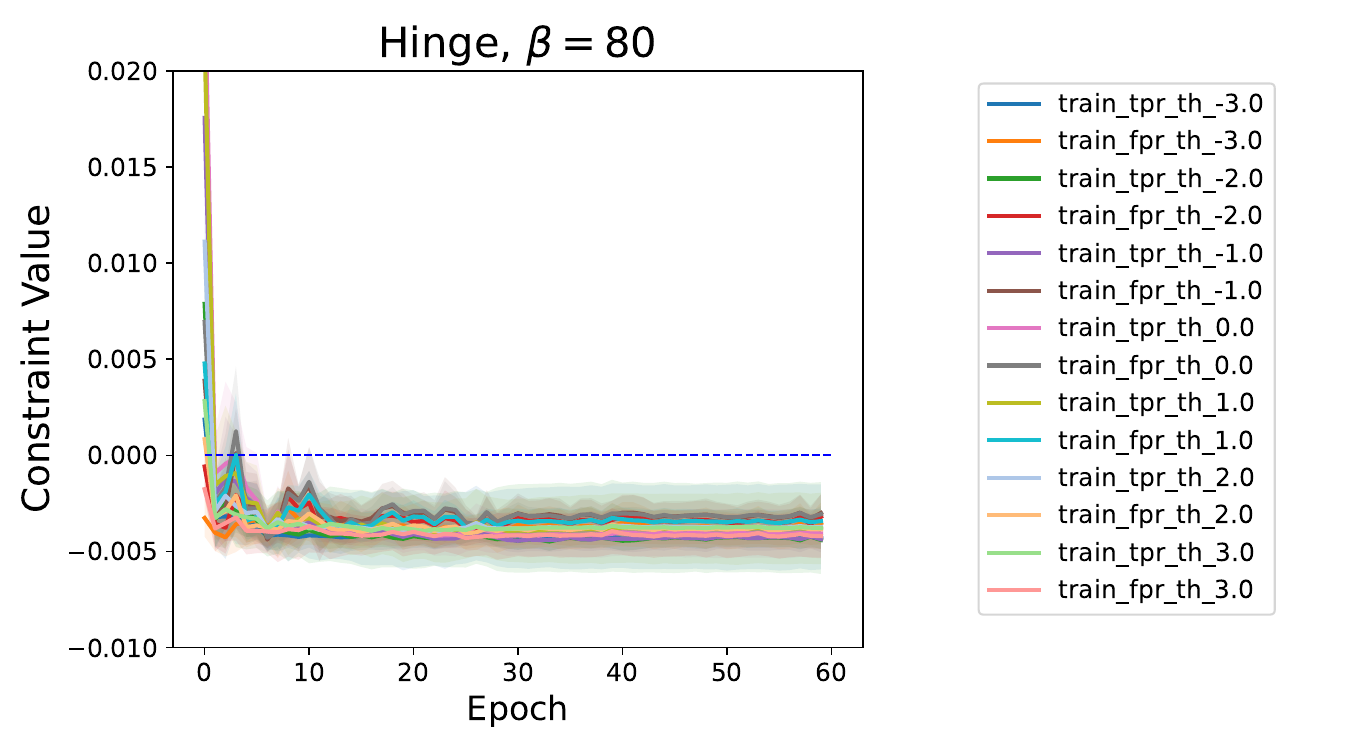} \\
    \includegraphics[width=0.22\linewidth]{./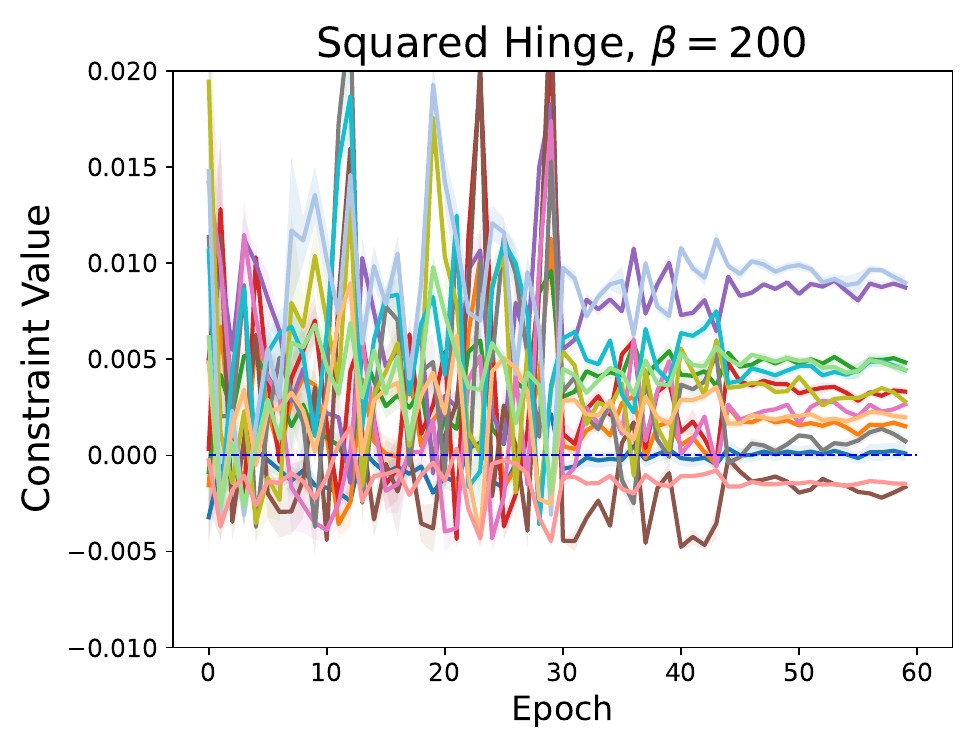}
    \includegraphics[width=0.22\linewidth]{./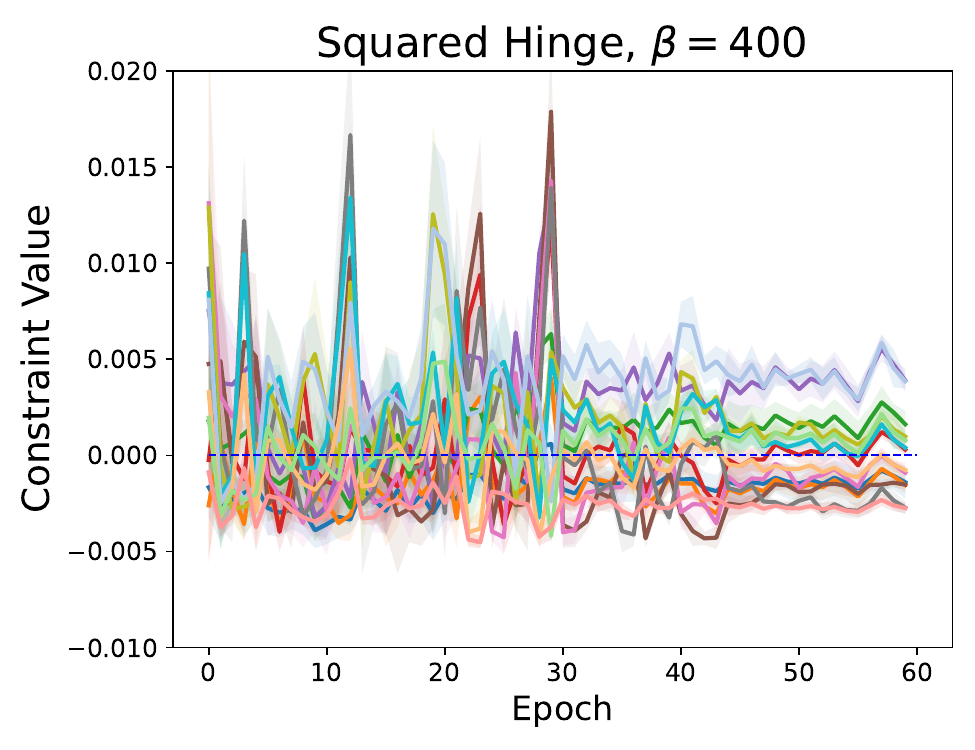}
    \includegraphics[width=0.22\linewidth]{./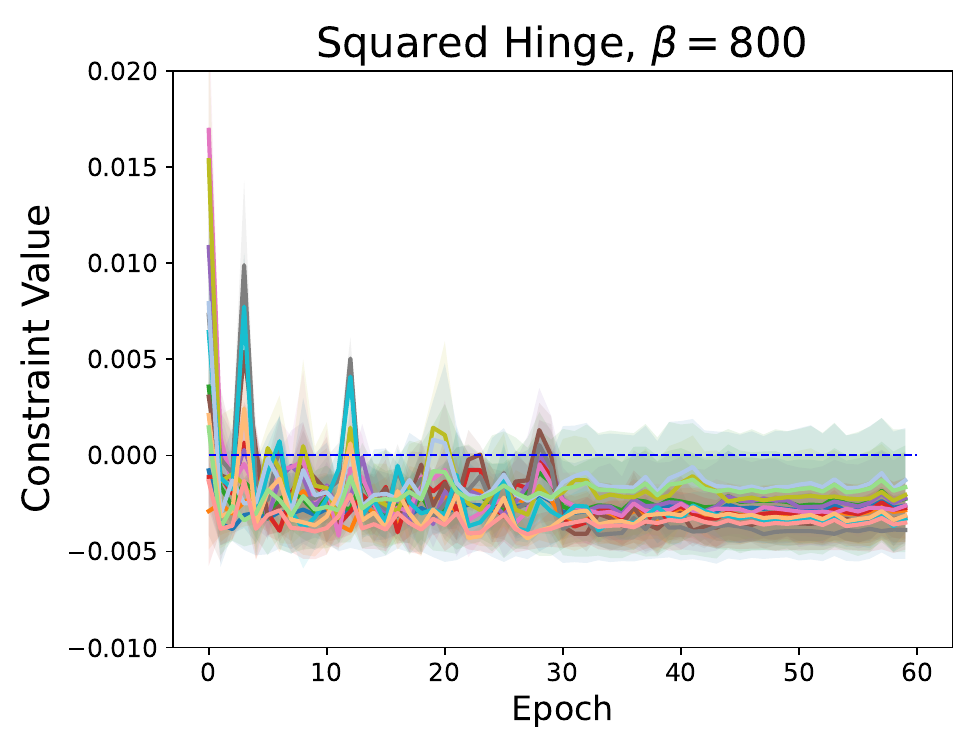}
    \includegraphics[width=0.30\linewidth]{./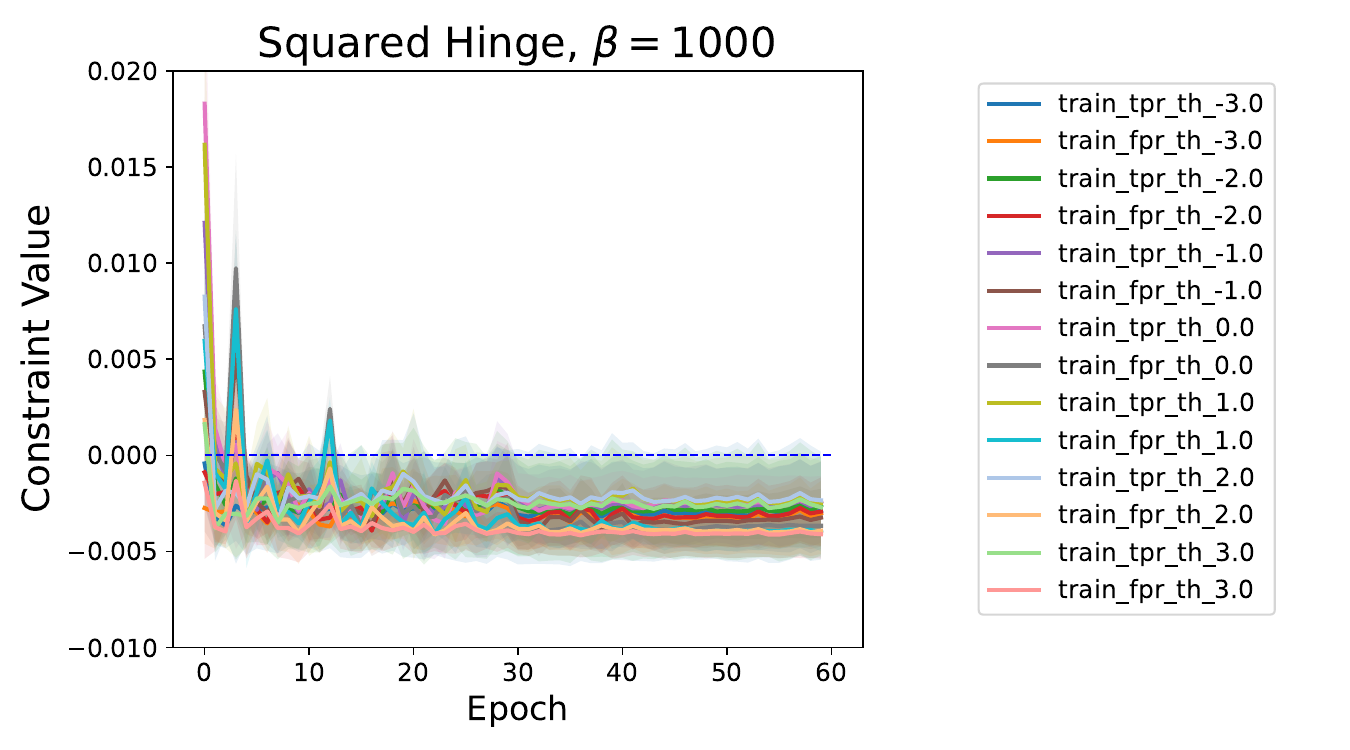} \\
\centering
\caption{Training curves of 15 constraint functions of different methods for fair learning on the Adult dataset. Top: hinge-based  penalty method with different $\beta$; Bottom: squared-hinge-based penalty method with different $\beta$. Each curve (e.g., legend $\texttt{train\_tpr\_th\_-3.0}$) denotes a constraint function $h_{\tau^{+}}(\mathbf{w})$ (resp. with $\tau=3$); similarly, $\texttt{train\_fpr\_th\_-3.0}$ represents the constraint function $h_{\tau^{-}}(\mathbf{w})$ with $\tau=3$.} 
\label{fig:adult}
% \vspace{-0.1in}
\end{figure*}

\begin{figure*}[tb!]
\centering
    \includegraphics[width=0.32\linewidth]{./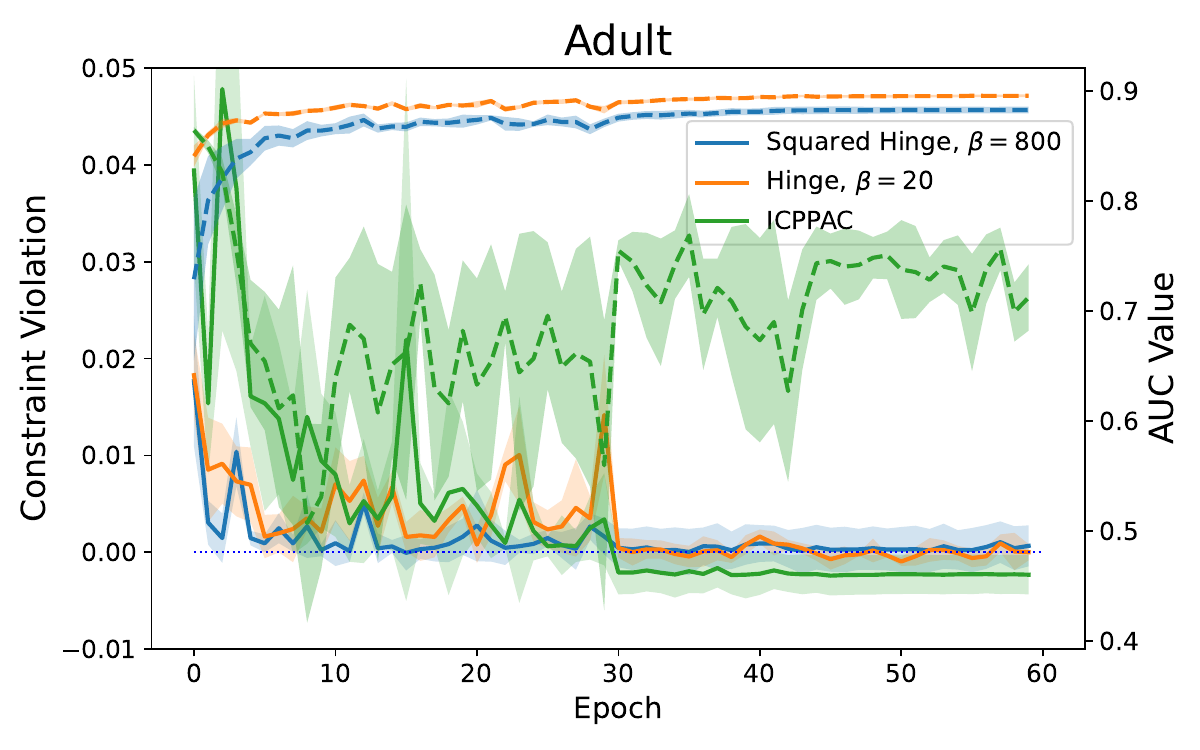}
    \includegraphics[width=0.32\linewidth]{./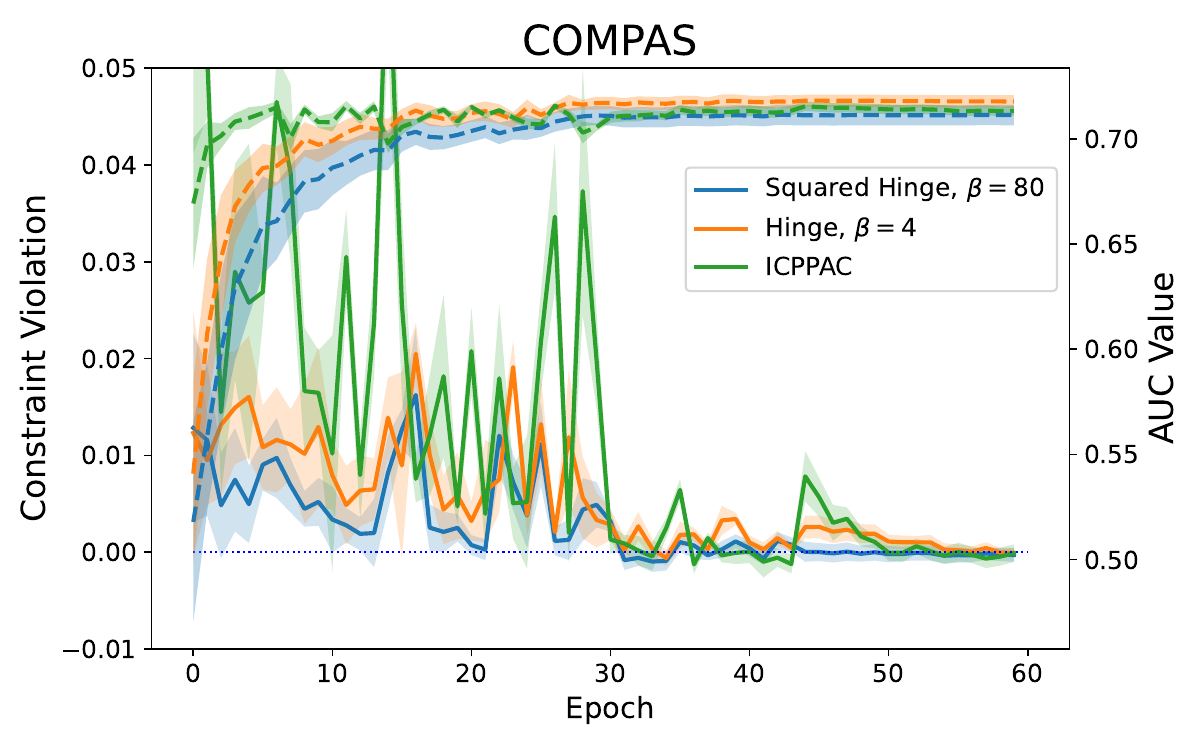}
    \includegraphics[width=0.32\linewidth]{./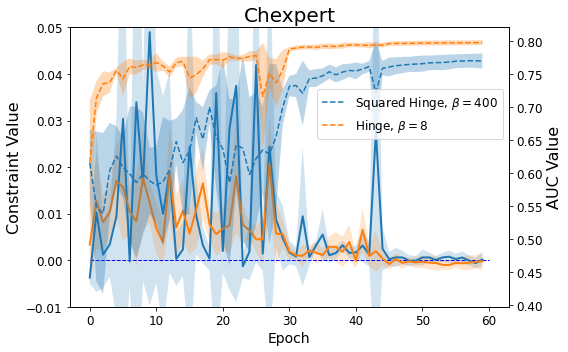}
\centering
\caption{Training curves of objective and constraint violation of different methods under a parameter setting when they satisfy the constraints in the end. Dashed lines correspond to objective AUC values (y-axis on the right) and solid lines correspond to constraint violations (y-axis on the left). } 
\label{fig:vsbaseline}
\end{figure*}
%Let us consider learning a fair binary classifier. Given a feature vector $\ba\in\mathbb{R}^d$ and a class label $b\in\{1,-1\}$, the goal in the binary  classification problem is to learn a model $h_\bw(\cdot)$ to predict $b$ based on the score $h_\bw(\ba)$. %The model $\bw$ can be learned by solving
%\begin{equation}
%\label{eq:ch3_ermL_linear}
%\min_{\bw}\Big\{L_{\text{hinge}}(\bw):=\frac{1}{n}\sum_{i=1}^n \ell(b_i\cdot h_\bw(\ba_i))\Big\}
%\end{equation}
%where $\ell(z)$ is a loss function. 
%However, traditional methods that optimize a binary classification loss may yield a model that gives unfair predictions across different demographic groups (e.g., gender, race, etc). 
%\vspace*{-0.1in}
We consider learning a model with ROC fairness constraints~\citep{vogel2021learning}. Suppose the data are divided into two demographic groups $\mathcal{D}_p=\{(\mathbf a_i^p, b_i^p)\}_{i=1}^{n_p}$ and $\mathcal{D}_u=\{(\mathbf a_i^u, b_i^u)\}_{i=1}^{n_u}$, where $\mathbf a$ denotes the input data and $b\in\{1,-1\}$ denotes the class label. A ROC  fairness is to ensure the ROC curves for classification of the two groups are the same. % Suppose we set a threshold $\theta$ and classify data $\ba$ as positive if $h_\bw(\ba)\geq\theta$ and as negative otherwise. The ROC-based fairness measure for binary linear classifications and its continuous approximation are defined as
Since the ROC curve is constructed with all possible thresholds, we follow \citet{vogel2021learning} by using a set of thresholds $\Gamma=\{\tau_1, \cdots, \tau_m\}$ to define the ROC fairness. For each threshold $\tau$, we impose a constraint that the false positive rate (FPR) and true positive rate (TPR) of the two groups are close, formulated as the following:%\vspace*{-0.05in}
% $ h_{\theta}^+(\w) : = \Big| \frac{1}{n_p^+}\sum_{i=1}^{n_p} \I\{b^p_i=1\}\sigma(s_\w(\mathbf a_i^p)-\theta)
%    -\frac{1}{n_u^+}\sum_{i=1}^{n_u} \I\{y^u_i=1\}\sigma(s_\w(\mathbf a_i^u)-\theta)\Big| - \kappa \leq 0 $
\begin{equation*}%\label{ROCconstraintsvalue}
\begin{aligned}
    h_{\tau}^+(\w) : = \Big| \frac{1}{n_p^+}\sum_{i=1}^{n_p} \I\{b^p_i=1\}\sigma(s_\w(\mathbf a_i^p)-\tau)\vspace*{-0.1in}-\frac{1}{n_u^+}\sum_{i=1}^{n_u} \I\{b^u_i=1\}\sigma(s_\w(\mathbf a_i^u)-\tau)\Big| - \kappa \leq 0
\end{aligned}
\end{equation*}
\begin{equation*}%\label{ROCconstraintsvalue}
\begin{aligned}
    h_{\tau}^-(\w) : = \Big| \frac{1}{n_p^-}\sum_{i=1}^{n_p} \I\{b^p_i=-1\}\sigma(s_\w( \mathbf a_i^p)-\tau)\vspace*{-0.1in}-\frac{1}{n_u^-}\sum_{i=1}^{n_u} \I\{b^u_i=-1\}\sigma(s_\w(\mathbf a_i^u)-\tau)\Big| -\kappa \leq 0,%\numberthis \label{ROCconstraintsvalue}
\end{aligned}
\end{equation*}
%\begin{align*}
%    &\max_{\theta\in\Theta}\Big|
%    \frac{1}{n_p}\sum_{i=1}^{n_p} \mathbb{I}(h_\bw(\ba_i^p)\geq\theta)-\frac{1}{n_u}\sum_{i=1}^{n_u} \mathbb{I}(h_\bw(\ba_i^u)\geq\theta) \Big|\\
%    \label{eq:ch3_ROC_linear}
%    &\approx R(\bw):=\max_{\theta\in\Theta}\Big|
%    \frac{1}{n_p}\sum_{i=1}^{n_p} \sigma(h_\bw(\ba_i^p)-\theta)\\
%    &~~~~~~~~~~~~~~~~~-\frac{1}{n_u}\sum_{i=1}^{n_u} \sigma(h_\bw(\ba_i^u)-\theta)\Big|,\numberthis
%\end{align*}
where $s_{\w}(\cdot)$ denotes a parameterized model,  $\sigma(z)$ is the sigmoid function, and $\kappa>0$ is a tolerance parameter. %$\Theta$ is a finite set of thresholds. If the value of this measure is small, model $\bx$ produces similar predicted positive rates for the protected and unprotected groups on various $\theta$'s, indicating the fairness of the model. 
For the objective function, we use a pairwise AUC loss:
\begin{align*}
    F(\w)=\frac{-1}{|\cD_+||\cD_-|} \sum_{\x_i \in \cD_+}\sum_{\x_j \in \cD_-} \sigma(s(\w,\x_i)-s(\w,\x_j)),
\end{align*}
where $\cD_+$/$\cD_-$ denote the set of positive/negative examples regardless their groups, respectively.

%For constraints, we utilize a finite set of thresholds, i.e., $\bar \Theta = \{-3, -2.5, -2, \cdots, 3\}$. With threshold $\theta \in \bar \Theta$ and two groups $\mathcal{D}_p=\{\x_i^p\}_{i=1}^{n_p}$ and $\mathcal{D}_u=\{\x_i^u\}_{i=1}^{n_u}$, we have 
%\begin{align*}
%    &h_{\theta}^+ = \Big| \frac{1}{n_p^+}\sum_{i=1}^{n_p} \I\{y_i=1\}\sigma(s(\w, \x_i^p)-\theta)\\
%   &~~~~~~~-\frac{1}{n_u^+}\sum_{i=1}^{n_u} \I\{y_i=1\}\sigma(s(\w, \x_i^u)-\theta)\Big| - \kappa \leq 0 \\
%    &h_{\theta}^- = \Big| \frac{1}{n_p^-}\sum_{i=1}^{n_p} \I\{y_i=1\}\sigma(s(\w, \x_i^p)-\theta)\\
%    &~~~~~~~~-\frac{1}{n_u^-}\sum_{i=1}^{n_u} \I\{y_i=1\}\sigma(s(\w, \x_i^u)-\theta)\Big| -\kappa \leq 0\numberthis \label{ROCconstraintsvalue}
%\end{align*}

Our proposed penalty method solves the following problem:
\begin{align}\label{fairnessimplement}
    \min_{\w} F(\w) + \frac{1}{2|{\Gamma}|}\sum\nolimits_{\tau\in\Gamma} \beta[h_{\tau}^+(\w)]_+ + \beta[h_{\tau}^-(\w)]_+.
\end{align}
It is notable that the above problem does not directly satisfy the Assumption~\ref{ass:h}. However, the Algorithm~\ref{alg:1} with a minor change is still applicable by formulating $[h_{\tau}^+(\w)]_+ = f(g(h_{\tau,1}(\w)))$, where $f(g)=[g]_+$, $g(h)=|h| - \kappa$ and $h_{\tau, 1}(\w) =  \frac{1}{n_p^+}\sum_{i=1}^{n_p} \I\{b_i^p=1\}\sigma(s_\w(\mathbf a_i^p)-\tau)$. As a result, $f$ is monotonically non-decreasing and convex,  $g$ is convex, and $h_{\tau,1}$ is a smooth function, which fits another setting in~\citet{hu2024non}.  %is still applicable to the above problem. 
The only change to Algorithm~\ref{alg:1} is to maintain an estimator of $h_{\tau,1}, \tau\in\Gamma$ and handle $[h_{\tau}^-(\w)]_+$ similarly.
%To obtain a fair $\bw$, we balance~\eqref{eq:ch3_ermL_linear} and~\eqref{eq:ch3_ROC_linear} by adding a bounded set constraint $\W=\{\bw\in\mathbb{R}^d~|~\|\bw\|\leq r\}$ and then solving
%\begin{equation}
%\label{eq:ch3_ROCfairnessclassification_linear}
%    \min_{\bw\in\W} L_{\text{hinge}}(\bw)\text{ s.t. } R(\bw)\leq \kappa,
%\end{equation}
%where $\kappa$ is the  parameter indicating how fair the model should be. 
%\textbf{Implementation}.

%With $[h_{\theta}^+]_+$ as an example, the constraint is verified with two variance reduced estimators as in \eqref{varianced-reduce_u} but the gradient is calculated with stochastically sampled examples.

{\bf Setting.} For experiments, we use three datasets, Adult and COMPAS used in~\cite{donini2018empirical}, CheXpert~\citep{irvin2019chexpert}, which contain male/female, Caucasian/non-Caucasian, and male/female groups, respectively. We set $\Gamma = \{-3, -2, -1, \cdots, 3\}$. For models, we use a simple neural network with 2 hidden layers for Adult and COMPAS data and use DenseNet121 for CheXpert data~\citep{DBLP:conf/iccv/Yuan0SY21}. We implement Algorithm \ref{alg:1} with setting $\gamma_2=0.8$, $\gamma_2^{\prime}=0.1$, $\kappa = 0.005$. We compare with the algorithm in~\cite{li2024modeldevelopmentalsafetysafetycentric} which optimizes a squared-hinge penalty function. For both algorithms, we tune the initial learning rate in \{1e-3, 1e-4\} and decay it at 50\% and 75\% epochs by a factor of 10. We tuned $\beta=\{1,4,8,10,20,40,80,100\}$ for our hinge exact penalty method and $\beta=\{10,40,80,100,200,400,800,1000\}$ for squared hinge penalty method. We also compare a double-loop method (ICPPAC)~\citep[Algorithm 4]{boob2023stochastic} for Adult and COMPAS, where we tune their $\eta$ in \{0.1, 0.01\}, $\tau$ in \{1, 10, 100\}, $\mu$ in \{1e-2, 1e-3, 1e-4\}, and fix $\theta_t$ to 0.1. We do not report ICPPAC on the large-scale CheXpert data because it is not efficient due to back-propagation on 15 individual constraint functions every iteration.    We run each method for  total 60 epoches with a batch size of 128.  As optimization involves randomness, we run all the experiments with five different random seeds and then calculate average of the AUC scores and constraint values. 

\begin{figure*}[tb!]
\centering
    \includegraphics[width=0.22\textwidth]{./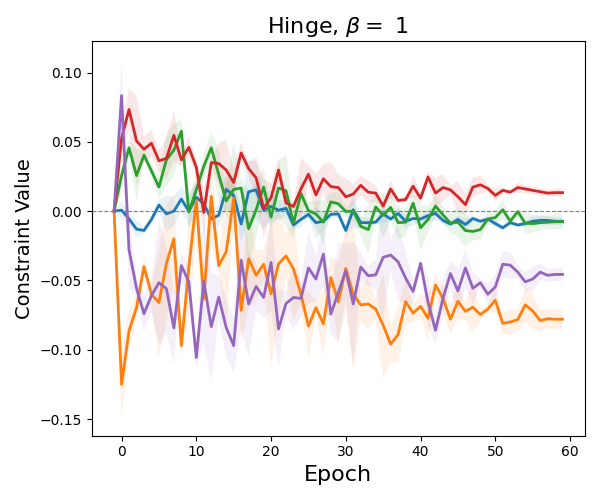} 
   \includegraphics[width=0.22\textwidth]{./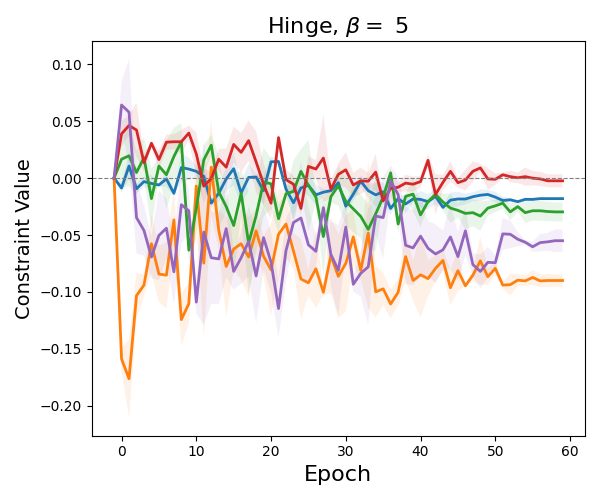} 
    \includegraphics[width=0.22\textwidth]{./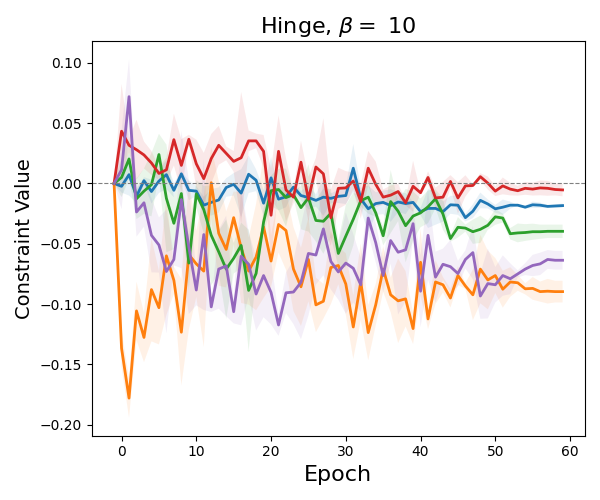} 
   \includegraphics[width=0.32\textwidth]{./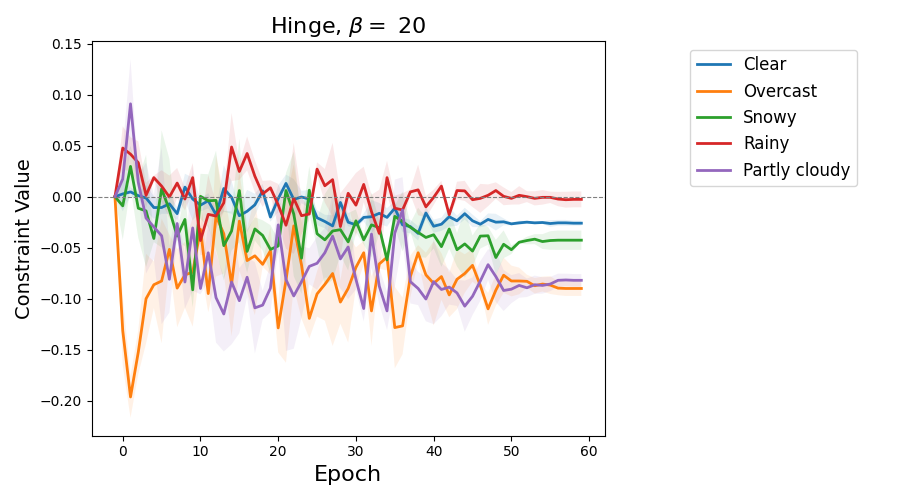}\\
    \includegraphics[width=0.22\textwidth]{./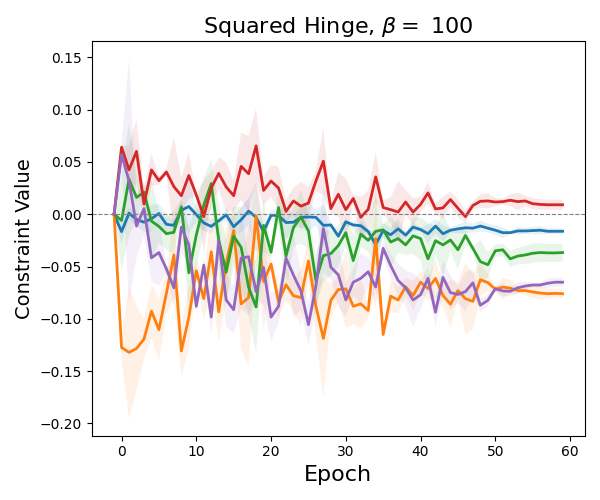} 
   \includegraphics[width=0.22\textwidth]{./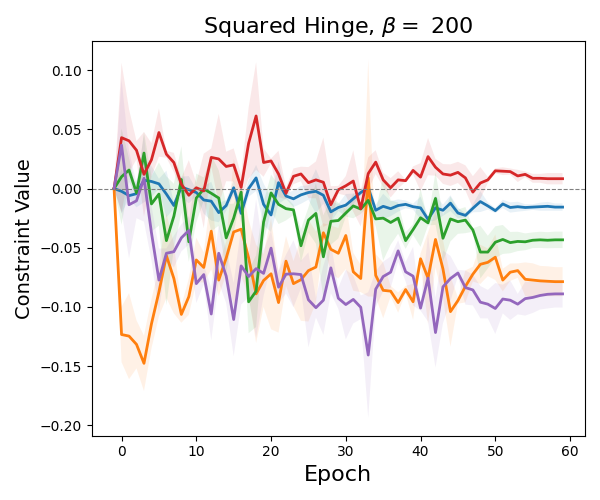} 
    \includegraphics[width=0.22\textwidth]{./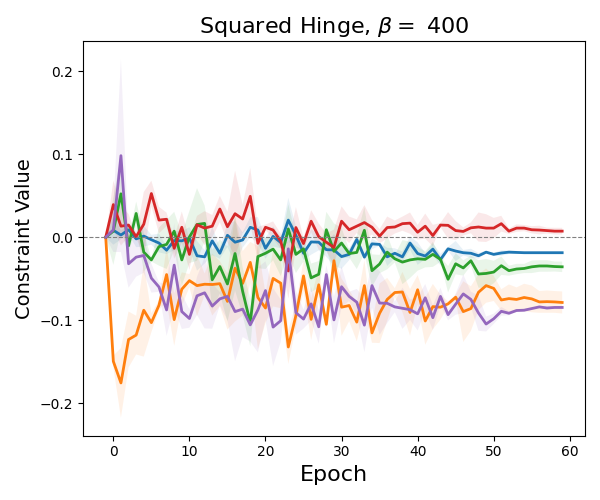} 
    \includegraphics[width=0.31\textwidth]{./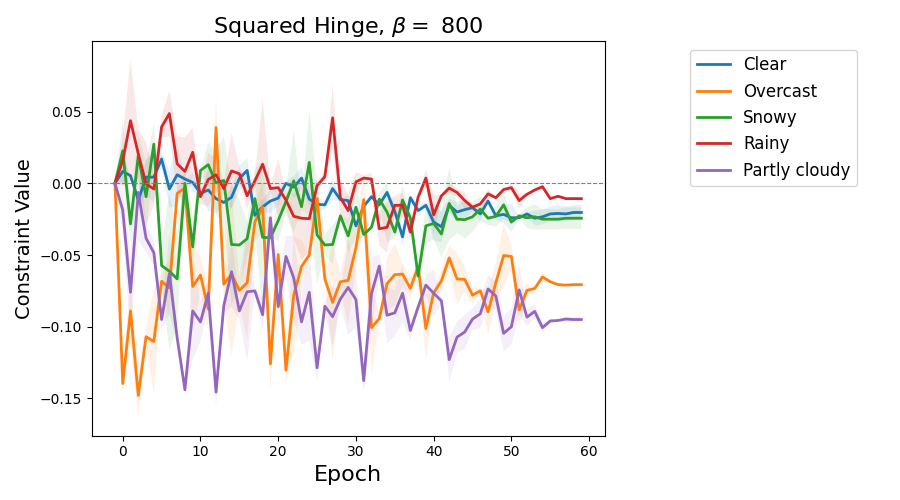} 
\vspace{-0.15in}
\centering
\caption{Training curves of 5 constraint values in zero-one loss of different methods for continual learning with non-forgetting constraints when targeting the foggy class. Top: hinge penalty method with different $\beta$; Bottom: squared-hinge penalty method with different $\beta$.} 
\label{target foggy}
%\vspace{-0.1in}
%\end{figure*}
%\begin{figure*}[tb!]
\centering
    \includegraphics[width=0.32\textwidth]{./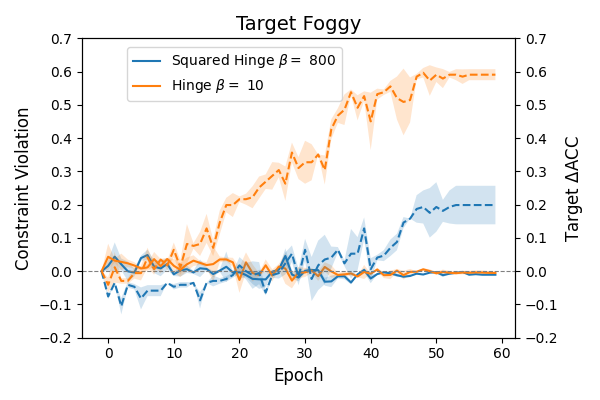} 
   \includegraphics[width=0.32\textwidth]{./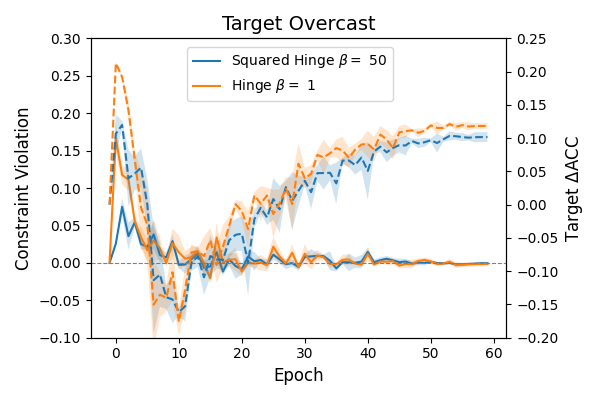} 
    \includegraphics[width=0.32\textwidth]{./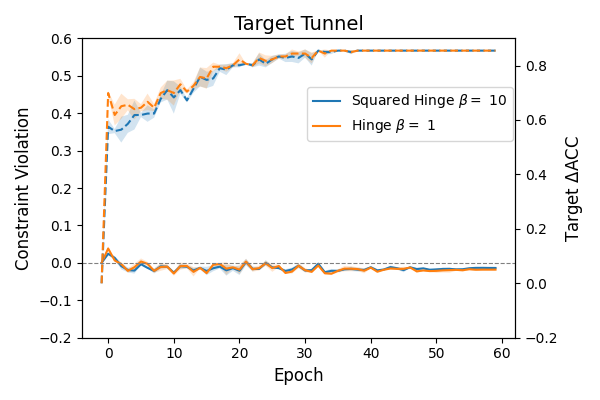} 
\centering
\vspace{-0.15in}
\caption{Comparison of training curves between the hinge penalty method and the squared-hinge penalty method when both methods satisfy the constraints in the end. Dashed lines correspond to target accuracy improvement over the base model (y-axis on the right) and solid lines correspond to constraint violation in terms of zero-one loss (y-axis on the left).} 
\label{Comparison_Safety}
\end{figure*}
\textbf{Result.}
We present the training curves of values of constraint functions  in  Fig. \ref{fig:adult} on Adult dataset to illustrate which $\beta$ value satisfies the constraints. For the hinge exact penalty method, when $\beta=20$,  all constraints are satisfied. In contrast, the squared hinge penalty method requires a much larger $\beta=800$ to satisfy the constraints at the end. We include more results on COMPAS and CheXpert in Appendix~\ref{moreexperimentresult}. We compare the training curves of objective AUC values and the constraint violation as measured by the worst constraint function value at each epoch of different methods in Fig. \ref{fig:vsbaseline} on different datasets. These results demonstrate that the hinge exact penalty method has a better performance than the squared hinge penalty method  in terms of the objective AUC value when both have a similar  constraint satisfaction, and is competitive with if not better than ICPPAC method.   

\vspace*{-0.12in}
\subsection{Continual learning with non-forgetting constraints}\vspace*{-0.05in}
Continual learning with non-forgetting constraints has been considered in 
\cite{li2024modeldevelopmentalsafetysafetycentric}, which is termed the model developmental safety.  We consider the same problem of developing the CLIP model while satisfying developmental safety constraints: 
%\vspace*{-0.1in}
\begin{align*}
    &\min_{\mathbf{w}} \ F(\mathbf{w}, \mathcal{D}) := \frac{1}{n} \sum\nolimits_{(\mathbf{x}_i, \mathbf{t}_i) \in \mathcal{D}} L_{\text{ctr}}(\mathbf{w}, \mathbf{x}_i, \mathbf{t}_i, \mathcal{T}_i^-, \mathcal{I}_i^-),\\
    &\text{s.t. } h_{k}:=\mathcal{L}_k(\mathbf{w}, D_k) - \mathcal{L}_k(\mathbf{w}_{\text{old}}, D_k) \leq 0, 
 k = 1, \cdots,m.% \numberthis \label{FW_hW}
\end{align*}
The $L_{\text{ctr}}(\mathbf{w}, \mathbf{x}_i, \mathbf{t}_i, \mathcal{T}_i^-, \mathcal{I}_i^-)$ is a two-way contrastive loss for each image-text pair $(\mathbf{x}_i, \mathbf{t}_i)$ \citep{yuan2022provable}, 
\begin{align*}
    L_{\text{ctr}}(\mathbf{w}; \mathbf{x}_i, t_i, \mathcal{T}_i^-, \mathcal{I}_i^-): &= - \tau \log \frac{\exp(E_1(\mathbf{w}, \mathbf{x}_i)^\top E_2(\mathbf{w}, t_i) / \tau)}{\sum_{t_j \in \mathcal{T}_i^-} \exp(E_1(\mathbf{w}, \mathbf{x}_i)^\top E_2(\mathbf{w}, t_j) / \tau)} \\
    &~~~~~~- \tau \log \frac{\exp(E_2(\mathbf{w}, t_i)^\top E_1(\mathbf{w}, \mathbf{x}_i) / \tau)}{\sum_{\mathbf{x}_j \in \mathcal{I}_i^-} \exp(E_2(\mathbf{w}, t_j)^\top E_1(\mathbf{w}, \mathbf{x}_i) / \tau)},
\end{align*}
where $E_1(\mathbf{w}, \mathbf{x})$ and $E_2(\mathbf{w}, t)$ denotes a (normalized) encoded representation of an image $\mathbf{x}$ and a text $t$, respectively.
$\mathcal{T}_i^{-}$ denotes the set of all texts to be contrasted with respect to (w.r.t) $\mathbf{x}_i$ (including itself) and $\mathcal{I}_i^{-}$ denotes the set of all images to be contrasted w.r.t $t_i$ (including itself). Here, the data $\mathcal{D}$ is a target dataset. $\mathcal{L}_k(\mathbf{w}, \mathcal{D}_k) = \frac{1}{n_k} \sum_{(\mathbf{x}_i, y_i) \sim \mathcal{D}_k} \ell_k(\mathbf{w}, \mathbf{x}_i, y_i)$, where $\ell_k$
is a logistic loss  for the $k$-th classification task. %The loss $\ell_k(\mathbf{w}$ could be zero-one class $l_{0-1}$ that measures the classification error or the cross-entropy loss $l_{\text{ce}}$ that is differentiable for learning.
We use our Algorithm~\ref{alg:2} to optimize the hinge penalty function.

\textbf{Setting.} We follow the exactly same experiment setting as in~\cite{li2024modeldevelopmentalsafetysafetycentric}. 
We experiment on the large-scale diverse driving image dataset, namely BDD100K \citep{yu2020bdd100k}. This dataset involves classification of six weather conditions, i.e., clear, overcast, snowy, rainy, partly cloudy, foggy, and of six scene types, i.e., highway, residential area, city street, parking lot, gas station, tunnel. We consider three objectives of contrastive loss for finetuning a pretrained CLIP model to target at improving classification of foggy, tunnel and overcast, respectively. For each task, we use other weather or scene types to formulate the constraints, ensuring the new model does not lose the performance on these other classes.  The data of the objective for the target class are sampled from the training set of BDD100K and the external LAION400M dataset~\citep{schuhmann2021laion} and the data of constrained tasks are sample from BDD100K, where the statistics are given in  \cite{li2024modeldevelopmentalsafetysafetycentric}.
%\paragraph{Evaluation}

%We measure improvement on target task with $\Delta \text{Acc(Target)}=\text{Acc}(\text{Target}; \mathbf{w}_{new})-\text{Acc}(\text{Target}; \mathbf{w}_{old})$ and on protected task with $\Delta \text{Acc(Constraint)}=\text{Acc}(\text{Constraint}; \mathbf{w}_{old})-\text{Acc}(\text{Constraint}; \mathbf{w}_{new})$  which is corresponding to $h_k(\mathbf{w})\leq 0$. As optimization involves randomness, we run all the experiments with three different random seeds and then calculate the average  accuracy for target and constraints. 
%\paragraph{Baseline}To verify the effectiveness of our algorithms, we compare our hinge-based penalty method with squared hinge penalty method\cite{li2024modeldevelopmentalsafetysafetycentric}. 
%\paragraph{Experiment Setting}

%We employ the CLIP ViT-B/16\cite{radford2021learning} as the backbone network in all our experiments. For BDD100K dataset, we obtain a base model by fine-tuning the pretrained CILP model, following the method proposed in \cite{yuan2022provable}, on the BDD100K training dataset without foggy and tunnel data. Subsequently, we undertake secondary development to improve the performance of a target class separately. We consider three settings with foggy, overcast and tunnel as the target class. For targeting foggy, we consider other weather conditions as protected tasks. The image-text pairs for the objective function are from the training set of BDD100K. The data of protected tasks used for constraints are sampled from the BDD100K training set. 
For all methods, we fix  the learning rate $1e^{-6}$ with cosine scheduler, using a weight decay of 0.1. We run each method for a total of 60 epochs with a batch size of 256 and 400 iterations per epoch.  We tune $\beta$ in $\{ 0.01, 0.1,1, 5, 10, 20\}$ for our method and in $\{ 80, 100, 200, 400, 800\}$ for squared hinge penalty method and set $ |\mathcal{B}_c|=m, |\mathcal{B}_k|=10, \gamma_1=\gamma_2=0.8, \gamma_1^{\prime}=\gamma_2^{\prime}=0.1, \tau = 0.05$.  %The detail information of experiments when target overcast and tunnel is given in the appendix \ref{moreexperimentresult} due to limit of space. 

\textbf{Results.}
We present the training curves of the constraint functions as measured by zero-one loss in Fig. \ref{target foggy} for different methods with different $\beta$ values on targeting foggy class. We can see that when $\beta=10$, our hinge  penalty method will satisfy all constraints. In contrast, the squared hinge penalty method needs a much larger $\beta=800$ to satisfy all constraints. The constraint curves when targeting overcast and tunnel are presented in Figs. \ref{target overcast}, \ref{target tunnel} in Appendix \ref{moreexperimentresult}. 

Fig. \ref{Comparison_Safety} compares the training curves of the target accuracy improvement between our hinge penalty method and squared-hinge penalty method when both satisfy the constraints. According to the Fig. \ref{Comparison_Safety}, our hinge penalty method converges faster in terms of the objective measure when they exhibit similar  constraint satisfaction in the end. 
\section{Conclusion and Discussion}
\vspace*{-0.1in}
In this paper, we have studied non-convex constrained optimization with a weakly-convex objective and weakly convex constraint functions, which are all stochastic. We developed a hinge-based exact penalty method and its theory to guarantee finding a nearly $\epsilon$-KKT solution. By leveraging stochastic optimization techniques for non-smooth weakly-convex finite-sum coupled compositional optimization problems, we developed algorithms for solving the penalty function with different structures of the objective functions. Our experiments on fair learning with ROC fairness constraints and continual learning with non-forgetting constraints demonstrate the effectiveness of our algorithms compared. One limitation of this work is that the rate is still worse than non-convex  smooth equality constrained optimization. 
Different from our hinge-based exact penalty method,  there are some relevant works  on smoothed proximal and primal–dual methods with constant penalty parameters\citep{pu2024smoothed, huang2025stochastic},  which involve any dual variables or primal–dual updates and make valuable contributions in convex or linearly constrained settings.  It is indeed an interesting future direction to explore whether primal–dual techniques can be extended to fully non-convex constrained optimization in the stochastic setting, potentially achieving even better convergence guarantees. 

%\section{Impact Statements}
%This paper presents work whose goal is to advance the field of Machine Learning. There are many potential societal consequences of our work, none of which we feel must be specifically highlighted here. 

% \textbf{Evaluation.}
% (1) ROC curves \\
% (2) Unfairness: $\Delta_{roc}= \frac{1}{2|\widehat \Theta|}\sum_{\theta \in \widehat \Theta} |tpr_p^\theta - tpr_n^\theta| + |fpr_p^\theta - fpr_n^\theta|$, where $\widehat{\Theta} = \{s(\w,\x_i),  \x_i \in \mathcal{D}_p \cup \mathcal{D}_u\}$

% \subsubsection*{Broader Impact Statement}
% In this optional section, TMLR encourages authors to discuss possible repercussions of their work,
% notably any potential negative impact that a user of this research should be aware of. 
% Authors should consult the TMLR Ethics Guidelines available on the TMLR website
% for guidance on how to approach this subject.

% \subsubsection*{Author Contributions}
% If you'd like to, you may include a section for author contributions as is done
% in many journals. This is optional and at the discretion of the authors. Only add
% this information once your submission is accepted and deanonymized. 

\subsubsection*{Acknowledgments}
 M. Yang, G. Li, Q. Hu and T. Yang were partially supported by the National Science Foundation Career Award 2246753, the National Science Foundation
Award 2246757 and 2306572. Q. Lin was partially supported by National Science Foundation Award 2147253. The authors thank the action editor and the anonymous reviewers for their insightful suggestions and constructive feedback.
% Use unnumbered third level headings for the acknowledgments. All
% acknowledgments, including those to funding agencies, go at the end of the paper.
% Only add this information once your submission is accepted and deanonymized. 

\bibliography{constraints}

@book{bauschke2017correction,
  title={Correction to: convex analysis and monotone operator theory in Hilbert spaces},
  author={Bauschke, Heinz H and Combettes, Patrick L and Bauschke, Heinz H and Combettes, Patrick L},
  year={2017},
  publisher={Springer}
}

@misc{li2021rate,
      title={Rate-improved Inexact Augmented Lagrangian Method for Constrained Nonconvex Optimization}, 
      author={Zichong Li and Pin-Yu Chen and Sijia Liu and Songtao Lu and Yangyang Xu},
      year={2021},
      eprint={2007.01284},
      archivePrefix={arXiv},
      primaryClass={math.OC},
      url={https://arxiv.org/abs/2007.01284}, 
}

@article{evans1973exact,
  title={Exact penalty functions in nonlinear programming},
  author={Evans, James P and Gould, Floyd J and Tolle, Jon W},
  journal={Mathematical programming},
  volume={4},
  number={1},
  pages={72--97},
  year={1973},
  publisher={Springer}
}

@article{zangwill1967non,
  title={Non-linear programming via penalty functions},
  author={Zangwill, Willard I},
  journal={Management science},
  volume={13},
  number={5},
  pages={344--358},
  year={1967},
  publisher={INFORMS}
}

@article{yu2017online,
  title={Online convex optimization with stochastic constraints},
  author={Yu, Hao and Neely, Michael and Wei, Xiaohan},
  journal={Advances in Neural Information Processing Systems},
  volume={30},
  year={2017}
}

@misc{liu2025singleloopspidertypestochasticsubgradient,
      title={A single-loop SPIDER-type stochastic subgradient method for expectation-constrained nonconvex nonsmooth optimization}, 
      author={Wei Liu and Yangyang Xu},
      year={2025},
      eprint={2501.19214},
      archivePrefix={arXiv},
      primaryClass={math.OC},
      url={https://arxiv.org/abs/2501.19214}, 
}

@InProceedings{pmlr-v80-lin18c,
  title = 	 {Level-Set Methods for Finite-Sum Constrained Convex Optimization},
  author =       {Lin, Qihang and Ma, Runchao and Yang, Tianbao},
  booktitle = 	 {Proceedings of the 35th International Conference on Machine Learning},
  pages = 	 {3112--3121},
  year = 	 {2018},
  editor = 	 {Dy, Jennifer and Krause, Andreas},
  volume = 	 {80},
  series = 	 {Proceedings of Machine Learning Research},
  month = 	 {10--15 Jul},
  publisher =    {PMLR},
  pdf = 	 {http://proceedings.mlr.press/v80/lin18c/lin18c.pdf},
  url = 	 {https://proceedings.mlr.press/v80/lin18c.html},
  abstract = 	 {We consider the constrained optimization where the objective function and the constraints are defined as summation of finitely many loss functions. This model has applications in machine learning such as Neyman-Pearson classification. We consider two level-set methods to solve this class of problems, an existing inexact Newton method and a new feasible level-set method. To update the level parameter towards the optimality, both methods require an oracle that generates upper and lower bounds as well as an affine-minorant of the level function. To construct the desired oracle, we reformulate the level function as the value of a saddle-point problem using the conjugate and perspective of the loss functions. Then a stochastic variance-reduced gradient method with a special Bregman divergence is proposed as the oracle for solving that saddle-point problem. The special divergence ensures the proximal mapping in each iteration can be solved in a closed form. The total complexity of both level-set methods using the proposed oracle are analyzed.}
}

@book{0521833787,
  added-at = {2009-08-14T16:29:36.000+0200},
  asin = {0521833787},
  author = {Boyd, Stephen and Vandenberghe, Lieven},
  description = {Amazon.com: Convex Optimization (9780521833783): Stephen Boyd, Lieven Vandenberghe: Books},
  dewey = {519.6},
  ean = {9780521833783},
  interhash = {9049cf1875745bd4a9dfbfa3f470a681},
  intrahash = {b94b4eabf1023b7e1f52d17b78c36637},
  isbn = {0521833787},
  keywords = {book convex-optimization optimization},
  publisher = {Cambridge University Press},
  timestamp = {2009-08-14T16:29:51.000+0200},
  title = {Convex Optimization},
  year = 2004
}

@inproceedings{DBLP:conf/iccv/Yuan0SY21,
  author       = {Zhuoning Yuan and
                  Yan Yan and
                  Milan Sonka and
                  Tianbao Yang},
  title        = {Large-scale Robust Deep {AUC} Maximization: {A} New Surrogate Loss
                  and Empirical Studies on Medical Image Classification},
  booktitle    = {2021 {IEEE/CVF} International Conference on Computer Vision, {ICCV}
                  2021, Montreal, QC, Canada, October 10-17, 2021},
  pages        = {3020--3029},
  publisher    = {{IEEE}},
  year         = {2021},
  url          = {https://doi.org/10.1109/ICCV48922.2021.00303},
  doi          = {10.1109/ICCV48922.2021.00303},
  timestamp    = {Fri, 11 Mar 2022 10:01:59 +0100},
  biburl       = {https://dblp.org/rec/conf/iccv/Yuan0SY21.bib},
  bibsource    = {dblp computer science bibliography, https://dblp.org}
}

@book{10.5555/2655222,
author = {Birgin, Ernesto G. and Martnez, Jos Mario},
title = {Practical Augmented Lagrangian Methods for Constrained Optimization},
year = {2014},
isbn = {161197335X},
publisher = {Society for Industrial and Applied Mathematics},
address = {USA},
abstract = {This book focuses on Augmented Lagrangian techniques for solving practical constrained optimization problems. The authors rigorously delineate mathematical convergence theory based on sequential optimality conditions and novel constraint qualifications. They also orient the book to practitioners by giving priority to results that provide insight on the practical behavior of algorithms and by providing geometrical and algorithmic interpretations of every mathematical result, and they fully describe a freely available computational package for constrained optimization and illustrate its usefulness with applications. Audience: The book is aimed at engineers, physicists, chemists, and other practitioners interested in full access to comprehensive and well-documented software for large-scale optimization as well as up-to-date convergence theory and its practical consequences. It will also be of interest to graduate and advanced undergraduate students in mathematics, computer science, applied mathematics, optimization, and numerical analysis.}
}

@article{10.1007/s10957-009-9522-7,
author = {Nedi\'{c}, A. and Ozdaglar, A.},
title = {Subgradient Methods for Saddle-Point Problems},
year = {2009},
issue_date = {July      2009},
publisher = {Plenum Press},
address = {USA},
volume = {142},
number = {1},
issn = {0022-3239},
url = {https://doi.org/10.1007/s10957-009-9522-7},
doi = {10.1007/s10957-009-9522-7},
abstract = {We study subgradient methods for computing the saddle points of a convex-concave function. Our motivation comes from networking applications where dual and primal-dual subgradient methods have attracted much attention in the design of decentralized network protocols. We first present a subgradient algorithm for generating approximate saddle points and provide per-iteration convergence rate estimates on the constructed solutions. We then focus on Lagrangian duality, where we consider a convex primal optimization problem and its Lagrangian dual problem, and generate approximate primal-dual optimal solutions as approximate saddle points of the Lagrangian function. We present a variation of our subgradient method under the Slater constraint qualification and provide stronger estimates on the convergence rate of the generated primal sequences. In particular, we provide bounds on the amount of feasibility violation and on the primal objective function values at the approximate solutions. Our algorithm is particularly well-suited for problems where the subgradient of the dual function cannot be evaluated easily (equivalently, the minimum of the Lagrangian function at a dual solution cannot be computed efficiently), thus impeding the use of dual subgradient methods.},
journal = {J. Optim. Theory Appl.},
month = jul,
pages = {205–228},
numpages = {24},
keywords = {Approximate primal solutions, Averaging, Convergence rate, Primal-dual subgradient methods, Saddle-point subgradient methods}
}

@article{davis2019proximally,
  title={Proximally guided stochastic subgradient method for nonsmooth, nonconvex problems},
  author={Davis, Damek and Grimmer, Benjamin},
  journal={SIAM Journal on Optimization},
  volume={29},
  number={3},
  pages={1908--1930},
  year={2019},
  publisher={SIAM}
}

@article{jia2022first,
  title={First-order methods for nonsmooth nonconvex functional constrained optimization with or without slater points},
  author={Jia, Zhichao and Grimmer, Benjamin},
  journal={arXiv preprint arXiv:2212.00927},
  year={2022}
}

@article{curtis2024sequential,
  title={Sequential quadratic optimization for stochastic optimization with deterministic nonlinear inequality and equality constraints},
  author={Curtis, Frank E and Robinson, Daniel P and Zhou, Baoyu},
  journal={SIAM Journal on Optimization},
  volume={34},
  number={4},
  pages={3592--3622},
  year={2024},
  publisher={SIAM}
}

@article{berahas2025sequential,
  title={A Sequential Quadratic Programming Method With High-Probability Complexity Bounds for Nonlinear Equality-Constrained Stochastic Optimization},
  author={Berahas, Albert S and Xie, Miaolan and Zhou, Baoyu},
  journal={SIAM Journal on Optimization},
  volume={35},
  number={1},
  pages={240--269},
  year={2025},
  publisher={SIAM}
}

@article{na2023adaptive,
  title={An adaptive stochastic sequential quadratic programming with differentiable exact augmented lagrangians},
  author={Na, Sen and Anitescu, Mihai and Kolar, Mladen},
  journal={Mathematical Programming},
  volume={199},
  number={1},
  pages={721--791},
  year={2023},
  publisher={Springer}
}

@article{fang2024fully,
  title={Fully stochastic trust-region sequential quadratic programming for equality-constrained optimization problems},
  author={Fang, Yuchen and Na, Sen and Mahoney, Michael W and Kolar, Mladen},
  journal={SIAM Journal on Optimization},
  volume={34},
  number={2},
  pages={2007--2037},
  year={2024},
  publisher={SIAM}
}

@article{lin2022complexity,
  title={Complexity of an inexact proximal-point penalty method for constrained smooth non-convex optimization},
  author={Lin, Qihang and Ma, Runchao and Xu, Yangyang},
  journal={Computational Optimization and Applications},
  volume={82},
  number={1},
  pages={175--224},
  year={2022},
  publisher={Springer}
}

@article{sun2024dual,
  title={Dual Descent Augmented {L}agrangian Method and Alternating Direction Method of Multipliers},
  author={Sun, Kaizhao and Sun, Xu Andy},
  journal={SIAM Journal on Optimization},
  volume={34},
  number={2},
  pages={1679--1707},
  year={2024},
  publisher={SIAM}
}

@article{berahas2021sequential,
  title={Sequential quadratic optimization for nonlinear equality constrained stochastic optimization},
  author={Berahas, Albert S and Curtis, Frank E and Robinson, Daniel and Zhou, Baoyu},
  journal={SIAM Journal on Optimization},
  volume={31},
  number={2},
  pages={1352--1379},
  year={2021},
  publisher={SIAM}
}

@inproceedings{vogel2021learning,
  title={Learning fair scoring functions: Bipartite ranking under roc-based fairness constraints},
  author={Vogel, Robin and Bellet, Aur{\'e}lien and Cl{\'e}men{\c{c}}on, Stephan},
  booktitle={International conference on artificial intelligence and statistics},
  pages={784--792},
  year={2021},
  organization={PMLR}
}

@article{jiang2022multi,
  title={Multi-block-single-probe variance reduced estimator for coupled compositional optimization},
  author={Jiang, Wei and Li, Gang and Wang, Yibo and Zhang, Lijun and Yang, Tianbao},
  journal={Advances in Neural Information Processing Systems},
  volume={35},
  pages={32499--32511},
  year={2022}
}

@article{wang2022finite,
  title={Finite-sum coupled compositional stochastic optimization: Theory and applications},
  author={Wang, Bokun and Yang, Tianbao},
  journal={arXiv preprint arXiv:2202.12396},
  year={2022}
}

@inproceedings{yuan2022provable,
  title={Provable stochastic optimization for global contrastive learning: Small batch does not harm performance},
  author={Yuan, Zhuoning and Wu, Yuexin and Qiu, Zi-Hao and Du, Xianzhi and Zhang, Lijun and Zhou, Denny and Yang, Tianbao},
  booktitle={International Conference on Machine Learning},
  pages={25760--25782},
  year={2022},
  organization={PMLR}
}

@article{hu2024non,
  title={Non-smooth weakly-convex finite-sum coupled compositional optimization},
  author={Hu, Quanqi and Zhu, Dixian and Yang, Tianbao},
  journal={Advances in Neural Information Processing Systems},
  volume={36},
  year={2024}
}

@inproceedings{alacaoglu2024complexity,
  title={Complexity of single loop algorithms for nonlinear programming with stochastic objective and constraints},
  author={Alacaoglu, Ahmet and Wright, Stephen J},
  booktitle={International Conference on Artificial Intelligence and Statistics},
  pages={4627--4635},
  year={2024},
  organization={PMLR}
}

@article{davis2019stochastic,
  title={Stochastic model-based minimization of weakly convex functions},
  author={Davis, Damek and Drusvyatskiy, Dmitriy},
  journal={SIAM Journal on Optimization},
  volume={29},
  number={1},
  pages={207--239},
  year={2019},
  publisher={SIAM}
}

@misc{li2024modeldevelopmentalsafetysafetycentric,
      title={Model Developmental Safety: A Safety-Centric Method and Applications in Vision-Language Models}, 
      author={Gang Li and Wendi Yu and Yao Yao and Wei Tong and Yingbin Liang and Qihang Lin and Tianbao Yang},
      year={2024},
      eprint={2410.03955},
      archivePrefix={arXiv},
      primaryClass={cs.LG},
      url={https://arxiv.org/abs/2410.03955}, 
}

@article{huang2023oracle,
  title={Oracle complexity of single-loop switching subgradient methods for non-smooth weakly convex functional constrained optimization},
  author={Huang, Yankun and Lin, Qihang},
  journal={Advances in Neural Information Processing Systems},
  volume={36},
  pages={61327--61340},
  year={2023}
}

@article{boob2023stochastic,
  title={Stochastic first-order methods for convex and nonconvex functional constrained optimization},
  author={Boob, Digvijay and Deng, Qi and Lan, Guanghui},
  journal={Mathematical Programming},
  volume={197},
  number={1},
  pages={215--279},
  year={2023},
  publisher={Springer}
}

@inproceedings{ma2020quadratically,
  title={Quadratically regularized subgradient methods for weakly convex optimization with weakly convex constraints},
  author={Ma, Runchao and Lin, Qihang and Yang, Tianbao},
  booktitle={International Conference on Machine Learning},
  pages={6554--6564},
  year={2020},
  organization={PMLR}
}

@article{li2024stochastic,
  title={Stochastic inexact augmented Lagrangian method for nonconvex expectation constrained optimization},
  author={Li, Zichong and Chen, Pin-Yu and Liu, Sijia and Lu, Songtao and Xu, Yangyang},
  journal={Computational Optimization and Applications},
  volume={87},
  number={1},
  pages={117--147},
  year={2024},
  publisher={Springer}
}

@article{sahin2019inexact,
  title={An inexact augmented Lagrangian framework for nonconvex optimization with nonlinear constraints},
  author={Sahin, Mehmet Fatih and Alacaoglu, Ahmet and Latorre, Fabian and Cevher, Volkan and others},
  journal={Advances in Neural Information Processing Systems},
  volume={32},
  year={2019}
}

@inproceedings{yu2020bdd100k,
  title={Bdd100k: A diverse driving dataset for heterogeneous multitask learning},
  author={Yu, Fisher and Chen, Haofeng and Wang, Xin and Xian, Wenqi and Chen, Yingying and Liu, Fangchen and Madhavan, Vashisht and Darrell, Trevor},
  booktitle={Proceedings of the IEEE/CVF conference on computer vision and pattern recognition},
  pages={2636--2645},
  year={2020}
}

@article{schuhmann2021laion,
  title={Laion-400m: Open dataset of clip-filtered 400 million image-text pairs},
  author={Schuhmann, Christoph and Vencu, Richard and Beaumont, Romain and Kaczmarczyk, Robert and Mullis, Clayton and Katta, Aarush and Coombes, Theo and Jitsev, Jenia and Komatsuzaki, Aran},
  journal={arXiv preprint arXiv:2111.02114},
  year={2021}
}

@article{donini2018empirical,
  title={Empirical risk minimization under fairness constraints},
  author={Donini, Michele and Oneto, Luca and Ben-David, Shai and Shawe-Taylor, John S and Pontil, Massimiliano},
  journal={Advances in neural information processing systems},
  volume={31},
  year={2018}
}

@inproceedings{irvin2019chexpert,
  title={Chexpert: A large chest radiograph dataset with uncertainty labels and expert comparison},
  author={Irvin, Jeremy and Rajpurkar, Pranav and Ko, Michael and Yu, Yifan and Ciurea-Ilcus, Silviana and Chute, Chris and Marklund, Henrik and Haghgoo, Behzad and Ball, Robyn and Shpanskaya, Katie and others},
  booktitle={Proceedings of the AAAI conference on artificial intelligence},
  volume={33},
  pages={590--597},
  year={2019}
}

@inproceedings{schulman2015trust,
  title={Trust region policy optimization},
  author={Schulman, John and Levine, Sergey and Abbeel, Pieter and Jordan, Michael and Moritz, Philipp},
  booktitle={International conference on machine learning},
  pages={1889--1897},
  year={2015},
  organization={PMLR}
}

@article{robey2021adversarial,
  title={Adversarial robustness with semi-infinite constrained learning},
  author={Robey, Alexander and Chamon, Luiz and Pappas, George J and Hassani, Hamed and Ribeiro, Alejandro},
  journal={Advances in Neural Information Processing Systems},
  volume={34},
  pages={6198--6215},
  year={2021}
}

@article{article,
author = {Fukuda, Ellen and Fukushima, Masao},
year = {2017},
month = {07},
pages = {262-270},
title = {A note on the squared slack variables technique for nonlinear optimization},
volume = {60},
journal = {Journal of the Operations Research Society of Japan},
doi = {10.15807/jorsj.60.262}
}

@article{ding2023squared,
  title={On squared-variable formulations},
  author={Ding, Lijun and Wright, Stephen J},
  journal={arXiv preprint arXiv:2310.01784},
  year={2023}
}

@article{cite-keyYU,
	abstract = {In this paper we propose a new approach for constructing efficient schemes for non-smooth convex optimization. It is based on a special smoothing technique, which can be applied to functions with explicit max-structure. Our approach can be considered as an alternative to black-box minimization. From the viewpoint of efficiency estimates, we manage to improve the traditional bounds on the number of iterations of the gradient schemes from keeping basically the complexity of each iteration unchanged.},
	author = {Nesterov, Yu. },
	da = {2005/05/01},
	date-added = {2025-11-30 14:13:57 -0600},
	date-modified = {2025-11-30 14:13:57 -0600},
	doi = {10.1007/s10107-004-0552-5},
	id = {Nesterov2005},
	isbn = {1436-4646},
	journal = {Mathematical Programming},
	number = {1},
	pages = {127--152},
	title = {Smooth minimization of non-smooth functions},
	ty = {JOUR},
	url = {https://doi.org/10.1007/s10107-004-0552-5},
	volume = {103},
	year = {2005},
	Bdsk-Url-1 = {https://doi.org/10.1007/s10107-004-0552-5}}

@article{JMLR:v23:21-1507,
  author  = {Guy Kornowski and Ohad Shamir},
  title   = {Oracle Complexity in Nonsmooth Nonconvex Optimization},
  journal = {Journal of Machine Learning Research},
  year    = {2022},
  volume  = {23},
  number  = {314},
  pages   = {1--44},
  url     = {http://jmlr.org/papers/v23/21-1507.html}
}

@article{pu2024smoothed,
  title={Smoothed proximal Lagrangian method for nonlinear constrained programs},
  author={Pu, Wenqiang and Sun, Kaizhao and Zhang, Jiawei},
  journal={arXiv preprint arXiv:2408.15047},
  year={2024}
}

@article{huang2025stochastic,
  title={Stochastic Smoothed Primal-Dual Algorithms for Nonconvex Optimization with Linear Inequality Constraints},
  author={Huang, Ruichuan and Zhang, Jiawei and Alacaoglu, Ahmet},
  journal={arXiv preprint arXiv:2504.07607},
  year={2025}
}
\bibliographystyle{tmlr}
\newpage
\appendix

\section{More Experiment Results}\label{moreexperimentresult}
The experiments of AUC Maximization with ROC Fairness Constraints in our paper is run on a GPU server with 10 A30 24G GPUs. Each seed in this experiment takes about 4 hours to complete. The experiment of continual learning with non-forgetting constraints 
is run on two A100 40GB GPUs. Each seed in this setting takes approximately 12 hours to finish.
These runtimes are comparable across all methods since they share identical model architectures and computational setups, 
 % Both the fairness and continual-learning experiments follow a shared parameter configuration across all methods, 
 including the learning-rate schedule, weight decay, batch sizes, and the selections of $|\mathcal{B}_c|$ and $|\mathcal{B}_k|$. We tune the penalty coefficient $\beta$, which allows us to study how different methods reduce constraint violations and how their performances in the objective value change when the constraints are satisfied. This consistent setup provides a fair and controlled comparison of the optimization behavior of all methods.

% We present the statistics of data collected from LIAON400M  in Table \ref{taskdata}. 
% The statistics for BDD100K dataset  in Table \ref{BDD100K}. 
% \begin{table}[h]
%     \centering    
%      \begin{tabular}{ccccc}
%          Task & $|$& Foggy&Overcast&Tunnel\\\hline Size&$|$& 11415&4134&23484\\
%     \end{tabular}
%     \caption{Statistics of Data Collected from LIAON400M for task}
%     \label{taskdata}
% \end{table}

% \begin{table}[h]
%     \centering
%     \begin{tabular}{cccccccc}
%          Weather&$|$& Clear&Snowy&Rainy&Partly Cloudy& Overcast&Foggy\\\hline
%          Dataset&$|$&29865 &4445&4119&3992&7043&57\\\\
%     \end{tabular}

%      \begin{tabular}{ccccccc}
%          Scene & $|$& Hightway&Residential area&City street& Parking lot&Tunnel\\\hline Dataset&$|$& 13952&6458&34862&297&62\\
%     \end{tabular}
%     \caption{Datasets Statistics for BDD100K Dataset}
%     \label{BDD100K}
% \end{table}

We give other experiment results in this Appendix. {First, we compute the minimum singular value $\delta$ of the Jacobian matrix of violating constraint functions in our experiments at different solutions to verify the conditions in Lemma \ref{egivalue}, as shown in the Table \ref{tab:min_singular_values}. For the first experiment, we compute $\sigma$ based on the Adult dataset at the initial model, final model, and two randomly selected intermediate epochs. For our second experiment, we compute $\sigma$ when Target Foggy at both the initial and final models. All results demonstrate that the minimum singular value $\sigma$ of $\partial H(\mathbf{x})$ remains positive, which is consistent with the theoretical guarantee stated in Lemma \ref{egivalue}.
}  
\begin{table}[ht]
\centering

\begin{tabular}{l|cccc}
\hline
Experiment 1 (Adult) & Initial Model & Epoch\_30 & Epoch\_45 & Final Model \\
\hline
Minimum Singular Values $\sigma$ of $\partial H(\mathbf{x})$ & 0.000648 & 0.000112 & 0.000102 & 0.000101 \\
\hline
\end{tabular}

\vspace{1em}

\begin{tabular}{l|cc}
\hline
Experiment 2 (Target foggy) & Initial Model & Final Model \\
\hline
Minimum Singular Values $\sigma$ of $\partial H(\mathbf{x})$ & 24.1038 & 16.3397 \\
\hline
\end{tabular}
%\vspace{1em}
\caption{Minimum singular values $\sigma$ of the Jacobian matrix $\partial H(\mathbf{x})$ at different solution.}
\small
\label{tab:min_singular_values}
\end{table}

For targeting overcast we consider other weather conditions except for foggy as protected tasks due to that there is a lack of foggy data in BDD100k for defining a significant constraint. For the same reason, we consider other scence types except gas station as protected tasks for targeting tunnel. We use the mostly same experiment setting with the case target foggy only difference when tuning $\beta$. We tune $\beta =\{0.01, 0.1, 1, 10,20\}$ for hinge-based penalty method and $\beta =\{ 0.1, 1, 20, 50, 100, 200\}$ for squared hinge penalty method when target overcast. When target tunnel, we  tune $\beta =\{0.01, 0.1, 1, 10,20\}$ for both hinge-based penalty method for squared hinge penalty method.  The training curves of target overcast and tunnel show in Fig. \ref{target overcast} and Fig. \ref{target tunnel}. The  training curves on dataset COMPAS and Chexpert show in Fig. \ref{fig:compas} and Fig. \ref{fig:chexpert} for experiment ROC Fairness Constraints.

\begin{figure*}[tb!]
\centering
    \includegraphics[width=0.22\textwidth]{./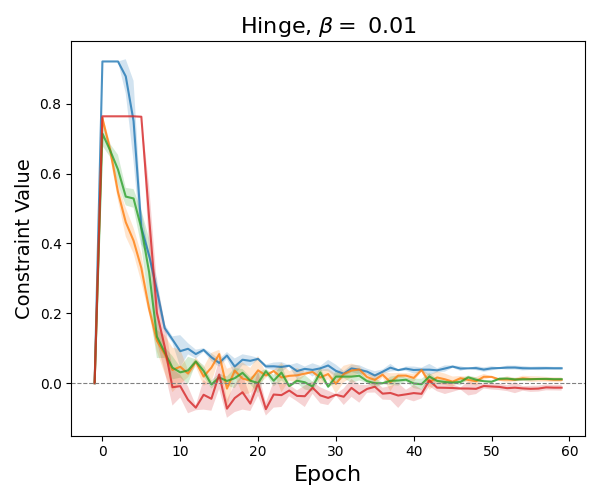} 
   \includegraphics[width=0.22\textwidth]{./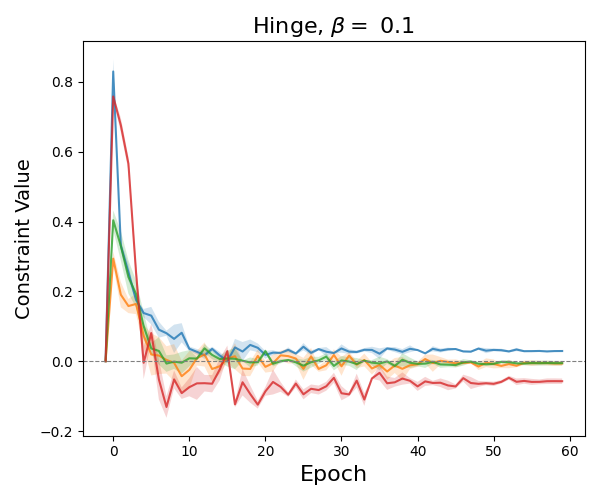} 
    \includegraphics[width=0.22\textwidth]{./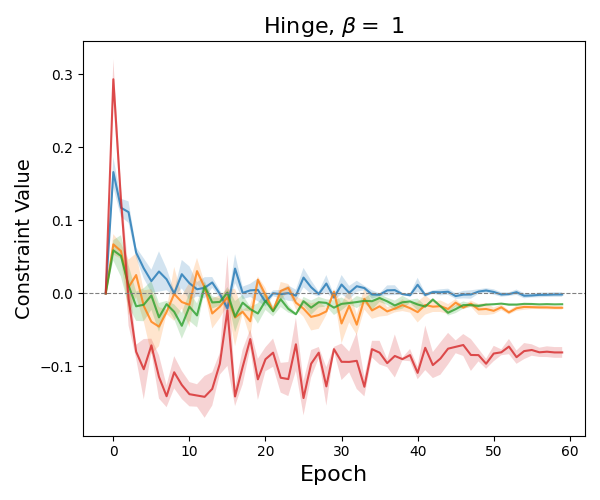} 
   \includegraphics[width=0.32\textwidth]{./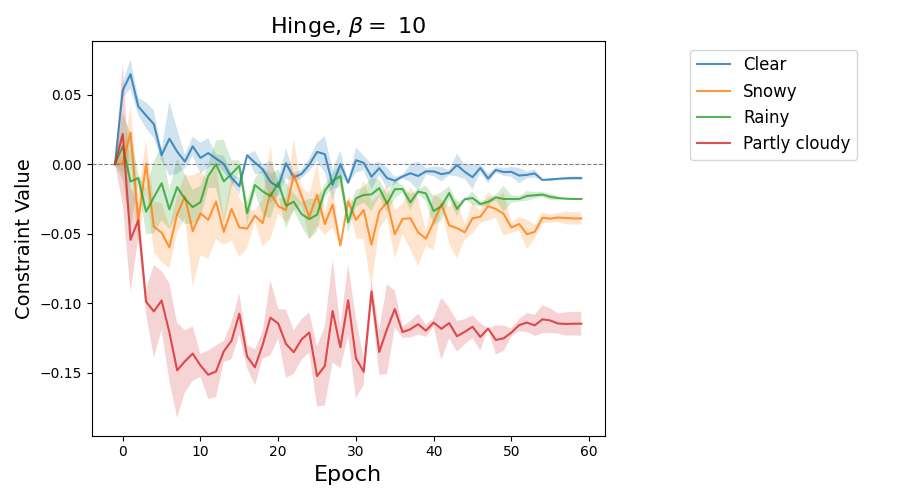}\\
\includegraphics[width=0.22\textwidth]{./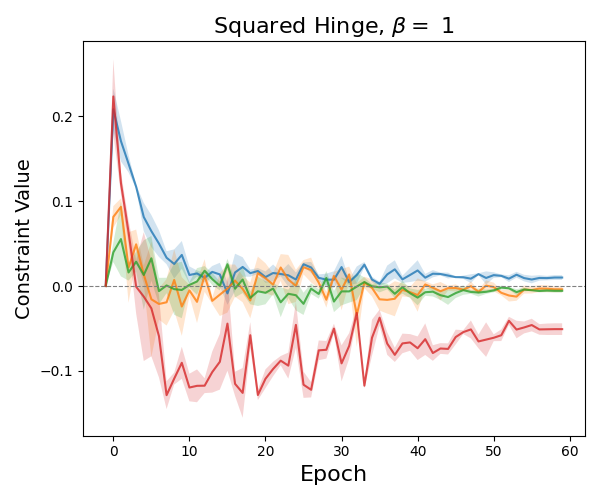} 
    \includegraphics[width=0.22\textwidth]{./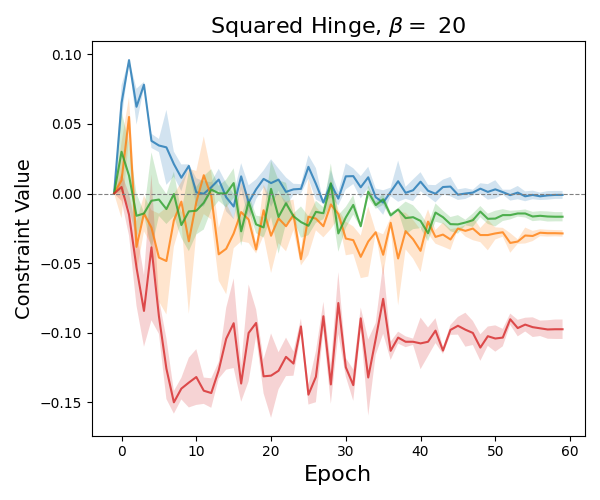} 
\includegraphics[width=0.22\textwidth]{./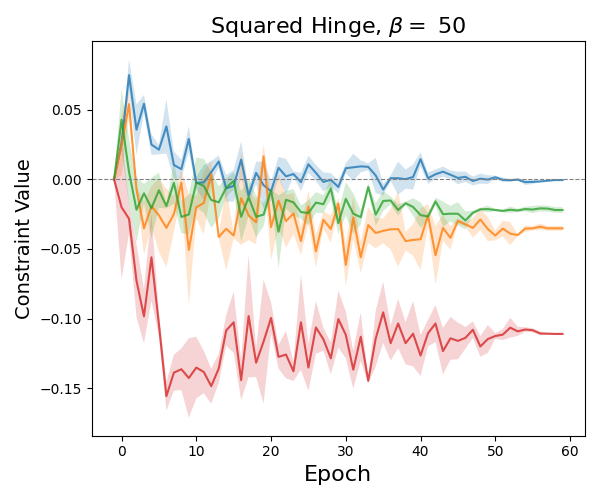} \includegraphics[width=0.31\textwidth]{./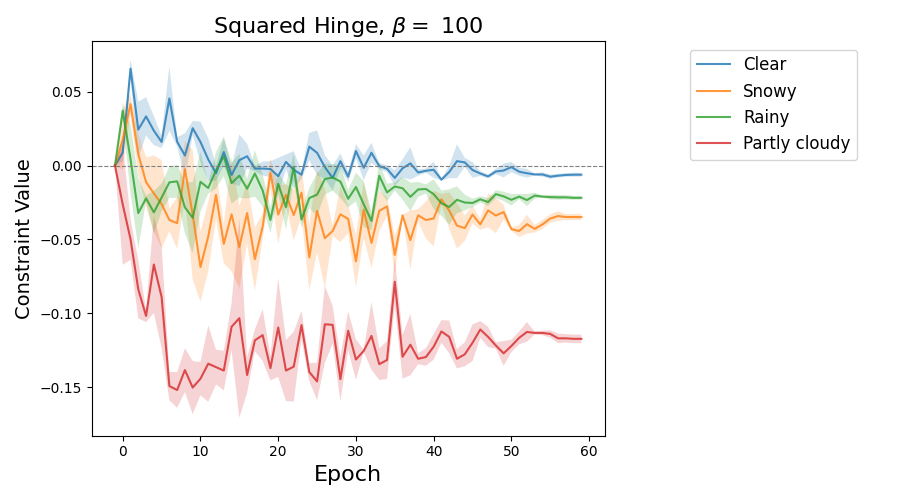} 
\vspace{-0.1in}
\centering
\caption{Training curves of 4 constraint values in zero-one loss of different methods for continual learning with non-forgetting constraints when targeting the overcast class. Top: hinge penalty method with different $\beta$; Bottom: squared-hinge penalty method with different $\beta$.} 
\label{target overcast}
% \vspace{-0.1in}
\end{figure*}

% \begin{figure*}[ht]
% \vskip 0.2in
% \begin{center}
% \setlength{\tabcolsep}{0.5pt}

% % 
%     \includegraphics[width=0.4\textwidth]{./experiment_Visualization/Visualization/targetovercast/overcastaccuracy.png} 
%    \includegraphics[width=0.4\textwidth]{./experiment_Visualization/Visualization/targetovercast/maxvs.png} 

% \caption{Left figure is the comparison of accuracy of Hinge Penalty Method and Quartic Penalty Method when target overcast when satisfy constraints. to be done 
% Right figure is comparison of max $h_k$.}
% \label{Target overcastHingeVSSquare}

% \end{center}
% \vskip -0.2in
% \end{figure*}

\begin{figure*}[tb!]
\centering
    \includegraphics[width=0.22\textwidth]{./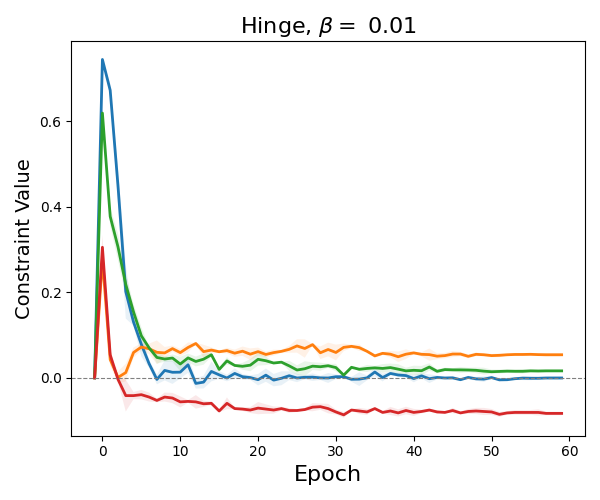} 
   \includegraphics[width=0.22\textwidth]{./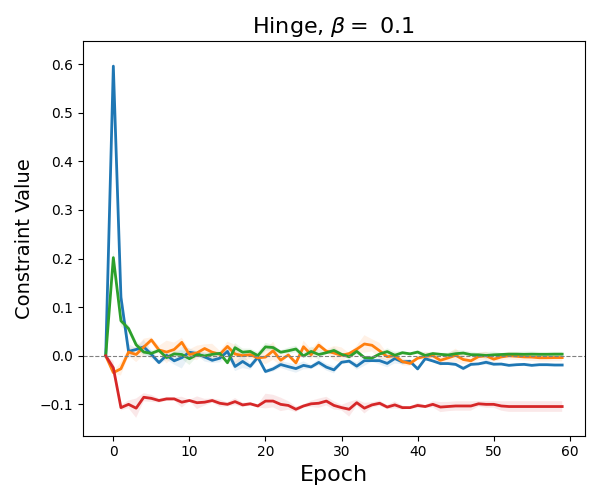} 
    \includegraphics[width=0.22\textwidth]{./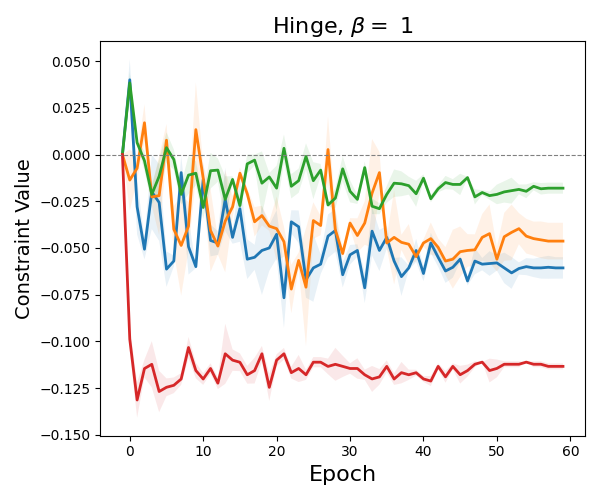} 
   \includegraphics[width=0.32\textwidth]{./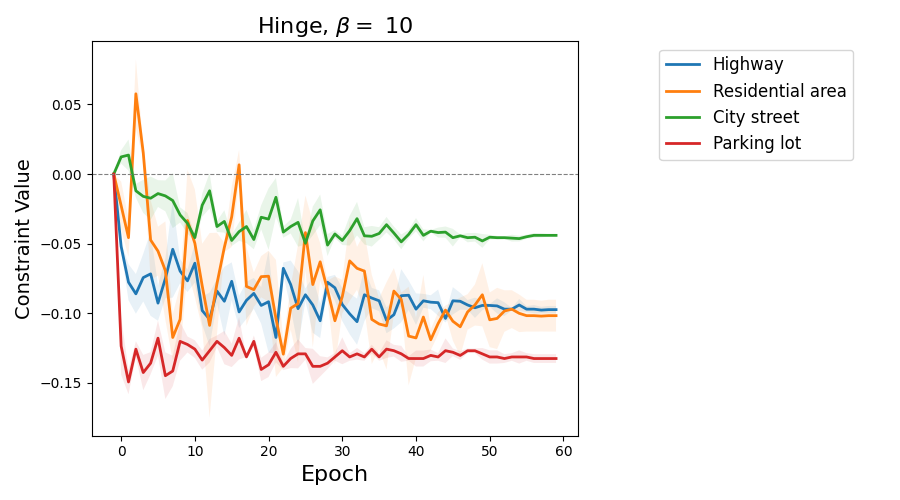}\\
    \includegraphics[width=0.22\textwidth]{./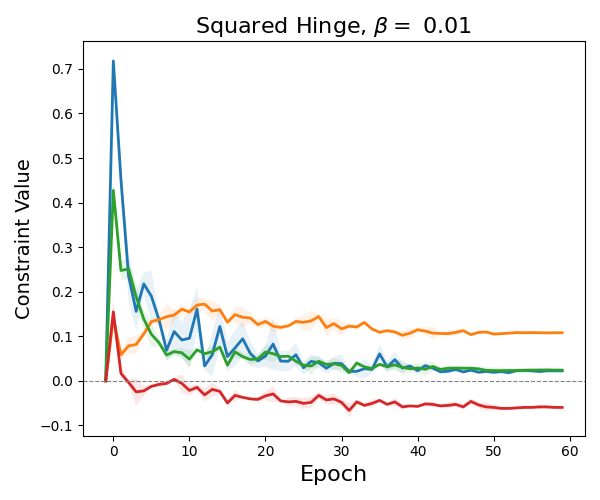} 
   \includegraphics[width=0.22\textwidth]{./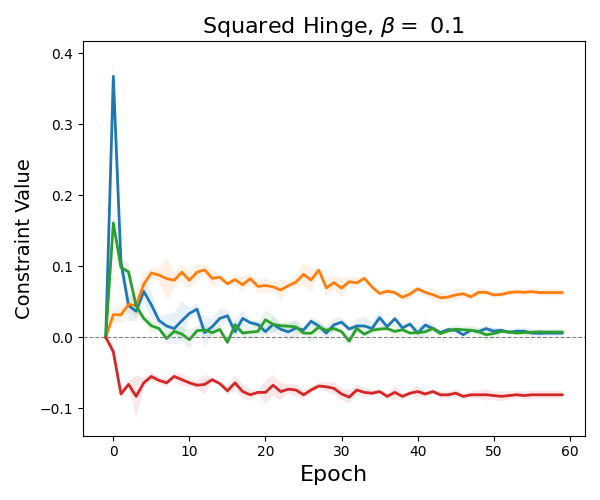} 
    \includegraphics[width=0.22\textwidth]{./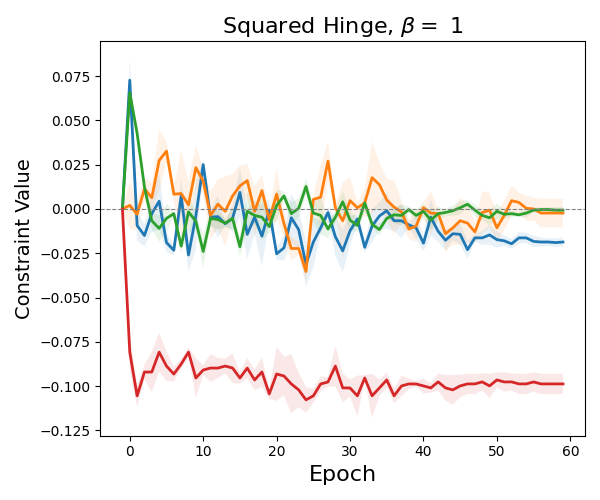} 
\includegraphics[width=0.31\textwidth]{./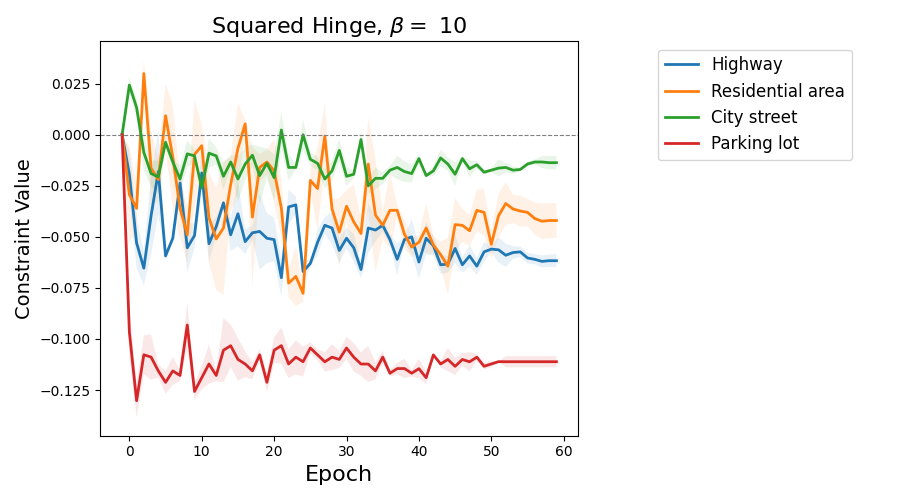} 
\vspace{-0.1in}
\centering
\caption{Training curves of 4 constraint values in zero-one loss of different methods for continual learning with non-forgetting constraints when targeting the tunnel class. Top: hinge penalty method with different $\beta$; Bottom: squared-hinge penalty method with different $\beta$.} 
\label{target tunnel}
% \vspace{-0.1in}
\end{figure*}

\begin{figure*}[tb!]
\centering
    \includegraphics[width=0.22\linewidth]{./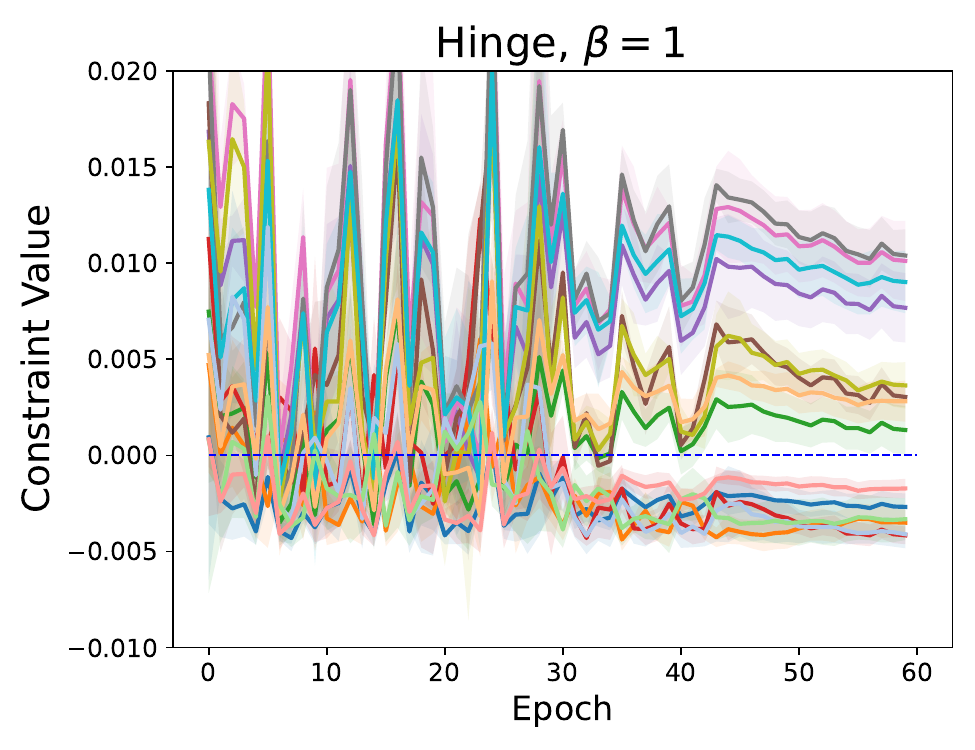}
    \includegraphics[width=0.22\linewidth]{./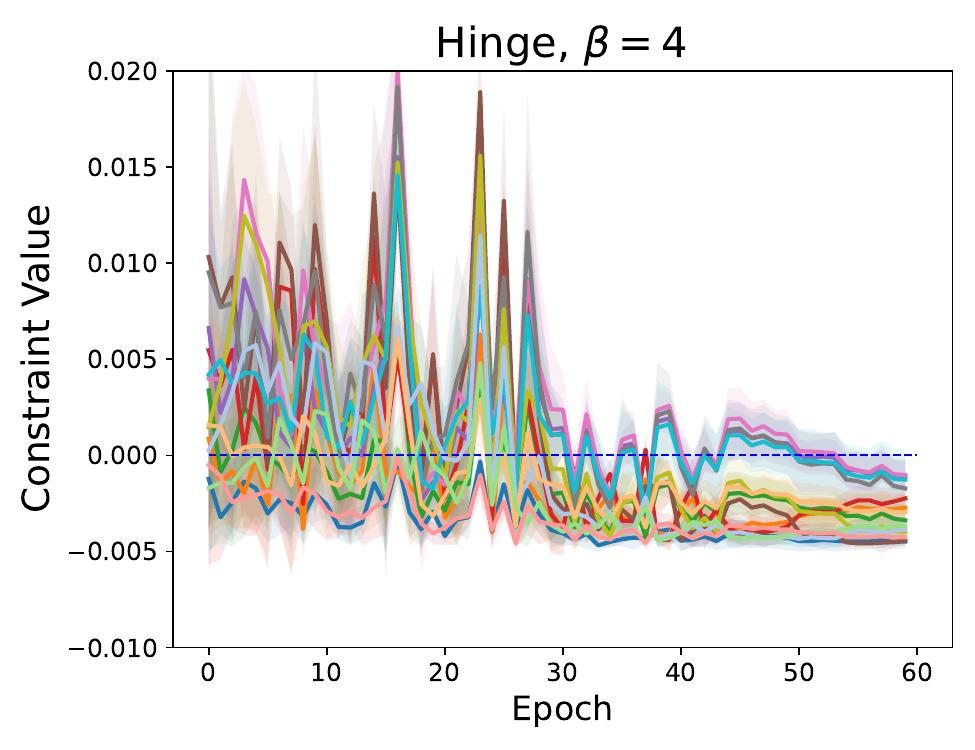}
    \includegraphics[width=0.22\linewidth]{./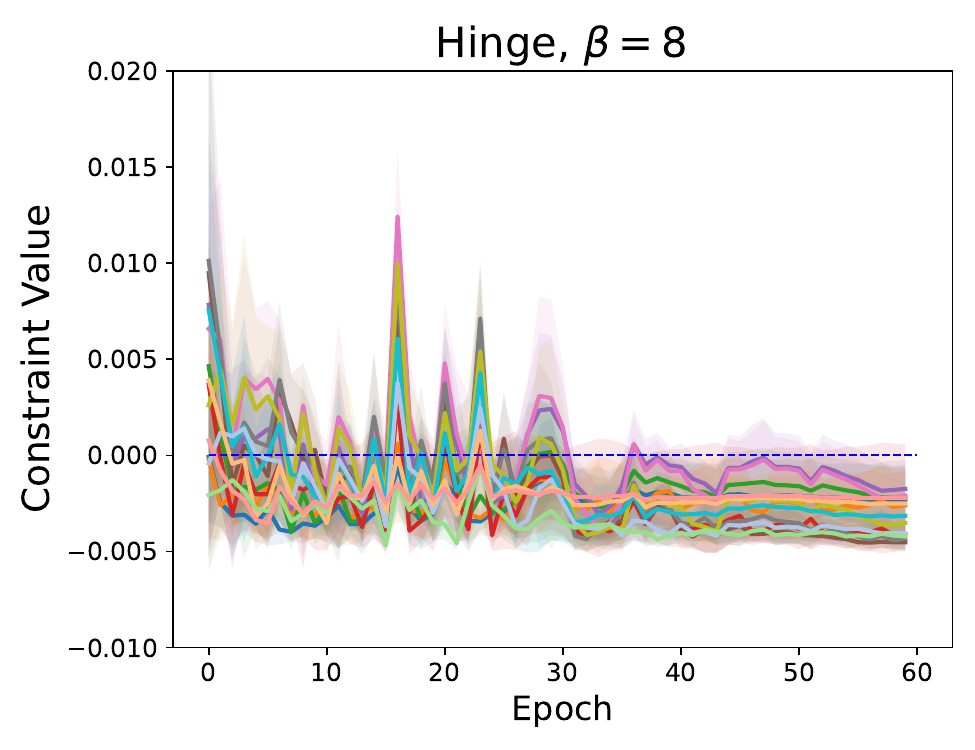}
    \includegraphics[width=0.30\linewidth]{./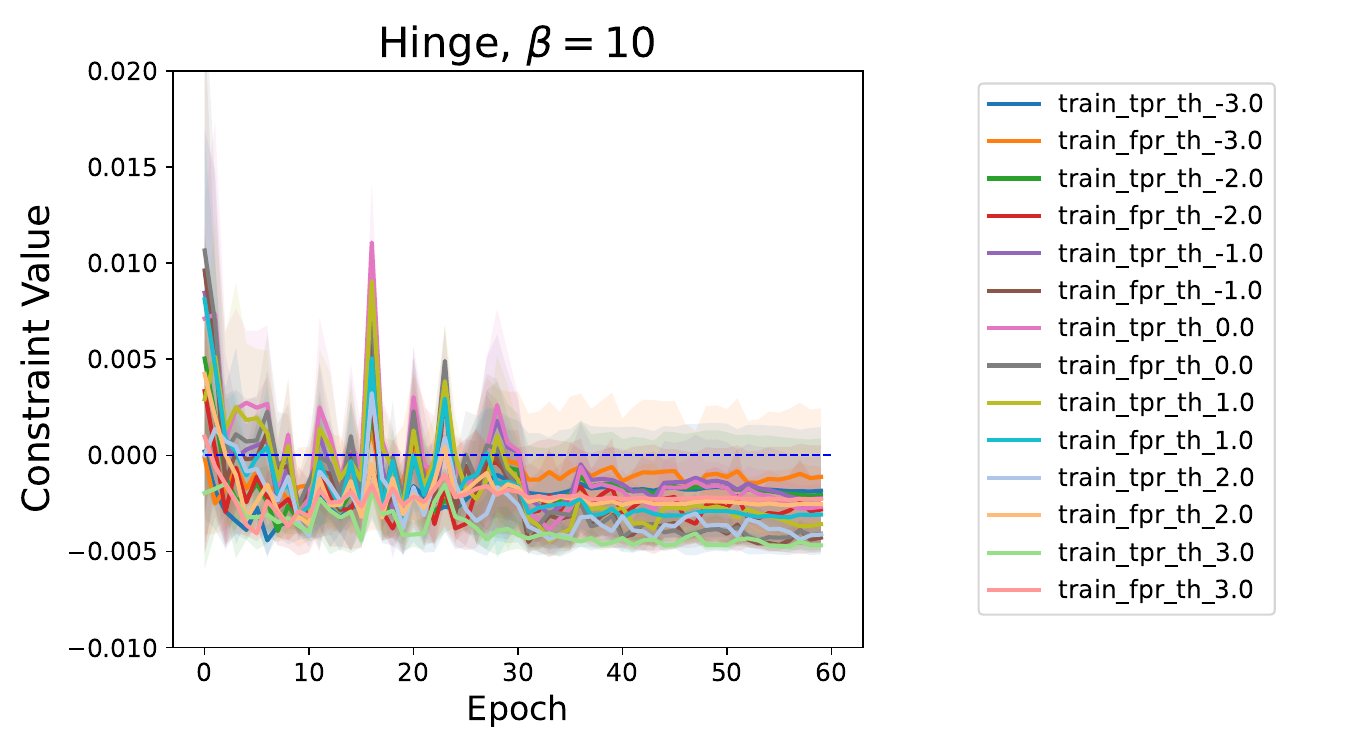} \\
    \includegraphics[width=0.22\linewidth]{./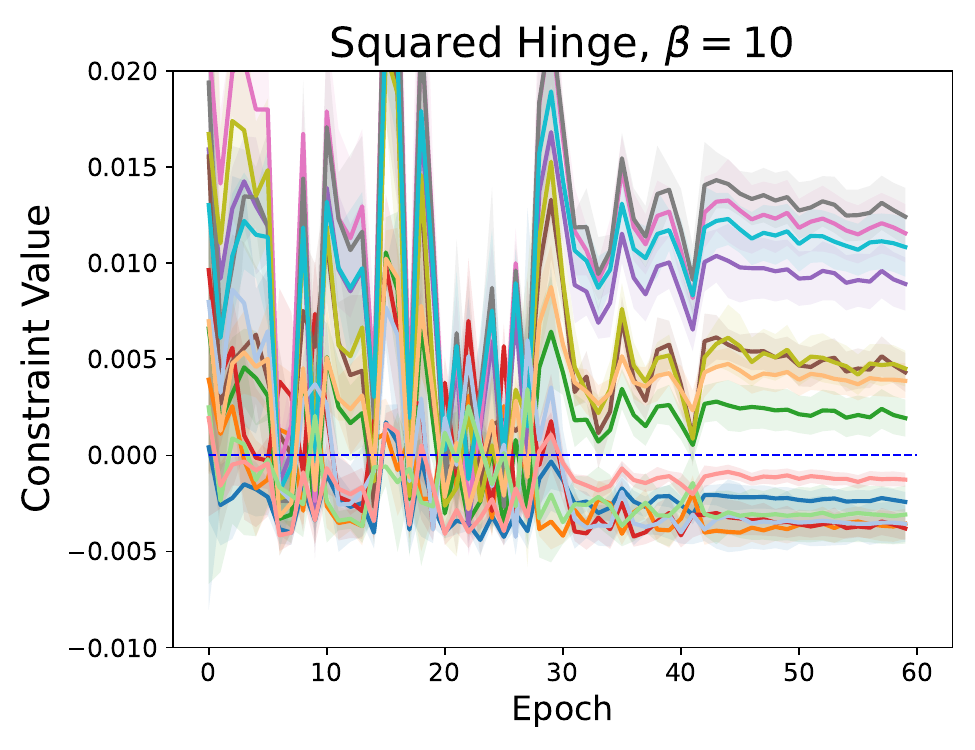}
    \includegraphics[width=0.22\linewidth]{./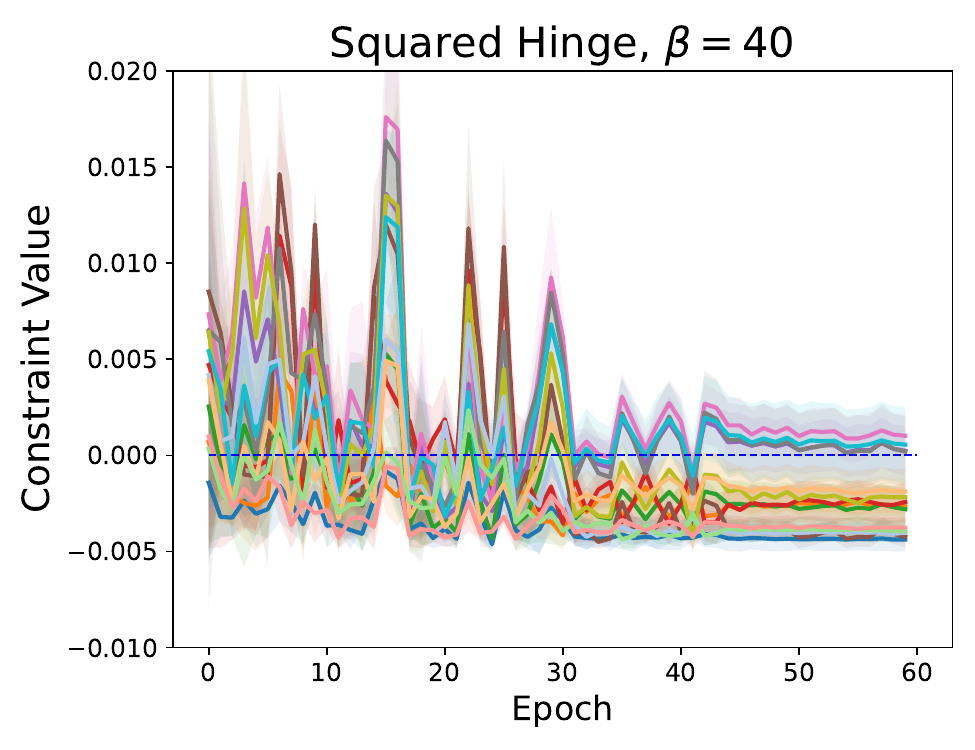}
    \includegraphics[width=0.22\linewidth]{./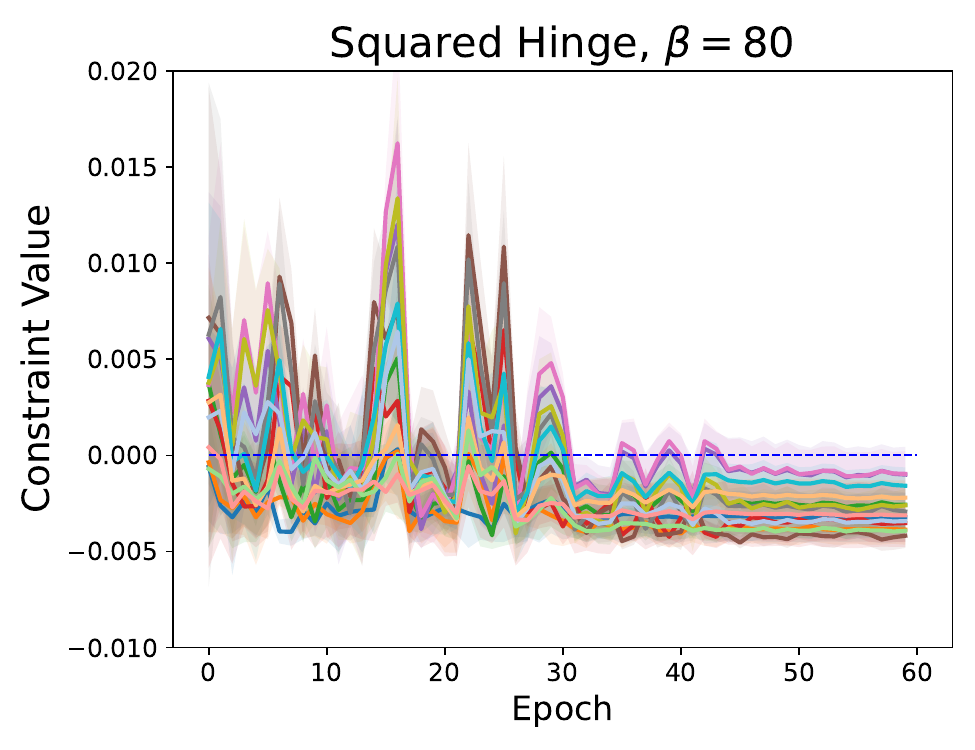}
    \includegraphics[width=0.30\linewidth]{./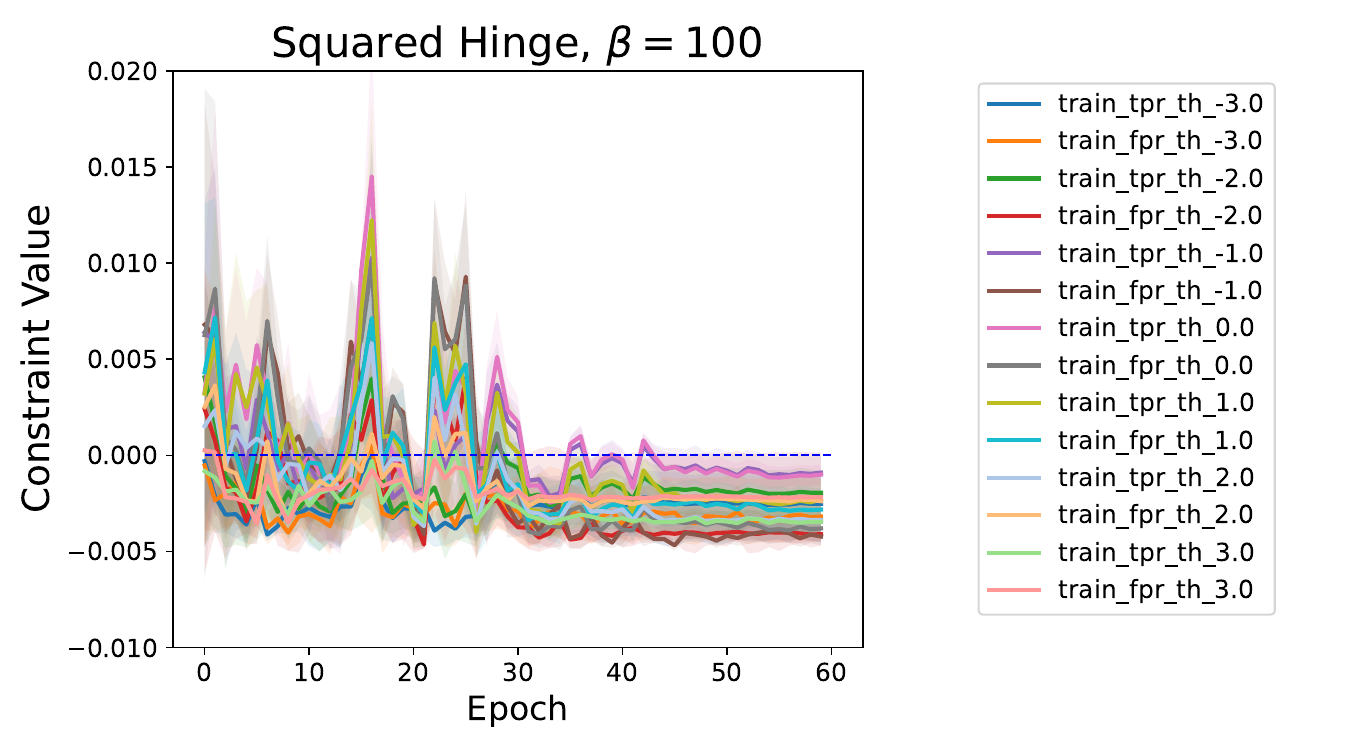} \\
\vspace{-0.1in}
\centering
\caption{Training curves of 15 constraint functions of different methods for fair learning on the COMPAS dataset. Top: hinge-based penalty method with different $\beta$; Bottom: squared-hinge-based penalty method with different $\beta$.} 
\label{fig:compas}
% \vspace{-0.1in}
\end{figure*}

\begin{figure*}[tb!]
\centering
    \includegraphics[width=0.22\linewidth]{./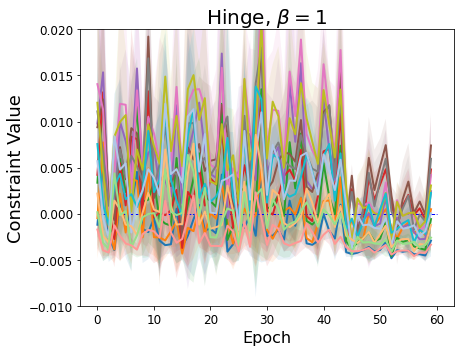}\includegraphics[width=0.22\linewidth]{./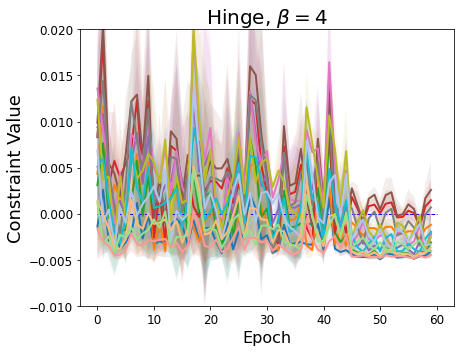}
    \includegraphics[width=0.22\linewidth]{./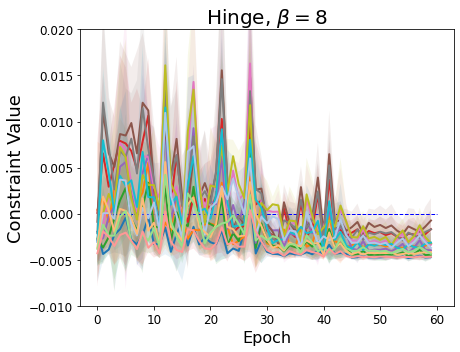} 
    \includegraphics[width=0.30\linewidth]{./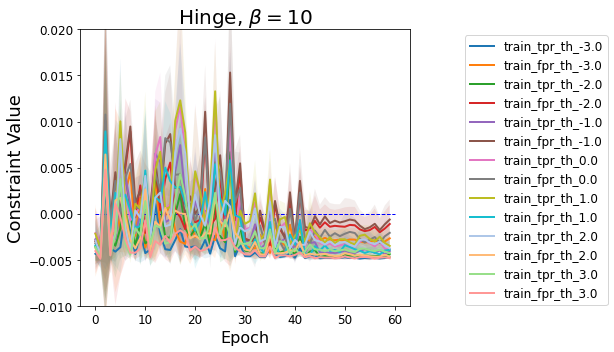}
    \\
    \includegraphics[width=0.22\linewidth]{./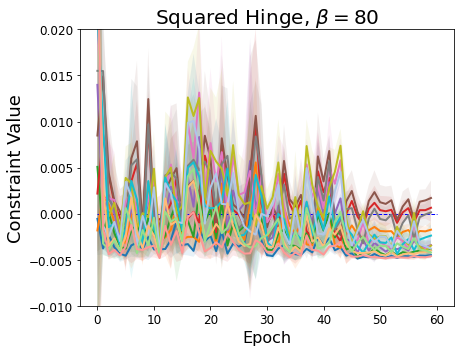}
    \includegraphics[width=0.22\linewidth]{./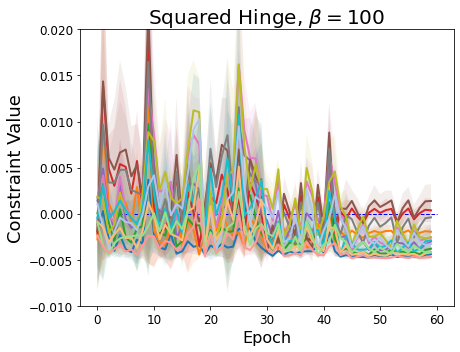}
    \includegraphics[width=0.22\linewidth]{./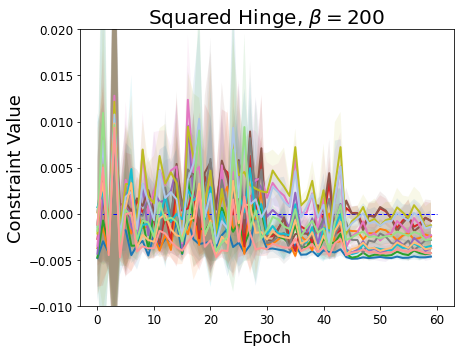}
      \includegraphics[width=0.30\linewidth]{./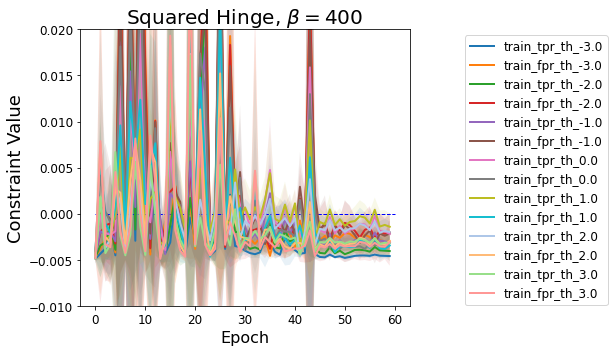}
     \\
\vspace{-0.1in}
\centering
\caption{Training curves of 15 constraint functions of different methods for fair learning on the CheXpert dataset. Top: hinge-based penalty method with different $\beta$; Bottom: squared-hinge-based penalty method with different $\beta$.} 
\label{fig:chexpert}
% \vspace{-0.1in}
\end{figure*}

\section{Technique Lemmas}\label{TechLemma}
%\subsection{Technique Lemmas and Proposition}
To prove our main theorem, we need the following  lemmas. 

The following lemma follows from standard result \citep{davis2019stochastic}. \vspace{-0.05in}
\begin{lemma}\label{lem:mes}
    Given a $\rho$ weakly convex function $\phi$ and $\theta < (\rho)^{-1}$, then the envelope $\phi_\theta$ is smooth with gradient given by $\nabla \phi_{\theta}(\x) = \theta^{-1}(\x-\text{prox}_{\theta\phi}(\x))$, where 
    \begin{align}
    \text{prox}_{\theta \phi}(\x) := \arg\min_{\y} \left\{ \phi(\y) + \frac{1}{2\theta} \| \y - \x \|^2 \right\}.
\end{align}
The smoothness constant of $\phi_\theta$ is $ \frac{2-\theta\rho}{\theta(1-\theta\rho)}$.   In addition, $\text{dist}(0, \partial \phi(\bar{\x}))\leq \|\nabla \phi_{\theta}(\x)\|$. 
\end{lemma}

\begin{lemma}\label{Xi}[Lemma 1 \citep{jiang2022multi}]
   Consider update~(\ref{varianced-reduce_u}). Under Assumptions \ref{Lipschitz}, \ref{ass:h} and \ref{ass:fg}, by setting $\gamma_1^{\prime} = \frac{n-|\mathcal{B}|}{|\mathcal{B}|(1-\gamma_1)}+(1-\gamma_1)$, for $\gamma_1\leq\frac{1}{2}$,  the function value variance $\Xi^{t+1}:=\frac{1}{n}\sum_{i=1}^{n}\|u_{1,i}^{t+1}-g_{i}(\mathbf{x}_{t+1})\|^2$ can be bounded as
   \begin{align}
    \mathbb{E}\left[\Xi^{t+1}\right]\leq (1-\frac{|\mathcal{B}|\gamma_1}{n})\mathbb{E}\left[\Xi^{t}\right]+\frac{8 n L_g^2}{|\mathcal{B}|}\EX[\|\mathbf{x_{t+1}}-\mathbf{x_t}\|^2]
      +\frac{2\gamma_1^2|\mathcal{B}|\sigma_g^2}{n|\mathcal{B}_{1,i}|}.
\end{align}
\end{lemma}

% \begin{lemma}\label{gradient variance of H}
%     The gradient variance of $\Delta_2^t: = \|v_2^t-\nabla H(\mathbf{x}^{t})\|^2$ can be bounded as 
%     \begin{align*}
%         \mathbb{E}\left[\Delta_2^{t+1}\right] &\leq (1 - \theta)\mathbb{E}\left[\Delta_2^t\right] 
% + \textcolor{blue}{\frac{2\beta^2 L_H^2}{\theta}}\mathbb{E}\left[\|\mathbf{x}^{t+1} - \mathbf{x}^t\|^2\right] 
% + 5\theta\beta^2 L_h^2 \tilde{\mathcal{C}}_{\nabla h}^2 \mathbb{E}\left[\Gamma_{t+1}\right]\\
% &+ \frac{3\beta^2 \tilde{\mathcal{C}}_{\nabla h}^2}{m} \mathbb{E}\left[ \sum_{k \in \mathcal{B}_c^{t+1}} \| u_k^{t+1} - u_k^t \|^2 \right]
% + \frac{\textcolor{blue}{\theta^2 \beta^2}\mathcal{C}_h^2 (\sigma_{\nabla h}^2 + L_h^2)}{\min\left(\left|\mathcal{B}_c\right|, \left|\mathcal{B}_k\right|\right)},\numberthis \label{Delta_2}
%     \end{align*}
%     where with $ \Gamma_{t+1} := \frac{1}{m}\sum_{k\in \mathcal{B}_c^{t+1}}^{}\| u_k^{t+1} - h_k(\mathbf{x}^{t+1})\|^2.$

% \end{lemma}

\begin{lemma}\label{Gamma}[Lemma 1 \citep{jiang2022multi}]
Consider the update Line 6 in Algorithm \ref{alg:1}. Under Assumptions \ref{Lipschitz} and \ref{ass:h}, by setting $\gamma_2^{\prime} = \frac{m-|\mathcal{B}_c|}{|\mathcal{B}_c|(1-\gamma_2)}+(1-\gamma_2)$, for $\gamma_2\leq\frac{1}{2}$, the function value variance $\Gamma^{t+1}:=\frac{1}{m}\sum_{k=1}^{m}\|u_{2,k}^{t+1}-h_{k}(\mathbf{x}_{t+1})\|^2$ can be bounded as
    \begin{align}
\mathbb{E}\left[\Gamma_{t+1}\right]\leq (1-\frac{|\mathcal{B}_c|\gamma_2}{m})\EX[\Gamma_{t}]+\frac{8 m L_h^2}{|\mathcal{B}_c|}\EX[\|\mathbf{x_{t+1}}-\mathbf{x_t}\|^2]
+\frac{2\gamma_1^2|\mathcal{B}_c|\sigma_h^2}{m|\mathcal{B}_{2,k}|}.
\end{align}
\end{lemma}
From  the analysis of Lemma 4.5 in \cite{hu2024non}, the following two inequalities hold true based on Lemma \ref{Xi} and Lemma \ref{Gamma}. For simplicity, we use a constant $M^2\ge \max\{2L_f^2L_g^2+2\beta^2L_h^2, 2\sigma_f^2+2L_F^2+2\beta^2L_h^2\}$. 
\begin{align*}
    \EX[\frac{1}{m}\sum_{k=1 }^m\|{h_{k}}(\mathbf{x}_t)-u_{2,k}^{t}\|]\leq (1-\frac{|\mathcal{B}_c|\gamma_2}{2m})^t\frac{1}{m}\sum_{k=1}^{m}\|u_{2,k}^{0}-h_{k}(\mathbf{x}_{0})\|+\frac{4 m  L_h\eta M}{|\mathcal{B}_c|\gamma_2^{1/2}}
+\frac{2\gamma_2^{1/2}\sigma_h}{|\mathcal{B}_{2,k}|^{1/2}},\numberthis \label{ex1h}
\end{align*}and
\begin{align*}
    \EX[\frac{1}{n}\sum_{i=1 }^{n}\|g_{i}(\mathbf{x}_t)-u_{1,i}^{t}\|]\leq (1-\frac{|\mathcal{B}|\gamma_1}{2n})^t\frac{1}{n}\sum_{i=1}^{n}\|u_{1,i}^{0}-g_{i}(\mathbf{x}_{0})\|+\frac{4 n L_gM\eta }{|\mathcal{B}|\gamma_1^{1/2}}
      +\frac{2\gamma_1^{1/2}\sigma_g}{|\mathcal{B}_{1,i}|^{1/2}}.\numberthis\label{ex1f}
\end{align*}

\begin{proof}[Proof of Lemma~\ref{lemma:phi_weakly}]
    Notice that we have assumed $F(\cdot)$ to be $\rho_0$-weakly convex in Assumption~\ref{Lipschitz}. By the definition of weak convexity, it suffices to show $[h_k(\x)]_+$ to be $\rho_1$-weakly convex for all $k=1,\dots,m$.

    Let $[h_k(x)]'_+$ denote any element in $\partial [h_k(x)]_+$. 
    Given any $k\in \{1,\dots,m\}$ and $x,x'$, following from the convexity of $[\cdot]_+$, we have
    \begin{equation}
        \begin{aligned}
            [h_k(x)]_+ - [h_k(x')]_+ &\geq  \partial [h_k(x')]_+ (h(x) - h(x')) \\
            &\stackrel{(a)}{\geq} \partial [h_k(x')]_+ ( \langle \partial h_k(x'), x-x'\rangle - \frac{\rho_1}{2}\|x-x'\|^2)\\
            &\stackrel{(b)}{\geq}  \langle \partial h_k(x')\partial [h_k(x')]_+, x-x'\rangle - \frac{\rho_1}{2}\|x-x'\|^2,
        \end{aligned}
    \end{equation}
    where $(a)$ follows from the fact $[h_k(x')]'_+ \geq 0$ for all $x'$ and the $\rho_1$-weak convexity of $h_k(\cdot)$, and $(b)$ follows from the fact $[h_k(x')]'_+ \leq 1$.

    To show the $L$-Lipschitz continuity of $\Phi(\cdot)$, we utilize the $L_F$ and $L_h$-Lipschitz continuity of $F(\cdot)$ and $h_k(\cdot)$ to obtain that for any $x,x'$ we have
    \begin{equation}
        \begin{aligned}
            \|\Phi(x) - \Phi(x')\|&\leq \|F(x) -  F(x')\|+\frac{\beta}{m}\sum_{k=1}^m\|[h_k(x)]_+ - [h_k(x')]_+\|\\
            &\leq L_F\|x-x'\|+\frac{\beta}{m}\sum_{k=1}^m\|h_k(x) -h_k(x')\|\\
            &\leq L_F\|x-x'\|+\frac{\beta}{m}\sum_{k=1}^m L_h\|x -x'\|\\
            &\leq (L_F+\beta L_h)\|x-x'\|.
        \end{aligned}
    \end{equation}
\end{proof}

\begin{proof}[Proof of Lemma~\ref{sufficient_conditions}]
Our goal is to prove that there exists a constant $\delta>0$ for any $\x$ such that $h(\x)>0$ (violating the constraint), we have $\text{dist}(0, \partial h(\x))\geq \delta$. 

By conditions (i) and (ii) and the fact that $h(x)>0$, we have 
      \begin{align*}
          c<h(\x)- \min_\y h(\y)\leq \frac{\text{dist}(0, \partial h(\x))^2}{2\mu}. 
      \end{align*}
This means $\|\text{dist}(0, \partial h(\x))|\geq\sqrt{2c\mu}=:\delta$.
\end{proof}
\begin{proof}[Proof of Lemma~\ref{egivalue}]
Let   $ \boldsymbol{\xi} = (\xi_1, \ldots, \xi_m )^{T}$, where $\xi_k$ defined as
   \begin{align*}
        \xi_k  = \begin{cases} 
1 & \text{if } h_k({\x}) > 0, \\
[0,1] & \text{if } h_k({\x}) = 0, \\ 
0 & \text{if }  h_k({\x}) < 0,
\end{cases}
\quad \in [h_k({\x})]'_+. 
    \end{align*}
    For any $\x$ and any sub-Jacobian $G(\x)\in \partial H(\x)$, let $g_k(\x)^\top$ denote
its $k$-th row (so $g_k(\x)\in \partial h_k(\x)$).
\[
  \Bigl\{ \sum_{k=1}^m \xi_k g_k(\x) \Bigr\}
  \;=\;
  \{ G(\x)^\top \boldsymbol{\xi} \}
  \;\subset\;
  \sum_{k=1}^m \partial [h_k(\x)]_+ .
\]
Let $\mathcal{V} := \{\x : \max_k h_k(\x) > 0\}$. For any $\x\in\mathcal{V}$ we have
\begin{align*}
  \operatorname{dist}\!\left(0,\frac{1}{m}\sum_{k=1}^m \partial [h_k(\x)]_+ \right)
  \;\ge\;
  \left\| \frac{1}{m} G(\x)^\top \boldsymbol{\xi} \right\| 
  \;\ge\;
  \frac{1}{m}\,\lambda_{\min}\bigl(G(\x)\bigr)\,\|\boldsymbol{\xi}\|
  \;\ge\;
  \frac{1}{m}\,\lambda_{\min}\bigl(G(\x)\bigr).
\end{align*}
By the assumption that $\text{dist}(0, \lambda_{\min}(\partial H(\x)))\geq\sigma>0, \forall \x \text{ such that } \max_k h_k(\x)>0$, we have
\[
\lambda_{\min}\bigl(G(\x)\bigr) \;\ge\; \sigma
  \quad \forall\, G(\x)\in\partial H(\x).
\]
Therefore, for all $\x\in\mathcal{V}$,
\[
  \operatorname{dist}\!\left(0,\frac{1}{m}\sum_{k=1}^m \partial [h_k(\x)]_+ \right)
  \;\ge\; \frac{\sigma}{m}
\]
%     Then $$ \left\{ \sum_{k=1}^m \xi_k \nabla h_k(\x) \right\} = \left\{ \nabla H(\x)^\top \boldsymbol{\xi}
% \right\}\subseteq\sum_{k=1}^{m}\partial [h_k(\mathbf{\x})]_+,$$ 
%     where $\nabla H(\x) =(\nabla h_1(\x), \ldots, \nabla h_k(\x))^T.$ 
%     Then for $\forall\x\in\mathcal V $, where $\mathcal V=\{\x: \max_k h_k(\x)> 0\},$
% \[
% \operatorname{dist}\left( 0, \frac{1}{m} \sum_{k=1}^m \partial [h_k(\x)]_+ \right) \geq \left\| \frac{1}{m} \nabla H(\x)^\top \boldsymbol{\xi} \right\|\ge \frac{1}{m}\lambda_{\min}(\nabla H(\x))\|\boldsymbol{\xi}\|\ge\frac{\sigma}{m} .
% \]
\end{proof}
\section{Proof of Theorem \ref{nonsmoothweakl}}\label{Proof of setting1}
\begin{proof}
      The objective function
\begin{align*}
    \min _{\mathbf{x}} \Phi(\mathbf{x}): = \underbrace{ \EX_{\zeta}[f(\mathbf{x};\zeta)]}_{F(\mathbf{x})} + \underbrace{\frac{\beta}{m}\sum_{k=1}^m  [h_k(\x)]_+}_{H(\mathbf{x})}\numberthis .
\end{align*}
%Note that  $f$ and $h_k$ are non-smooth weakly-convex. 
% For simplicity, we use $L_{\mathfrak{f}}$ to represent the Lipchitz continuity of function $[\cdot]_+$ and $\rho_{\mathfrak{f}}$ to represent the weakly convexity of function $[\cdot]_+$.
    For simplicity, denote $\bar{\mathbf{x}}_t:=\text{prox}_{\Phi/\Bar{\rho}}(\mathbf{x}_t)$. Consider change in the Moreau Envelope
\begin{align*}
    \mathbb{E}_{t}[\Phi_{1/\Bar{\rho}}(\mathbf{x}_{t+1})] &= \mathbb{E}_{t}[\min_{\tilde{\mathbf{x}}}\Phi(\tilde{\mathbf{x}})+\frac{\Bar{\rho}}{2}\|\tilde{\mathbf{x}}-\mathbf{x}_{t+1}\|^2]\\
    &\leq \mathbb{E}_{t}[\Phi(\Bar{\mathbf{x}}_t)+\frac{\Bar{\rho}}{2}\|\Bar{\mathbf{x}}_t-\mathbf{x}_{t+1}\|^2]\\
   &=\mathbb{E}_{t}[\Phi(\Bar{\mathbf{x}}_t)+\frac{\Bar{\rho}}{2}\|\Bar{\mathbf{x}}_t-(\mathbf{x}_{t}-\eta  (G_1^{t} + G_2^{t}))\|^2]\\
   &\leq \Phi(\Bar{\mathbf{x}}_t)+\frac{\Bar{\rho}}{2}\|\Bar{\mathbf{x}}_t-\mathbf{x}_{t}\|^2+\Bar{\rho}\mathbb{E}_t[\eta \langle \Bar{\mathbf{x}}_t-\mathbf{x}_{t}, G_1^{t} + G_2^{t}\rangle]+\eta^2\frac{\Bar{\rho}M^2}{2}\\
   &= \Phi_{1/\Bar{\rho}}(\mathbf{x}_t)+\Bar{\rho}\eta \langle \Bar{\mathbf{x}}_t-\mathbf{x}_{t},\mathbb{E}_t[ G_1^{t}] \rangle +\Bar{\rho}\eta \langle \Bar{\mathbf{x}}_t-\mathbf{x}_{t},\mathbb{E}_t[ G_2^{t}]\rangle+\eta^2\frac{\Bar{\rho}M^2}{2}, \numberthis \label{general_Phi_rho}
\end{align*}
where the second inequality uses the bound of $\mathbb{E}_t[\|G_1^t+G_2^t\|^2]$, which follows from the Lipchitz continuity and bounded variance, denoted by $M^2\ge2\sigma_f^2+2L_F^2+2\beta^2L_h^2$.  Here, 
\begin{align*}
    \mathbb{E}_t[ G_1^{t}] \in \partial F(\mathbf{x}_{t} ),~~~~
      \mathbb{E}_t[ G_2^{t}]\in \frac{\beta}{m}\sum_{k=1}^m\partial h_k(\mathbf{x}_{t})[u_{2,k}^{t}]'_+.
\end{align*}

Next we give the bound of $\langle \bar{\mathbf{x}}_t-\mathbf{x}_{t},\mathbb{E}_t[ G_1^{t}]\rangle$ and $ \langle \bar{\mathbf{x}}_t-\mathbf{x}_{t},\mathbb{E}_t[ G_2^{t}]\rangle$. To this end, first we give the bound of $\langle \bar{\mathbf{x}}_t-\mathbf{x}_{t},\mathbb{E}_t[ G_1^{t}]\rangle$. Since $F$ is $\rho_0$-weakly convex, we have
\begin{align*}
    F(\bar{\mathbf{x}}_t)-F(\mathbf{x}_t)\ge \partial F(\mathbf{x}_t)^{\top}(\bar{\mathbf{x}}_t-\mathbf{x}_t)-\frac{\rho_{0}}{2}\|\bar{\mathbf{x}}_t-\mathbf{x}_t\|^2.
\end{align*}
Then it follows
\begin{align*}
 \partial F(\mathbf{x}_{t})^{\top}(\bar{\mathbf{x}}_t-\mathbf{x}_t)\leq  F(\bar{\mathbf{x}}_t)-F(\mathbf{x}_t)+\frac{\rho_{0}}{2}\|\bar{\mathbf{x}}_t-\mathbf{x}_t\|^2.\numberthis \label{EXfG}
\end{align*}
Next we bound $ \langle \bar{\mathbf{x}}_t-\mathbf{x}_{t},\mathbb{E}_t[ G_2^{t}]\rangle$. For given $k\in \{1,\ldots,m\}$,  we get
\begin{align*}
    &[{h_{k}}(\Bar{\mathbf{x}}_t)]_{+}-[u_{2,k}^{t}]_+\ge  [u_{2,k}^{t}]'_+({h_{k}}(\Bar{\mathbf{x}}_t)-u_{2,k}^{t})\\
    &\ge [u_{2,k}^{t}]'_+\left[{h_{k}}(\mathbf{x}_t)-u_{2,k}^{t}+\partial{h_{k}}(\mathbf{x}_t)^{\top}(\Bar{\mathbf{x}}_t-\mathbf{x}_t)-\frac{\rho_{1}}{2}\|\Bar{\mathbf{x}}_t-\mathbf{x}_t\|^2\right]\\
    &\ge [u_{2,k}^{t}]'_+({h_{k}}(\mathbf{x}_t)-u_{2,k}^{t})+[u_{2,k}^{t}]'_+\partial{h_{k}}(\mathbf{x}_t)^{\top}(\Bar{\mathbf{x}}_t-\mathbf{x}_t)-\frac{\rho_{1}}{2}\|\Bar{\mathbf{x}}_t-\mathbf{x}_t\|^2,
\end{align*}
where the first inequality uses the  convexity of function $[\cdot]_+$, the second inequality uses the fact $[u_{2,k}^{t}]'_+\geq 0$ and the $\rho_1$-weak convexity of $h_k(\cdot)$, and the last inequality uses the fact that $0\leq[u_{2,k}^{t}]'_+\leq 1$. 
Then it follows
\begin{align*}
       &\beta\frac{1}{m}\sum_{k=1}^m [u_{2,k}^{t}]'_+\partial{h_{k}}(\mathbf{x}_t)^{\top}(\Bar{\mathbf{x}}_t-\mathbf{x}_t)\\
       &\leq\frac{\beta}{m}\sum_{k=1 }^{m}\Big[[{h_{k}}(\Bar{\mathbf{x}}_t)]_{+}-[u_{2,k}^{t}]_+-[u_{2,k}^{t}]'_+({h_{k}}(\mathbf{x}_t)-u_{2,k}^{t})+\frac{\rho_{1}}{2}\|\Bar{\mathbf{x}}_t-\mathbf{x}_t\|^2\Big].\numberthis \label{EXhG}
\end{align*}

Adding above two estimation \eqref{EXfG} and \eqref{EXhG} back to \eqref{general_Phi_rho}, we have
\begin{align*}
    \mathbb{E}_{t}[\Phi_{1/\Bar{\rho}}(\mathbf{x}_{t+1})] 
   &\leq  \Phi_{1/\Bar{\rho}} (\mathbf{x}_t)+\Bar{\rho}\eta \langle \bar{\mathbf{x}}_t-\mathbf{x}_{t},\mathbb{E}_t[ G_1^{t}] \rangle +\Bar{\rho}\eta \langle \bar{\mathbf{x}}_t-\mathbf{x}_{t},\mathbb{E}_t[ G_2^{t}]\rangle+\eta^2\frac{\Bar{\rho}M^2}{2}\\
   &\leq \Phi_{1/\Bar{\rho}}(\mathbf{x}_t)+\eta^2\frac{\Bar{\rho}M^2}{2}+\Bar{\rho}\eta\Big[ F(\bar{\mathbf{x}}_t)-F(\mathbf{x}_t)+\frac{\rho_{0}}{2}\|\bar{\mathbf{x}}_t-\mathbf{x}_t\|^2\Big]\\
   &+\Bar{\rho}\eta\frac{\beta}{m}\sum_{k=1 }^{m}\Big[[{h_{k}}(\Bar{\mathbf{x}}_t)]_{+}-[u_{2,k}^{t}]_+-[u_{2,k}^{t}]'_+({h_{k}}(\mathbf{x}_t)-u_{2,k}^{t})+\frac{\rho_{1}}{2}\|\Bar{\mathbf{x}}_t-\mathbf{x}_t\|^2\Big]\\
   &=\Phi_{1/\Bar{\rho}}(\mathbf{x}_t)+\eta^2\frac{\Bar{\rho}M^2}{2}+\Bar{\rho}\eta\Big[ F(\bar{\mathbf{x}}_t)-F(\mathbf{x}_t)+\frac{\rho_{0}}{2}\|\bar{\mathbf{x}}_t-\mathbf{x}_t\|^2\Big]\\
   &+\Bar{\rho}\eta\frac{\beta}{m}\sum_{k=1 }^{m}\Big[[{h_{k}}(\Bar{\mathbf{x}}_t)]_{+}-[{h_{k}}(\mathbf{x}_t)]_{+}+[{h_{k}}(\mathbf{x}_t)]_{+}-[u_{2,k}^{t}]_+-[u_{2,k}^{t}]'_+({h_{k}}(\mathbf{x}_t)-u_{2,k}^{t})+\frac{\rho_{1}}{2}\|\Bar{\mathbf{x}}_t-\mathbf{x}_t\|^2\Big].\numberthis \label{emPhi_rho}
   %     &\leq \Phi_{1/\Bar{\rho}}(\mathbf{x}^t)+\eta^2\frac{\Bar{\rho}M^2}{2}+\Bar{\rho}\eta(\frac{\rho_{0}}{2}+\frac{\rho_{1}L_{\mathfrak{f}}}{2}+\rho_{\mathfrak{f}}L_{h}^2)\|\bar{\mathbf{x}}^t-\mathbf{x}^t\|^2+\Bar{\rho}\eta (F(\bar{\mathbf{x}}^t)-F(\mathbf{x}^t))\\
   % &+\Bar{\rho}\eta\frac{\beta}{m}\sum_{k=1 }^{m}\Big[\mathfrak{f}_{k}({h_{k}}(\bar{\mathbf{x}}^t))-\mathfrak{f}_{k}(u_{2, k}^{t})-\partial \mathfrak{f}_{k}(u_{2, k}^{t})^{\top}({h_{k}}(\mathbf{x}^t)-u_{2, k}^{t})+\rho_{\mathfrak{f}}\|{h_{k}}(\mathbf{x}^t)-u_{2, k}^{t}\|^2\Big].\numberthis \label{emPhi_rho}
\end{align*}
By Lemma \ref{lemma:phi_weakly},  the function $\Phi$ is $C$-weakly convex. We have $(\bar{\rho}-C)-$strong convexity of $\mathbf{x}\mapsto \Phi(\mathbf{x})+\frac{\bar{\rho}}{2}\|\mathbf{x}_t-\mathbf{x}\|^2$
\begin{align*}
    \Phi(\mathbf{\bar{x}}_t)-\Phi(\mathbf{x}_t) &= \Big(\Phi(\mathbf{\bar{x}}_t)+\frac{\bar{\rho}}{2}\|\mathbf{x}_t-\bar{\mathbf{x}}_t\|^2\Big)-\Big(\Phi(\mathbf{{x}}_t)+\frac{\bar{\rho}}{2}\|\mathbf{x}_t-{\mathbf{x}}_t\|^2\Big)-\frac{\bar{\rho}}{2}\|\mathbf{\bar{x}}_t-\mathbf{x}_t\|^2\\
    &\leq (\frac{C}{2}-\bar{\rho})\|\mathbf{\bar{x}}_t-\mathbf{x}_t\|^2.
\end{align*}
Then we get
\begin{align*}
    & F(\bar{\mathbf{x}}_t)-F(\mathbf{x}_t)+\beta \frac{1}{m}\sum_{k=1}^{m}([{h_{k}}(\Bar{\mathbf{x}}_t)]_{+}-[{h_{k}}(\mathbf{x}_t)]_{+})=\Phi(\mathbf{\bar{x}}_t)-\Phi(\mathbf{x}_t)\leq (\frac{C}{2}-\bar{\rho})\|\mathbf{\bar{x}}_t-\mathbf{x}_t\|^2.\numberthis \label{emF_iH_i}
\end{align*}
Plugging inequality \eqref{emF_iH_i} to \eqref{emPhi_rho}, we get
\begin{align*}
     &\mathbb{E}_{t}[\Phi_{1/\Bar{\rho}}(\mathbf{x}_{t+1})]\\
     & \leq \Phi_{1/\Bar{\rho}}(\mathbf{x}_t)+\eta^2\frac{\Bar{\rho}M^2}{2}+\Bar{\rho}\eta\Big(\frac{\rho_{0}}{2}+\beta\frac{\rho_{1}}{2}+\frac{C}{2}-\Bar{\rho}\Big)\|\bar{\mathbf{x}}_t-\mathbf{x}_t\|^2\\
        &+\Bar{\rho}\eta\frac{\beta}{m}\sum_{k=1 }^{m}\Big[[{h_{k}}(\mathbf{x}_t)]_{+}-[u_{2,k}^{t}]_+-[u_{2,k}^{t}]'_+({h_{k}}(\mathbf{x}_t)-u_{2,k}^{t})\Big].
\end{align*}
Setting $\frac{\rho_{0}}{2}+\beta\frac{\rho_{1}}{2}+\frac{C}{2}=\frac{\Bar{\rho}}{2}$, we have

\begin{align*}
     &\mathbb{E}_{t}[\Phi_{1/\Bar{\rho}}(\mathbf{x}_{t+1})]\\
     & \leq \Phi_{1/\Bar{\rho}}(\mathbf{x}_t)+\eta^2\frac{\Bar{\rho}M^2}{2}-\eta\frac{\Bar{\rho}^2}{2}\|\bar{\mathbf{x}}_t-\mathbf{x}_t\|^2+\Bar{\rho}\eta\frac{\beta}{m}\sum_{k=1 }^{m}\Big[[{h_{k}}(\mathbf{x}_t)]_{+}-[u_{2,k}^{t}]_+-[u_{2,k}^{t}]'_+({h_{k}}(\mathbf{x}_t)-u_{2,k}^{t})\Big]\\
    &\leq \Phi_{1/\Bar{\rho}}(\mathbf{x}_t)+\eta^2\frac{\Bar{\rho}M^2}{2}-\frac{\eta}{2}\|\nabla \Phi_{1/\Bar{\rho}}(\mathbf{x}_t)\|^2+\Bar{\rho}\eta\frac{\beta}{m}\sum_{k=1 }^{m}\Big[[{h_{k}}(\mathbf{x}_t)]_{+}-[u_{2,k}^{t}]_+-[u_{2,k}^{t}]'_+({h_{k}}(\mathbf{x}_t)-u_{2,k}^{t})\Big].
\end{align*}
Since $0\leq [u_{2,k}^{t}]'_+\leq 1$, 
we get
\begin{align*}
      &\mathbb{E}_{t}[\Phi_{1/\Bar{\rho}}(\mathbf{x}_{t+1})]\leq \Phi_{1/\Bar{\rho}}(\mathbf{x}_t)+\eta^2\frac{\Bar{\rho}M^2}{2}-\frac{\eta}{2}\|\nabla \Phi_{1/\Bar{\rho}}(\mathbf{x}_t)\|^2+2\Bar{\rho}\eta \frac{\beta}{m}\sum_{k=1 }^m\|{h_{k}}(\mathbf{x}_t)-u_{2,k}^{t}\|.\numberthis \label{emPhiwithaverage}
\end{align*}
Taking the full expectation on both sides of  \eqref{emPhiwithaverage} and applying the inequality of \eqref{ex1h}, we have
\begin{align*}
      &\mathbb{E}[\Phi_{1/\Bar{\rho}}(\mathbf{x}_{t+1})]\\
      &\leq \EX[\Phi_{1/\Bar{\rho}}(\mathbf{x}_t)]+\eta^2\frac{\Bar{\rho}M^2}{2}-\frac{\eta}{2}\EX[\|\nabla \Phi_{1/\Bar{\rho}}(\mathbf{x}_t)\|^2]\\
    &+2\Bar{\rho}\eta\beta\Big[(1-\frac{|\mathcal{B}_c|\gamma_2}{2m})^t\frac{1}{m}\sum_{k=1}^{m}\|u_{2,k}^{0}-h_{k}(\mathbf{x}_{0})\|+\frac{4 m  L_h\eta M}{|\mathcal{B}_c|\gamma_2^{1/2}}
+\frac{2\gamma_2^{1/2}\sigma_h}{|\mathcal{B}_{2,k}|^{1/2}}\Big].
\end{align*}
Taking summation from $t=0$ to $T-1$ yields
\begin{align*}
      \mathbb{E}[\Phi_{1/\Bar{\rho}}(\mathbf{x}_{T})]
      &\leq \Phi_{1/\Bar{\rho}}(\mathbf{x}_0)+\eta^2 T\frac{\Bar{\rho}M^2}{2}-\frac{\eta }{2}\sum_{t=0}^{T-1}\EX[\|\nabla \Phi_{1/\Bar{\rho}}(\mathbf{x}_t)\|^2]\\
&+2\Bar{\rho}\eta\beta\Big[\frac{2m}{|\mathcal{B}_c|\gamma_2}\frac{1}{m}\sum_{k=1}^{m}\|u_{2,k}^{0}-h_{k}(\mathbf{x}_{0})\|+T\frac{8 m L_h\eta M}{|\mathcal{B}_c|\gamma_2^{1/2}}
+T\frac{4\gamma_2^{1/2}\sigma_h}{|\mathcal{B}_{2,k}|^{1/2}}\Big],
% &+\Bar{\rho}\eta\beta\Big[\rho_{\mathfrak{f}}\frac{m}{|\mathcal{B}_c|\gamma_2}\frac{1}{m}\sum_{k=1}^{m}\|u_{2,k}^{0}-h_{k}(\mathbf{x}^{0})\|^2+T\frac{8 m^2 L_h^2\rho_{\mathfrak{f}}\eta^2 M}{|\mathcal{B}_c|^2\gamma_2}
% +T\frac{2\gamma_1\sigma_h^2\rho_{\mathfrak{f}}}{|\mathcal{B}_{2,k}|}\Big],
\end{align*}
where we use the fact that $\sum_{t=0}^{T-1}(1-\mu)^t\leq \frac{1}{\mu}$ for all $\mu \in [0,1]$. 
Lower bounding the left-hand-side by $\min_\mathbf{x}\Phi(\mathbf{x})$ and dividing both sides by $T$, we obtain
\begin{align*}
     &\frac{1}{T} \sum_{t=0}^{T-1}\EX[\|\nabla \Phi_{1/\Bar{\rho}}(\mathbf{{x}}_t)\|^2]\\
     &\leq \frac{2}{\eta T}[\Phi_{1/\Bar{\rho}}(\mathbf{x}_0)-\min_\mathbf{x}\Phi(\mathbf{x})]+\eta\Bar{\rho}M^2+\frac{\mathcal{C}}{T}\frac{m\beta \bar{\rho} }{\gamma_2|\mathcal{B}_c|}+\mathcal{C}\Big(\frac{m\Bar{\rho}\beta\eta M}{|\mathcal{B}_c|\gamma_2^{1/2}}+ \frac{\beta\Bar{\rho}\gamma_2^{1/2}}{ |\mathcal{B}_{2,k}|^{1/2}}\Big),
\end{align*}
where $
   \mathcal{C}=\max\{\frac{8}{m}\sum_{k=1}^{m}\|u_{2,k}^{0}-h_{k}(\mathbf{x}_{0})\|,  32 L_h ,  16\sigma_h\}
$. 
% $\beta =\mathcal{O}(\frac{ \epsilon+L_F}{\delta})$
As we set $M^2\ge2\sigma_f^2+2L_F^2+2\beta^2L_h^2$ and $\frac{\rho_{0}}{2}+\beta\frac{\rho_{1}}{2}+\frac{C}{2}=\frac{\Bar{\rho}}{2}$, with $\gamma_2 = \mathcal{O}( \frac{|\mathcal{B}_{2, k}| \epsilon^4}{\beta^4})$, $\eta = \mathcal{O}(\frac{ |\mathcal{B}_c| |\mathcal{B}_{2, k}|^{1/2}\epsilon^4}{ \beta^5 m})$ and $T=\mathcal{O}(\frac{ \beta^6 m}{ |\mathcal{B}_c|  |\mathcal{B}_{2, k}|^{1/2}\epsilon^6})$, we have

\begin{align*}
   \frac{1}{T} \sum_{t=0}^{T-1}\EX[\|\nabla \Phi_{1/\Bar{\rho}}(\mathbf{{x}}_t)\|^2]\leq \mathcal{O}(\epsilon^2).
\end{align*}
By Jensen’s inequality, we can get $\EX[\|\nabla \Phi_{1/\Bar{\rho}}(\mathbf{{x}}_{\hat{t}})\|]\leq \mathcal{O}(\epsilon)$, where $\hat{t}$ is selected uniformly at random from $\{1,\ldots, T\}$.

\end{proof}

Now, we will prove that $\x_{\hat{t}}$ is a nearly $\epsilon$-KKT solution, where (i) holds in expectation, (ii) and (iii) in Definition \ref{def:neKKT} hold in high probability,  to the original problem~(\ref{nonlinearconstrained}). 
 \begin{proof}{Proof of Corollary \ref{Corollary_complexity}.}
     Let $\bar \x_{\hat{t}} = \text{prox}_{1/\Bar{\rho}\Phi}(\x_{\hat{t}})$.  By optimality of $\bar \x_{\hat{t}}$, we have
    $$0\in \partial F(\bar \x_{\hat{t}})+\frac{\beta}{m}\sum\nolimits_{k=1}^{m}\partial h_k^+(\bar \x_{\hat{t}})+(\bar \x_{\hat{t}}-{\x}_{\hat{t}})\Bar{\rho}.$$  Since $\EX[\|\nabla \Phi_{1/\Bar{\rho}}(\mathbf{{x}}_{\hat{t}})\|]\leq \mathcal{O}(\epsilon)$, then we have $\EX[\|\bar \x_{\hat{t}}-{\x}_{\hat{t}}\|]=  \EX[\|\nabla\Phi_{1/\bar{\rho}}(\x_{\hat{t}})\|]/\Bar{\rho}\leq \epsilon/\Bar{\rho}$.   Then we have
    \begin{align*}
       \EX\Big[\text{dist}\big(0, \partial F(\bar \x_{\hat{t}})+\frac{\beta}{m}\sum\nolimits_{k=1}^{m}\partial h_k^+(\bar \x_{\hat{t}})\big)\Big] \leq \EX[\|(\bar \x_{\hat{t}}-{\x_{\hat{t}}})\Bar{\rho}\|]\leq \epsilon. 
    \end{align*}
    Since $\partial h_k^+(\bar \x_{\hat{t}})= \xi_k \partial h_k(\bar \x_{\hat{t}})$ (see \cite[Corollary 16.72]{bauschke2017correction}), where 
   \begin{align*}
        \xi_k  = \begin{cases} 
1 & \text{if } h_k(\bar \x_{\hat{t}}) > 0, \\
[0,1] & \text{if } h_k(\bar \x_{\hat{t}}) = 0, \\ 
0 & \text{if }  h_k(\bar \x_{\hat{t}}) < 0,
\end{cases}
\quad \in \partial[h_k(\bar \x_{\hat{t}})]_+,
    \end{align*}
 there exists $\lambda_k \in \frac{\beta\xi_k}{m}\geq 0, \forall k$ such that 
 %\begin{align*}
       $\EX[\text{dist}\left(0, \partial F(\bar \x_{\hat{t}})+\sum_{k=1}^{m}\lambda_k \partial h_k(\bar \x_{\hat{t}})\right)] \leq \epsilon$. 
  %  \end{align*}
Thus, we prove condition (i) in Definition~\ref{def:neKKT}. Next, let us prove condition (ii) holds with probability $1-\mathcal{O}(\epsilon)$. 
% Suppose this does not hold, i.e., $\max_k h_k(\bar \x_{\hat{t}})> \epsilon $ holds with probability $1-\mathcal{O}(\epsilon)$, we will derive a contradiction.
Since $\exists\v\in\partial F(\bar \x_{\hat{t}})$, under equation (\ref{eqn:cq}), we have
\begin{align}\label{eqn:betacontrodiction}
     \text{dist}\left(0, \v+\frac{\beta}{m}\sum\nolimits_{k=1}^{m}\partial h_k^+(\bar \x_{\hat{t}})\right)\ge \text{dist}\left(0, \frac{\beta}{m}\sum\nolimits_{k=1}^{m}\partial h_k^+(\bar \x_{\hat{t}})\right)- \|\v\| \ge \beta \delta -L_F\ge 0.   
\end{align}
Therefore, 
\begin{align*}
    \epsilon &\ge \EX\Big[\text{dist}\big(0, \v+\frac{\beta}{m}\sum\nolimits_{k=1}^{m}\partial h_k^+(\bar \x_{\hat{t}})\big)\Big]\\
    &=\EX\Big[\text{dist}\big(0, \v+\frac{\beta}{m}\sum\nolimits_{k=1}^{m}\partial h_k^+(\bar \x_{\hat{t}})\big)\big|\max_k h_k(\bar \x_{\hat{t}})> \epsilon\Big]\text{Prob}(\max_k h_k(\bar \x_{\hat{t}})> \epsilon)\\
    &+\EX\Big[\text{dist}\big(0, \v+\frac{\beta}{m}\sum\nolimits_{k=1}^{m}\partial h_k^+(\bar \x_{\hat{t}})\big)\big|\max_k h_k(\bar \x_{\hat{t}})\leq \epsilon\Big]\text{Prob}(\max_k h_k(\bar \x_{\hat{t}})\leq \epsilon)\\
    &\ge \text{Prob}(\max_k h_k(\bar \x_{\hat{t}})> \epsilon) (\beta \delta -L_F).
\end{align*}
As a result, 
\begin{align}\label{hkprobalibity}
    \text{Prob}(\max_k h_k(\bar \x_{\hat{t}})> \epsilon)\leq  \frac{\epsilon}{\beta \delta -L_F}=\mathcal{O}(\epsilon)
\end{align}

 Thus, it holds with probability $1-\mathcal{O}(\epsilon)$ that $\max_k h_k(\bar \x_{\hat{t}})\leq \epsilon $.  %In addition, 
%\begin{align*}
%    h_k(\mathbf{x})\leq h_k(\bar{\mathbf{x}})+L_h\|\mathbf{x}-\bar{\mathbf{x}}\|\leq L_h \lambda \epsilon =\mathcal{O}(\epsilon )
%\end{align*}
 When $\max_k h_k(\bar \x_{\hat{t}})\leq \epsilon$, we have $\lambda_k h_k(\bar \x_{\hat{t}})\leq \lambda_k \max_k h_k(\bar \x_{\hat{t}}) \leq \mathcal{O}(\epsilon)$ as $\lambda_k \in \frac{\beta\xi_k}{m}$. It then follows from \eqref{hkprobalibity} that $\text{Prob}\big(\lambda_k h_k(\bar \x_{\hat{t}})\ge \mathcal{O}(\epsilon)\big)\leq \mathcal{O}(\epsilon)$ . This proves that $\lambda_k h_k(\bar \x_{\hat{t}})\leq \mathcal{O}(\epsilon)$ with probability $1-\mathcal{O}(\epsilon)$.  If $|h_k(\bar \x_{\hat{t}})|<+\infty$  for any $k$ and $\bar \x_{\hat{t}}$, this high probability result can be replaced by  $\E[\max_{k}h_k(\bar \x_{\hat{t}})]\leq  O(\epsilon)$ and $\E[\lambda_k h_k(\bar \x_{\hat{t}})|]\leq  O(\epsilon)$.
 \end{proof}

\section{Analysis of Setting II}\label{AnalysissettingII}
Now, let us consider the second setting where 
%\begin{align*}
 $F(\x) = \frac{1}{n}\sum_{i=1}^n f_i(\E_{\zeta}[g_i(\x, \zeta)])$.
%\end{align*}
Indeed, this objective is of the same form of the penalty function, which is a coupled compositional function. Let $g_i(\x) = \E_{\zeta}[g_i(\x, \zeta)]$.  We make the following assumption regarding $f_i, g_i$. 
\begin{assumption}\label{ass:fg}
Assume  $f_i$ is deterministic and $L_f$-Lipchitz continuous. For any $\x, \y$, $\EX_{\zeta}[|g_i(\x, \zeta)-g_i(\y, \zeta)|^2]\leq L_g^2\|\x-\y\|^2$, 
% $g_i(\mathbf{x},\zeta)$ is $L_g$-Lipchitz continuous,
$\E[|g_i(\x, \zeta) - g_i(\x)|^2]\leq \sigma_g^2$, 
% $\E[\|\partial g_i(\x, \zeta) - \partial g_i(\x)\|^2]\leq \sigma_g^2$, 
and either of the following conditions hold: 
(i)$f_i$ is monotonically non-decreasing and $\rho_f$-weakly convex, $g_i(\mathbf{x})$ is $\rho_g$-weakly convex; (ii)$f_i$ is  $L_{\nabla f}$-smooth, $g_i(\mathbf{x})$ is  $L_{\nabla g}$-smooth. 
\end{assumption}
% \begin{itemize}
%    \vspace*{-0.1in} \item $f_i$ is monotonically non-decreasing and $\rho_f$-weakly convex, $g_i(\mathbf{x})$ is $\rho_g$-weakly convex; 
%    \vspace*{-0.1in} \item  $f_i$ is  $L_{\nabla f}$-smooth, $g_i(\mathbf{x})$ is  $L_{\nabla g}$-smooth. 
% \end{itemize} 

To tackle this objective, we also need to maintain and update estimators for inner functions $g_i(\x)$. At the $t$-th iteration, we randomly sample an outer mini-batch $\mathcal{B}^t\in\{1,\ldots, n\}$, and draw inner mini-batch samples $\mathcal{B}_{1,i}^{t}$ for each $i\in\mathcal B^{t}$  to construct unbiased estimations ${g_{i}}(\mathbf{x}_{t};\mathcal{B}_{1,i}^{t})$.
The variance-reduced estimator of $g_i(\mathbf{x}_t)$  denoted by $u_{1,i}^{t}$ is updated by: 
\begin{equation}
 \begin{aligned}\label{varianced-reduce_u}
    u_{1,i}^{t+1}=(1-\gamma_1)u_{1,i}^{t}+\gamma_1{g_{i}}(\mathbf{x}_{t};\mathcal{B}_{1,i}^{t})+\gamma_1^{\prime}({g_{i}}(\mathbf{x}_{t};\mathcal{B}_{1,i}^{t})-{g_{i}}(\mathbf{x}_{t-1};\mathcal{B}_{1,i}^{t})),
\end{aligned}   
\end{equation}
where $\gamma_1\in(0,1), \gamma_1^{\prime} = \frac{n-|\mathcal{B}|}{|\mathcal{B}|(1-\gamma_1)}+1-\gamma_1$,  where $|\mathcal{B}|=|\mathcal B^t|$. Then, we approximate the gradient of $F(\mathbf{x}_t)$ by:
\begin{align}\label{G_1^t}
    G_1^{t} = \frac{1}{|\mathcal{B}^{t}|}\sum_{i \in \mathcal{B}^{t}} \partial g_{i}(\mathbf{x}_{t}; \mathcal{B}_{1,i}^{t})\partial f_i( u_{1,i}^{t} ).
\end{align} 
Then we update $\x_{t+1}$ similarly as before by $\x_{t+1} =\x_t - \eta (G_1^t + G_2^t)$. Due to the limit of space, we present full steps in Algorithm \ref{alg:2}.  

\begin{algorithm}[t]
  \caption{\it Algorithm for solving \eqref{eqn:hinge} under Setting II} \label{alg:2}
  \begin{algorithmic}[1]
    \STATE {\bf Initialization:} choose $\mathbf{x}_0, \beta, \gamma_1, \gamma_2$ and  $\eta$. 
    \FOR {$t=0$ to $T-1$}
      \STATE {Sample  $\mathcal{B}^t\subset\{1,\ldots, n\}$,  $\mathcal{B}^t_c\subset\{1,\ldots, m\}$ } 
      \FOR{ each $i\in \mathcal{B}^t$  } 
      \STATE{Sample a mini-batch $\mathcal{B}_{1,i}^t$}
      \STATE  {Update $u_{1,i}^{t}$} by   $u_{1,i}^{t+1}=(1-\gamma_1)u_{1,i}^{t}+\gamma_1{g_{i}}(\mathbf{x}_{t};\mathcal{B}_{1,i}^{t})+\gamma_1^{\prime}({g_{i}}(\mathbf{x}_{t};\mathcal{B}_{1,i}^{t})-{g_{i}}(\mathbf{x}_{t-1};\mathcal{B}_{1,i}^{t})),$
      \ENDFOR\\
      \STATE Let  $u_{1,i}^{t+1}=u_{1,i}^t, i\notin\mathcal B^t$
       \STATE{Compute $G_1^t=$} $\frac{1}{|\mathcal{B}^{t}|}\sum_{i \in \mathcal{B}^{t}} \partial g_{i}(\mathbf{x}_{t}; \mathcal{B}_{1,i}^{t})\partial f_i( u_{1,i}^{t} ).$
       \FOR{ each $k\in \mathcal{B}_c^t$}
       \STATE{Sample a mini-batch  $\mathcal{B}_{2,k}^{t}$ }
       \STATE{Update  $u_{2,k}^{t+1}=(1-\gamma_2)u_{2, k}^{t}+\gamma_2{h_{k}}(\mathbf{x}_{t};\mathcal{B}_{2,k}^{t})+\gamma_2^{\prime}({h_{k}}(\mathbf{x}_{t};\mathcal{B}_{2,k}^{t})-{h_{k}}(\mathbf{x}_{t-1};\mathcal{B}_{2,k}^{t}))$}
       \ENDFOR
       \STATE Let  $u_{2,k}^{t+1}=u_{2,k}^t, k\notin\mathcal B_c^t$
        \STATE{Compute $G_2^t=\frac{\beta}{|\mathcal{B}^t_c|}\sum_{k\in \mathcal{B}^t_c}\partial h_k(\mathbf{x}_{t};\mathcal{B}_{2,k}^{t}) [u_{2,k}^{t}]'_+$,  $G_1^{t} = \frac{1}{|\mathcal {B}^t|}\sum_{\zeta\in\mathcal {B}^t}\partial f( \mathbf{x}_{t}; \zeta)$}.
       % \STATE{Update gradient estimator $\mathbf{v}^{t+1} = (1 - \theta) \mathbf{v}^{t} + \theta (G_1^t + G_2^t)$}
       \STATE{Update $\x_{t+1} = \x_{t}-\eta (G_1^t + G_2^t)$}
    \ENDFOR
  \end{algorithmic}
\end{algorithm}

The convergence of Algorithm~\ref{alg:2} under  condition (i) of Assumption \ref{ass:fg} is stated in Theorem \ref{nonsmoothweaklFCCO}, and that under condition (ii) of Assumption \ref{ass:fg} is stated in Theorem \ref{smoothweaklFCCO}.
\begin{theorem}\label{nonsmoothweaklFCCO}
   Suppose Assumption \ref{Lipschitz}, \ref{ass:h}, \ref{ass:fg} with condition (i) hold. Let 
   $\gamma_1=\gamma_2 = \mathcal{O}(\min\{|\mathcal{B}_{1,i}|, \frac{ |\mathcal{B}_{2, k}|}{\beta^2}\}\frac{\epsilon^4}{\beta^2})$, $\eta = \mathcal{O}(\min\{\frac{|\mathcal{B}|}{n},\frac{|\mathcal{B}_c|}{\beta m}\}\min\{|\mathcal{B}_{1,i}|^{1/2}, \frac{ |\mathcal{B}_{2, k}|^{1/2}}{\beta}\}\frac{\epsilon^4}{\beta^3})$. After  $T=\mathcal{O}(\max\{\frac{\beta}{|\mathcal{B}_{1,i}|^{1/2}},\frac{\beta^2}{|\mathcal{B}_{2, k}|^{1/2}},\frac{1}{|\mathcal{B}_{1,i}|}\}\max\{\frac{n}{|\mathcal{B}|},\frac{\beta m}{|\mathcal{B}_c|}\}\frac{\beta^3}{\epsilon^6})$ iterations, the Algorithm \ref{alg:2} satisfies  $\EX[\|\nabla \Phi_{\theta}(\x_{\hat t})\|]\leq \epsilon$ from some $\theta= O(1/(\rho_0 + \rho_1\beta))$, 
where $\hat{t}$ is selected uniformly at random from $\{1,\ldots, T\}$
\end{theorem}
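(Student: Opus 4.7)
The plan is to analyze Algorithm~\ref{alg:2} through the Moreau envelope $\Phi_\theta$ of $\Phi(\x)=F(\x)+\frac{\beta}{m}\sum_{k=1}^m[h_k(\x)]_+$. First I would verify that under Assumption~\ref{ass:fg}~(i), $F=\frac{1}{n}\sum_i f_i\circ g_i$ is weakly convex: since $f_i$ is monotone non-decreasing, $\rho_f$-weakly convex and $L_f$-Lipschitz, and $g_i$ is $\rho_g$-weakly convex and $L_g$-Lipschitz, a short computation (use $\partial f_i\ge 0$ to push the weak-convexity inequality of $g_i$ through $f_i$, then apply weak convexity of $f_i$ and Lipschitzness of $g_i$) shows $f_i\circ g_i$ is $(L_f\rho_g+\rho_f L_g^2)$-weakly convex. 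The identical argument applied to each hinge-penalty term (convex monotone outer, weakly convex inner) makes $\Phi$ overall $C$-weakly convex with $C=\mathcal{O}(\rho_0+\beta\rho_1)$, so the standard Moreau envelope lemma guarantees $\Phi_\theta$ is smooth for $\theta<1/C$.

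Next I would write the standard Moreau descent inequality. With $\bar\x_t=\text{prox}_{\theta\Phi}(\x_t)$ and $\Delta^t=G_1^t+G_2^t$,
\begin{align*}
\mathbb E\Phi_\theta(\x_{t+1}) \leq \Phi_\theta(\x_t) - c_1\eta\|\nabla\Phi_\theta(\x_t)\|^2 + \tfrac{\eta}{\theta}\mathbb E\langle\Delta^t-\mathbf v_t,\bar\x_t-\x_t\rangle + c_2\eta^2\mathbb E\|\Delta^t\|^2,
\end{align*}
where $\mathbf v_t$ is the ``ideal'' stochastic direction obtained from $\Delta^t$ by replacing every $u_{1,i}^t$ with $g_i(\x_t)$ and every $u_{2,k}^t$ with $h_k(\x_t)$; by construction $\mathbb E[\mathbf v_t\mid\mathcal F_t]\in\partial\Phi(\x_t)$. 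The bias $\Delta^t-\mathbf v_t$ decomposes into two pieces, one per compositional factor, each controlled using Lipschitz continuity of the outer functions (so their subgradients are bounded), monotonicity (so $\partial f_i(u_{1,i}^t)\ge 0$ and similarly for the hinge), and weak convexity of the outer function to convert $|u_{1,i}^t-g_i(\x_t)|$ and $|u_{2,k}^t-h_k(\x_t)|$ into inner-product bounds against $\bar\x_t-\x_t$.

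The tracking errors are controlled via the MSVR recursion of~\cite{jiang2022multi}:
\begin{align*}
\mathbb E|u_{1,i}^{t+1}-g_i(\x_{t+1})|^2 \leq (1-\gamma_1)\mathbb E|u_{1,i}^t-g_i(\x_t)|^2 + \mathcal{O}\!\left(\tfrac{\gamma_1^2\sigma_g^2 n}{|\mathcal B||\mathcal B_{1,i}|}\right) + \mathcal{O}\!\left(\tfrac{L_g^2}{\gamma_1}\right)\mathbb E\|\x_{t+1}-\x_t\|^2,
\end{align*}
and an analogous inequality for $u_{2,k}^{t+1}$ with $(g_i,\sigma_g,n,|\mathcal B|,|\mathcal B_{1,i}|)$ replaced by $(h_k,\sigma_h,m,|\mathcal B_c|,|\mathcal B_{2,k}|)$. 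Telescoping over $t$ and substituting into the envelope descent yields
\begin{align*}
\tfrac{1}{T}\sum_{t=0}^{T-1}\mathbb E\|\nabla\Phi_\theta(\x_t)\|^2 \lesssim \tfrac{1}{\eta T} + \eta\beta^2 + \gamma_1 + \beta^2\gamma_2 + \tfrac{\eta^2 L_g^2}{\gamma_1} + \tfrac{\eta^2\beta^2 L_h^2}{\gamma_2} + \tfrac{\gamma_1\sigma_g^2 n}{|\mathcal B||\mathcal B_{1,i}|} + \tfrac{\gamma_2\beta^2\sigma_h^2 m}{|\mathcal B_c||\mathcal B_{2,k}|}.
\end{align*}
Plugging in the stated values of $\gamma_1,\gamma_2,\eta$ balances every term at $\mathcal{O}(\epsilon^2)$, and solving for $T$ produces the announced complexity.

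The main obstacle I anticipate is the bias analysis of $\Delta^t$ when \emph{both} outer functions are non-smooth: $\partial f_i(u_{1,i}^t)$ is not close in norm to $\partial f_i(g_i(\x_t))$ even when the tracking gap is small, so a one-sided inner-product bound must be derived from monotonicity together with weak convexity in the spirit of~\cite{hu2024non}, rather than a pointwise subgradient-continuity argument. A secondary difficulty is the bookkeeping needed to balance three simultaneous error sources---MSVR bias for the $u_{1,i}$ tracker, MSVR bias for the $u_{2,k}$ tracker, and SGD variance---which is exactly what produces the nested $\min/\max$ structure in the prescribed step size and momentum parameters.
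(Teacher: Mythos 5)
Your overall route is the same as the paper's: a Moreau-envelope descent inequality for $\Phi_{1/\bar\rho}$, one-sided bounds on the inner products $\langle\bar\x_t-\x_t,\E_t[G_j^t]\rangle$ that exploit monotonicity plus weak convexity of the non-smooth outer functions (exactly the \cite{hu2024non}-style argument you flag as the main obstacle), the $(\bar\rho-C)$-strong convexity of the prox objective to absorb $\Phi(\bar\x_t)-\Phi(\x_t)$, the MSVR recursions of \cite{jiang2022multi} for both trackers, and telescoping. The paper does not introduce your intermediate ``ideal direction'' $\mathbf v_t$; it bounds $\langle\bar\x_t-\x_t,\E_t[G_1^t]\rangle$ directly so that the function-value differences $f_i(g_i(\bar\x_t))-f_i(g_i(\x_t))$ emerge and combine with the hinge part into $\Phi(\bar\x_t)-\Phi(\x_t)$, but this is a presentational difference, not a different proof.

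There is, however, one concrete bookkeeping error in your final displayed rate that is load-bearing. Because $f_i$ and $[\cdot]_+$ are only Lipschitz (not smooth), the residual $f_i(g_i(\x_t))-f_i(u_{1,i}^t)-\partial f_i(u_{1,i}^t)(g_i(\x_t)-u_{1,i}^t)$ can only be bounded by $2L_f|g_i(\x_t)-u_{1,i}^t|$, i.e., by the \emph{first} moment of the tracking error (and likewise for the hinge tracker). These contribute terms of order $\beta\gamma_1^{1/2}\sigma_g/|\mathcal B_{1,i}|^{1/2}+\beta^2\gamma_2^{1/2}\sigma_h/|\mathcal B_{2,k}|^{1/2}$ and $\eta\beta^2\gamma^{-1/2}\max\{n/|\mathcal B|,\beta m/|\mathcal B_c|\}$ to the averaged bound, which are absent from your display (you only kept the second-moment steady states $\gamma\sigma^2/|\mathcal B|$ and $\eta^2/\gamma$). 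These first-moment terms are precisely what force $\gamma=\Theta(\epsilon^4)$ and $\eta=\Theta(\epsilon^4)$ and hence $T=\Theta(\epsilon^{-6})$; with only the terms you wrote, balancing would yield $\gamma=\Theta(\epsilon^2)$ and a faster rate than the theorem claims, which should be a red flag. Your own description of the bias control (via Lipschitz continuity of the outer functions) produces these terms when carried out, so the fix is mechanical, but as written the final inequality would not ``solve for $T$'' to the announced complexity.
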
\vspace*{-0.1in}
{\bf Remark:}
    Combining the above result and that in Theorem \ref{thm:main}, Algorithm \ref{alg:2} needs $T=\mathcal{O}(\frac{1}{\epsilon^6})$ to achieve a nearly $\epsilon$-KKT solution to the original problem \eqref{nonlinearconstrained}. It is better than the complexity of $\mathcal{O}(1/\epsilon^7)$ of the single-loop algorithm considered in \cite{li2024modeldevelopmentalsafetysafetycentric}.
% First, we present the Algorithm \ref{alg:2} in this Appendix. 
\begin{theorem}\label{smoothweaklFCCO}
   Suppose Assumption \ref{Lipschitz}, \ref{ass:h} and \ref{ass:fg} with condition (ii) hold. Let 
   $\gamma = \mathcal{O}(\min\{\frac{ |\mathcal{B}_{2, k}|}{\beta^4}\epsilon^4, \frac{|\mathcal{B}_{1, i}|}{\beta}\epsilon^2\})$, $\eta = \mathcal{O}(\min\{\frac{|\mathcal{B}||\mathcal{B}_{1, i}|^{1/2}\epsilon^2}{n\beta^2},\frac{ |\mathcal{B}_c||\mathcal{B}_{2, k}|^{1/2}\epsilon^4}{ \beta^5 m}\})$. After  $T=\mathcal{O}(\max\{\frac{m\beta^6}{|\mathcal{B}_{2, k}|^{1/2}|\mathcal{B}_c|\epsilon^6},\frac{n\beta^3}{|\mathcal{B}||\mathcal{B}_{1, i}|^{1/2}\epsilon^4}, \frac{n\beta^2}{|\mathcal{B}||\mathcal{B}_{1, i}|\epsilon^4}\})$ iterations, the Algorithm \ref{alg:2} satisfies  $\EX[\|\nabla \Phi_{\lambda}(\x_{\hat t})\|]\leq \epsilon$ from some $\lambda= O(1/(\rho_0 + \rho_1\beta))$, 
where $\hat{t}$ is selected uniformly at random from $\{1,\ldots, T\}$
\end{theorem}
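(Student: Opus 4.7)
The plan is to follow the Moreau-envelope descent template of Theorem~\ref{nonsmoothweakl}, but now accommodating two MSVR trackers (one for the inner functions $g_i$ via $u_{1,i}^t$, one for the constraints $h_k$ via $u_{2,k}^t$) and exploiting the smoothness of $f_i,g_i$ to bound the compositional bias of $G_1^t$. Set $\bar\rho$ to be a constant multiple of $\rho_0+\beta\rho_1+C$, where $C=\rho_0+\beta\rho_1$ is the weak-convexity modulus of $\Phi$ (Lemma~\ref{lemma:phi_weakly}); here $\rho_0$ arises from the smoothness of $F$ induced by condition (ii), and smoothness implies weak convexity with the same constant. Let $\bar{\x}_t=\text{prox}_{\Phi/\bar\rho}(\x_t)$. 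Starting from the standard one-step expansion
\[
\EX_t[\Phi_{1/\bar\rho}(\x_{t+1})]\le \Phi_{1/\bar\rho}(\x_t)+\bar\rho\eta\langle\bar{\x}_t-\x_t,\EX_t[G_1^t+G_2^t]\rangle+\tfrac{\bar\rho M^2}{2}\eta^2,
\]
I would analyze each inner product separately.

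The penalty inner product is handled exactly as in Theorem~\ref{nonsmoothweakl}: since $[u_{2,k}^t]'_+\in[0,1]$ and $h_k$ is $\rho_1$-weakly convex, it produces the familiar $H(\bar{\x}_t)-H(\x_t)+\tfrac{\beta\rho_1}{2}\|\bar{\x}_t-\x_t\|^2$ term plus a residual $2\bar\rho\eta\tfrac{\beta}{m}\sum_k\|h_k(\x_t)-u_{2,k}^t\|$. For the compositional part, the conditional mean of $G_1^t$ equals $\tfrac{1}{n}\sum_i\nabla g_i(\x_t)\nabla f_i(u_{1,i}^t)$, whose deviation from $\nabla F(\x_t)$ has norm at most $L_g L_{\nabla f}\cdot\tfrac{1}{n}\sum_i\|u_{1,i}^t-g_i(\x_t)\|$ by the smoothness of $f_i$. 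Combining with the weak convexity of $F$ and using Cauchy--Schwarz plus Young's inequality gives the analogous $F(\bar{\x}_t)-F(\x_t)+\tfrac{\rho_0}{2}\|\bar{\x}_t-\x_t\|^2$ term plus a compositional residual $\bar\rho\eta L_g L_{\nabla f}\|\bar{\x}_t-\x_t\|\cdot\tfrac{1}{n}\sum_i\|u_{1,i}^t-g_i(\x_t)\|$, which by Young's inequality I split into a $\tfrac{1}{4}\|\nabla\Phi_{1/\bar\rho}(\x_t)\|^2$ piece (absorbed by the descent) and a pure squared-tracking residual.

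Collapsing the $\Phi(\bar{\x}_t)-\Phi(\x_t)$ term against the $\tfrac{\rho_0+\beta\rho_1}{2}\|\bar{\x}_t-\x_t\|^2$ coefficient via the strong-convexity identity $\Phi(\bar{\x}_t)-\Phi(\x_t)\le(\tfrac{C}{2}-\bar\rho)\|\bar{\x}_t-\x_t\|^2$ and using $\|\nabla\Phi_{1/\bar\rho}(\x_t)\|=\bar\rho\|\bar{\x}_t-\x_t\|$, the recursion reduces to
\[
\EX[\Phi_{1/\bar\rho}(\x_{t+1})]\le\EX[\Phi_{1/\bar\rho}(\x_t)]-\tfrac{\eta}{4}\EX\|\nabla\Phi_{1/\bar\rho}(\x_t)\|^2+\tfrac{\bar\rho M^2}{2}\eta^2+\bar\rho\eta\cdot(\text{MSVR residuals}).
\]
The expected tracking residuals $\EX[\tfrac{1}{n}\sum_i\|u_{1,i}^t-g_i(\x_t)\|]$ and $\EX[\tfrac{1}{m}\sum_k\|h_k(\x_t)-u_{2,k}^t\|]$ are then substituted from~\eqref{ex1f} and~\eqref{ex1h}, each decomposing into a geometric-decay initial-error term plus an $O(\eta/\gamma^{1/2})$ drift and an $O(\gamma^{1/2}/|\mathcal{B}|^{1/2})$ sampling residual. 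Telescoping $t=0,\ldots,T-1$, dividing by $T$, and Jensen's inequality give $\EX\|\nabla\Phi_{1/\bar\rho}(\x_{\hat t})\|\le O(\epsilon)$.

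The main obstacle is balancing several residual types with different $\beta$-scalings. The $u_{2,k}$-tracker contributes $O(\beta\eta/\gamma_2^{1/2}+\beta\gamma_2^{1/2})$ after multiplication by the outer factor $\bar\rho\eta\cdot\tfrac{\beta}{m}$ from the penalty; the $u_{1,i}$-tracker contributes $O(\eta/\gamma_1^{1/2}+\gamma_1^{1/2})$ with no extra $\beta$ because the outer function $f_i$ is $\beta$-independent; and the SGD variance term $\bar\rho M^2\eta$ scales as $O(\beta^3\eta)$ via $\bar\rho=O(\beta)$ and $M^2=O(\beta^2)$. Forcing each of these, together with the initialization $\Phi_{1/\bar\rho}(\x_0)/(\eta T)$ term, to be $O(\epsilon^2)$ is what yields the two-branch $\gamma$, the two-branch $\eta$, and the three-branch $T$ in the statement. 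The bookkeeping that checks the smooth compositional bias does not enlarge the worst $\beta$-power relative to the constraint-tracking branch (i.e.\ the $u_{1,i}$ bound has a smaller $\beta$-footprint and therefore loosens rather than tightens $T$) is the most delicate accounting step.
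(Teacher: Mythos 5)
Your proposal follows essentially the same route as the paper's proof: the Moreau-envelope one-step descent, the identical treatment of the penalty inner product via $[u_{2,k}^t]'_+\in[0,1]$ and weak convexity of $h_k$, the bound $L_gL_{\nabla f}\cdot\frac{1}{n}\sum_i\|u_{1,i}^t-g_i(\x_t)\|$ on the compositional bias of $G_1^t$ followed by Young's inequality, the strong-convexity collapse of $\Phi(\bar\x_t)-\Phi(\x_t)$, and telescoping with the MSVR tracking bounds. The only wobble is that after correctly producing a \emph{squared} $g$-tracking residual via Young's inequality you later substitute the first-power bound~\eqref{ex1f} and quote an $O(\eta/\gamma^{1/2}+\gamma^{1/2})$ scaling for that tracker; the paper instead feeds the squared residual directly into the variance recursion of Lemma~\ref{Xi}, yielding the $O(n^2\eta^2/(|\mathcal{B}|^2\gamma)+\gamma/|\mathcal{B}_{1,i}|)$ scaling that produces exactly the stated $\epsilon^2$-branch of $\gamma$ and $\eta$ and the $\epsilon^{-4}$-branch of $T$ — a bookkeeping discrepancy that does not affect the overall $O(\epsilon^{-6})$ conclusion.
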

\begin{proof}[Proof of Theorem 
\ref{nonsmoothweaklFCCO}].

The objective function is $ \min _{\mathbf{x}} \Phi(\mathbf{x}, \mathcal{D}): = \underbrace{\frac{1}{n}\sum_{i=1}^{n}f_i(g_{i}(\mathbf{x}))}_{F(\mathbf{x})} + \underbrace{\frac{\beta}{m}\sum_{k=1}^m  [h_k(\x)]_+}_{H(\mathbf{x})}$.
% \begin{align*}
%     \min _{\mathbf{x}} \Phi(\mathbf{x}, \mathcal{D}): = \underbrace{\frac{1}{n}\sum_{i=1}^{n}f_i(g_{i}(\mathbf{x}))}_{F(\mathbf{x})} + \underbrace{\frac{\beta}{m}\sum_{k=1}^m  [h_k(\x)]_+}_{H(\mathbf{x})}.\numberthis \label{proofFCCOobjective}
% \end{align*}

% where $\mathfrak{f}_k$ is a hinge  function. Hence, $\mathfrak{f}_k$ is a weakly convex function with a modulus of weak convexity $\rho_{\mathfrak{f}}$ and Lipschitz continuous  with a modulus of  Lipschitz continuity  $L_{\mathfrak{f}}$. 

% To prove Theorem \ref{nonsmoothweaklFCCO} and \ref{smoothweaklFCCO}, we split the analysis into following two cases. \textbf{Case I:}
% $f_i$ is monotonically non-decreasing,  weakly convex, $g_i$ is weakly convex. \textbf{Case II:} Both $f_i$ and $g_i$ are smooth.

% To this end, we first give the  proof of   Theorem \ref{nonsmoothweaklFCCO}. 

   For simplicity, denote $\Bar{\mathbf{x}}_t:=\text{prox}_{\Phi/\Bar{\rho}}(\mathbf{x}_t)$. Consider change in the Moreau envelope

\begin{align*}
    \mathbb{E}_{t}[\Phi_{1/\Bar{\rho}}(\mathbf{x}_{t+1})] &= \mathbb{E}_{t}[\min_{\tilde{\mathbf{x}}}\Phi(\tilde{\mathbf{x}})+\frac{\Bar{\rho}}{2}\|\tilde{\mathbf{x}}-\mathbf{x}_{t+1}\|^2]\\
    &\leq \mathbb{E}_{t}[\Phi(\Bar{\mathbf{x}}_t)+\frac{\Bar{\rho}}{2}\|\Bar{\mathbf{x}}_t-\mathbf{x}_{t+1}\|^2]\\
   &=\mathbb{E}_{t}[\Phi(\Bar{\mathbf{x}}_t)+\frac{\Bar{\rho}}{2}\|\Bar{\mathbf{x}}_t-(\mathbf{x}_{t}-\eta  (G_1^{t} + G_2^{t}))\|^2]\\
   &\leq \Phi(\Bar{\mathbf{x}}_t)+\frac{\Bar{\rho}}{2}\|\Bar{\mathbf{x}}_t-\mathbf{x}_{t}\|^2+\Bar{\rho}\mathbb{E}_t[\eta \langle \Bar{\mathbf{x}}_t-\mathbf{x}_{t}, G_1^{t} + G_2^{t}\rangle]+\eta^2\frac{\Bar{\rho}M^2}{2}\\
   &= \Phi_{1/\Bar{\rho}}(\mathbf{x}_t)+\Bar{\rho}\eta \langle \Bar{\mathbf{x}}_t-\mathbf{x}_{t},\mathbb{E}_t[ G_1^{t}] \rangle +\Bar{\rho}\eta \langle \Bar{\mathbf{x}}_t-\mathbf{x}_{t},\mathbb{E}_t[ G_2^{t}]\rangle+\eta^2\frac{\Bar{\rho}M^2}{2}, \numberthis \label{PHI_rho}
\end{align*}
where the second inequality uses the bound of $\mathbb{E}_t[\|G_1^t+G_2^t\|^2]$, which follows from the Lipchitz continuity, denoted by $M^2\ge 2L_f^2L_g^2+2\beta^2L_h^2$.  
\begin{align*}
    \mathbb{E}_t[ G_1^{t}] \in \frac{1}{n}\sum_{i=1 }^{n}\partial f_{i}(u_{1,i}^{t})^{\top}\partial{g_{i}}(\mathbf{x}_t)^{\top},~~~~
      \mathbb{E}_t[ G_2^{t}] \in \frac{\beta}{m}\sum_{k=1}^m\partial h_k(\mathbf{x}_{t})[u_{2,k}^{t}]'_+.
\end{align*}
Next we give the estimation of $\langle \Bar{\mathbf{x}}_t-\mathbf{x}_{t},\mathbb{E}_t[ G_1^{t}]\rangle$ and $ \langle \Bar{\mathbf{x}}_t-\mathbf{x}_{t},\mathbb{E}_t[ G_2^{t}]\rangle$. 
For given $i\in \{1,\ldots,n\}$, we have
% For given $i\in \mathcal{B}$, since $f_i$ is monotone, we split the following analysis into two cases, \textbf{Case 1:} $f$ is \textbf{non-decreasing} and \textbf{Case 2:} $f$ is \textbf{non-increasing}.
%\textbf{Case 1:} $f$ is \textbf{non-decreasing}
\begin{align*}
    &f_{i}({g_{i}}(\Bar{\mathbf{x}}_t))-f_{i}(u_{1,i}^{t})\ge \partial f_{i}(u_{1,i}^{t})^{\top}({g_{i}}(\Bar{\mathbf{x}}_t)-u_{1,i}^{t})-\frac{\rho_{f}}{2}\|{g_{i}}(\Bar{\mathbf{x}}_t)-u_{1,i}^{t}\|^2\\
    &\ge \partial f_{i}(u_{1,i}^{t})^{\top}({g_{i}}(\Bar{\mathbf{x}}_t)-u_{1,i}^{t})-\rho_{f}\|{g_{i}}(\Bar{\mathbf{x}}_t)-{g_{i}}(\mathbf{x}_t)\|^2-\rho_{f}\|{g_{i}}(\mathbf{x}_t)-u_{1,i}^{t}\|^2\\
    &\ge \partial f_{i}(u_{1,i}^{t})^{\top}\left[{g_{i}}(\mathbf{x}_t)-u_{1,i}^{t}+\partial{g_{i}}(\mathbf{x}_t)^{\top}(\Bar{\mathbf{x}}_t-\mathbf{x}_t)-\frac{\rho_{g}}{2}\|\Bar{\mathbf{x}}_t-\mathbf{x}_t\|^2\right]-\rho_{f}L_{g}^2\|\Bar{\mathbf{x}}_t-\mathbf{x}_t\|^2-\rho_{f}\|{g_i}(\mathbf{x}_t)-u_{1,i}^{t}\|^2\\
    &\ge \partial f_{i}(u_{1,i}^{t})^{\top}({g_{i}}(\mathbf{x}_t)-u_{1,i}^{t})+\partial f_{i}(u_{1,i}^{t})^{\top}\partial{g_{i}}(\mathbf{x}_t)^{\top}(\Bar{\mathbf{x}}_t-\mathbf{x}_t)-(\frac{\rho_{g}L_f}{2}+\rho_{f}L_{g}^2)\|\Bar{\mathbf{x}}_t-\mathbf{x}_t\|^2-\rho_{f}\|{g_{i}}(\mathbf{x}_t)-u_{1,i}^{t}\|^2,
\end{align*}
where the first inequality holds by $\rho_f$-weakly convex of $f_i$, the second inequality holds by the monotonically non-decreasing of $f_i$ and the third inequality holds by $\rho_g$-weakly convexity of $g_i$, the last inequality holds by 
and $0\leq \partial f_i(u_i^t)\leq L_f$. 
Then it follows
\begin{align*}
    \frac{1}{n}\sum_{i=1 }^{n}\partial f_{i}(u_{1,i}^{t})^{\top}\partial{g_{i}}(\mathbf{x}_t)^{\top}(\Bar{\mathbf{x}}_t-\mathbf{x}_t)
    &\leq \frac{1}{n}\sum_{i=1 }^{n}\Big[f_{i}({g_{i}}(\Bar{\mathbf{x}}_t))-f_{i}(u_{1,i}^{t})
    -\partial f_{i}(u_{1,i}^{t})^{\top}({g_{i}}(\mathbf{x}_t)-u_{1,i}^{t})\\
    &~~~~~~~~~~~~~~~~~~~~+(\frac{\rho_{g}L_f}{2}+\rho_{f}L_{g}^2)\|\Bar{\mathbf{x}}_t-\mathbf{x}_t\|^2+\rho_{f}\|{g_{i}}(\mathbf{x}_t)-u_{1,i}^{t}\|^2\Big].\numberthis \label{fccoexf}
\end{align*}
Next we bound $ \langle \Bar{\mathbf{x}}_t-\mathbf{x}_{t},\mathbb{E}_t[ G_2^{t}]\rangle$. For given $k\in \{1,\ldots,m\}$, from \eqref{EXhG}, we have
\begin{align*}
       \beta\frac{1}{m}\sum_{k=1}^m [u_{2,k}^{t}]'_+\partial{h_{k}}(\mathbf{x}_t)^{\top}(\Bar{\mathbf{x}}_t-\mathbf{x}_t)
       &\leq\frac{\beta}{m}\sum_{k=1 }^{m}\Big[[{h_{k}}(\Bar{\mathbf{x}}_t)]_{+}-[u_{2,k}^{t}]_+-[u_{2,k}^{t}]'_+({h_{k}}(\mathbf{x}_t)-u_{2,k}^{t})+\frac{\rho_{1}}{2}\|\Bar{\mathbf{x}}_t-\mathbf{x}_t\|^2\Big].\numberthis\label{fccoexh}
\end{align*}
Combining inequality \eqref{PHI_rho}, \eqref{fccoexf} and \eqref{fccoexh} yields
\begin{align*}
     &\mathbb{E}_{t}[\Phi_{1/\Bar{\rho}}(\mathbf{x}_{t+1})]\leq \Phi_{1/\Bar{\rho}}({\mathbf{x}_t})+\Bar{\rho}\eta \langle \Bar{\mathbf{x}}_t-\mathbf{x}_{t},\mathbb{E}_t[ G_1^{t}] \rangle +\Bar{\rho}\eta \langle \Bar{\mathbf{x}}_t-\mathbf{x}_{t},\mathbb{E}_t[ G_2^{t}]\rangle+\eta^2\frac{\Bar{\rho}M^2}{2}\\
     &\leq \Phi_{1/\Bar{\rho}}({\mathbf{x}_t})+\eta^2\frac{\Bar{\rho}M^2}{2}\\
     &+\Bar{\rho}\eta\frac{1}{n}\sum_{i=1 }^{n}\Big[f_{i}({g_{i}}(\Bar{\mathbf{x}}_t))-f_{i}(u_{1,i}^{t})
    -\partial f_{i}(u_{1,i}^{t})^{\top}({g_{i}}(\mathbf{x}_t)-u_{1,i}^{t})+(\frac{\rho_{g}L_f}{2}+\rho_{f}L_{g}^2)\|\Bar{\mathbf{x}}_t-\mathbf{x}_t\|^2+\rho_{f}\|{g_{i}}(\mathbf{x}_t)-u_{1,i}^{t}\|^2\Big]\\
    &+\Bar{\rho}\eta\frac{\beta}{m}\sum_{k=1 }^{m}\Big[[{h_{k}}(\Bar{\mathbf{x}}_t)]_{+}-[u_{2,k}^{t}]_+-[u_{2,k}^{t}]'_+({h_{k}}(\mathbf{x}_t)-u_{2,k}^{t})+\frac{\rho_{1}}{2}\|\Bar{\mathbf{x}}_t-\mathbf{x}_t\|^2\Big]\\
    &=\Phi_{1/\Bar{\rho}}(\mathbf{x}_t)+\eta^2\frac{\Bar{\rho}M^2}{2}+\Bar{\rho}\eta\frac{1}{n}\sum_{i=1 }^{n}(\frac{\rho_{g}L_f}{2}+\rho_{f}L_{g}^2)\|\Bar{\mathbf{x}}_t-\mathbf{x}_t\|^2+\Bar{\rho}\eta\frac{\beta}{m}\sum_{k=1 }^m\frac{\rho_{1}}{2}\|\Bar{\mathbf{x}}_t-\mathbf{x}_t\|^2\\
     &+\Bar{\rho}\eta\frac{1}{n}\sum_{i=1 }^{n}\Big[f_{i}({g_{i}}(\Bar{\mathbf{x}}_t))-f_{i}(g_{i}(\mathbf{x}_t))+f_{i}(g_{i}(\mathbf{x}_t))-f_{i}(u_{1,i}^{t})-\partial f_{i}(u_{1,i}^{t})^{\top}({g_{i}}(\mathbf{x}_t)-u_{i}^{t})+\rho_{f}\|{g_{i}}(\mathbf{x}_t)-u_{1,i}^{t}\|^2\Big]\\
    &+\Bar{\rho}\eta\frac{\beta}{m}\sum_{k=1 }^{m}\Big[[{h_{k}}(\Bar{\mathbf{x}}_t)]_{+}-[{h_{k}}(\mathbf{x}_t)]_{+}+[{h_{k}}(\mathbf{x}_t)]_{+}-[u_{2,k}^{t}]_+-[u_{2,k}^{t}]'_+({h_{k}}(\mathbf{x}_t)-u_{2,k}^{t})\Big].\numberthis \label{PHIrho}
\end{align*}

From Lemma \ref{lemma:phi_weakly},  the function $\Phi$ is $C$-weakly convex. We have $(\bar{\rho}-C)-$strong convexity of $\mathbf{x}\mapsto \Phi(\mathbf{x})+\frac{\bar{\rho}}{2}\|\mathbf{x}_t-\mathbf{x}\|^2$
\begin{align*}
    \Phi(\mathbf{\bar{x}}_t)-\Phi(\mathbf{x}_t) \leq (\frac{C}{2}-\bar{\rho})\|\mathbf{\bar{x}}_t-\mathbf{x}_t\|^2.
\end{align*}

We get
\begin{align*}
    &\frac{1}{n}\sum_{i=1}^{n}f_{i}({g_{i}}(\Bar{\mathbf{x}}_t))-f_{i}(g_{i}(\mathbf{x}_t))+\beta \frac{1}{m}\sum_{k=1}^{m}([{h_{k}}(\Bar{\mathbf{x}}_t)]_{+}-[{h_{k}}(\mathbf{x}_t)]_{+})=\Phi(\mathbf{\bar{x}}_t)-\Phi(\mathbf{x}_t)\leq (\frac{C}{2}-\bar{\rho})\|\mathbf{\bar{x}}_t-\mathbf{x}_t\|^2.\numberthis \label{F_iH_i}
\end{align*}

% Since  $F$ is $\rho_F$ weakly convex function, $H$ is $\rho_H$ weakly convex function, we have $(\Bar{\rho}-\rho_F)$- strong convexity of $\mathbf{x} \mapsto F_i(\mathbf{x})+\frac{\Bar{\rho}}{2}\|\mathbf{x}^t-\mathbf{x}\|^2$
% \begin{align*}
%     F_i(\Bar{\mathbf{x}}^t)-F_i(\mathbf{x}^t)\leq (\frac{\rho_F}{2}-\Bar{\rho}) \|\mathbf{x}^t-\mathbf{x}\|^2~~~~~~H_k(\Bar{\mathbf{x}}^t)-H_k(\mathbf{x}^t)\leq (\frac{\rho_H}{2}-\Bar{\rho})\|\mathbf{x}^t-\mathbf{x}\|^2\numberthis \label{F_iH_i}
% \end{align*}
Plugging inequalities \eqref{F_iH_i} to \eqref{PHIrho}, we get
\begin{align*}
     &\mathbb{E}_{t}[\Phi_{1/\Bar{\rho}}(\mathbf{x}_{t+1})]\\
     & \leq \Phi_{1/\Bar{\rho}}(\mathbf{x}_t)+\eta^2\frac{\Bar{\rho}M^2}{2}+\Bar{\rho}\eta\Big(\frac{\rho_{g}L_f}{2}+\rho_{f}L_{g}^2+\beta\frac{\rho_{1}}{2}+\frac{C}{2}-\Bar{\rho}\Big)\|\Bar{\mathbf{x}}_t-\mathbf{x}_t\|^2\\
     &+\Bar{\rho}\eta\frac{1}{n}\sum_{i=1 }^{n}\Big[f_{i}(g_{i}(\mathbf{x}_t))-f_{i}(u_{1,i}^{t}))-\partial f_{i}(u_{1,i}^{t})^{\top}({g_{i}}(\mathbf{x}_t)-u_{1,i}^{t})+\rho_{f}\|{g_{i}}(\mathbf{x}_t)-u_{1,i}^{t}\|^2\Big]\\
    &+\Bar{\rho}\eta\frac{\beta}{m}\sum_{k=1 }^{m}\Big[[{h_{k}}(\mathbf{x}_t)]_{+}-[u_{2,k}^{t}]_+-[u_{2,k}^{t}]'_+({h_{k}}(\mathbf{x}_t)-u_{2,k}^{t})\Big].
\end{align*}
Set $\frac{\rho_{g}L_f}{2}+\rho_{f}L_{g}^2+\beta\frac{\rho_{1}}{2}+\frac{C}{2}=\frac{\Bar{\rho}}{2}$, we have
\begin{align*}
     &\mathbb{E}_{t}[\Phi_{1/\Bar{\rho}}(\mathbf{x}_{t+1})]\\
     & \leq \Phi_{1/\Bar{\rho}}(\mathbf{x}_t)+\eta^2\frac{\Bar{\rho}M^2}{2}-\eta\frac{\Bar{\rho}^2}{2}\|\Bar{\mathbf{x}}_t-\mathbf{x}_t\|^2\\
      &+\Bar{\rho}\eta\frac{1}{n}\sum_{i=1 }^{n}\Big[f_{i}(g_{i}(\mathbf{x}_t))-f_{i}(u_{1,i}^{t}))-\partial f_{i}(u_{1,i}^{t})^{\top}({g_{i}}(\mathbf{x}_t)-u_{1,i}^{t})+\rho_{f}\|{g_{i}}(\mathbf{x}_t)-u_{1,i}^{t}\|^2\Big]\\
    &+\Bar{\rho}\eta\frac{\beta}{m}\sum_{k=1 }^{m}\Big[[{h_{k}}(\mathbf{x}_t)]_{+}-[u_{2,k}^{t}]_+-[u_{2,k}^{t}]'_+({h_{k}}(\mathbf{x}_t)-u_{2,k}^{t})\Big]\\
    &\leq \Phi_{1/\Bar{\rho}}(\mathbf{x}_t)+\eta^2\frac{\Bar{\rho}M^2}{2}-\frac{\eta}{2}\|\nabla \Phi_{1/\Bar{\rho}}(\mathbf{x}_t)\|^2\\
&+\Bar{\rho}\eta\frac{1}{n}\sum_{i=1 }^{n}\Big[f_{i}(g_{i}(\mathbf{x}_t))-f_{i}(u_{1,i}^{t}))-\partial f_{i}(u_{1,i}^{t})^{\top}({g_{i}}(\mathbf{x}_t)-u_{1,i}^{t})+\rho_{f}\|{g_{i}}(\mathbf{x}_t)-u_{1,i}^{t}\|^2\Big]\\
    &+\Bar{\rho}\eta\frac{\beta}{m}\sum_{k=1 }^{m}\Big[[{h_{k}}(\mathbf{x}_t)]_{+}-[u_{2,k}^{t}]_+-[u_{2,k}^{t}]'_+({h_{k}}(\mathbf{x}_t)-u_{2,k}^{t})\Big].
\end{align*}
Using the Lipschitz continuity of $f_{i}$ and the fact that $0\leq [u_{2,k}^{t}]'_+\leq 1$, 
we get
\begin{align*}
      \mathbb{E}_{t}[\Phi_{1/\Bar{\rho}}(\mathbf{x}_{t+1})]&\leq \Phi_{1/\Bar{\rho}}(\mathbf{x}_t)+\eta^2\frac{\Bar{\rho}M^2}{2}-\frac{\eta}{2}\|\nabla \Phi_{1/\Bar{\rho}}(\mathbf{x}_t)\|^2 +2\Bar{\rho}L_f\eta\frac{1}{n}\sum_{i=1 }^{n}\|g_{i}(\mathbf{x}_t)-u_{1,i}^{t}\|\\
      &+\Bar{\rho}\rho_{f}\eta\frac{1}{n}\sum_{i=1 }^{n}\|{g_{i}}(\mathbf{x}_t)-u_{1,i}^{t}\|^2+2\Bar{\rho}\eta\frac{\beta}{m}\sum_{k=1 }^m\|{h_{k}}(\mathbf{x}_t)-u_{2,k}^{t}\|.\numberthis \label{Phiwithaverage}
\end{align*}
Taking the full expectation on both sides of \eqref{Phiwithaverage} and adding the result of Lemma \ref{Xi}, \eqref{ex1f} and \eqref{ex1h}, we have
\begin{align*}
      &\mathbb{E}[\Phi_{1/\Bar{\rho}}(\mathbf{x}_{t+1})]\\
      &\leq \EX[\Phi_{1/\Bar{\rho}}(\mathbf{x}_t)]+\eta^2\frac{\Bar{\rho}M^2}{2}-\frac{\eta}{2}\EX[\|\nabla \Phi_{1/\Bar{\rho}}(\mathbf{x}_t)\|^2]\\
      & +2\Bar{\rho}\eta\Big[L_f(1-\frac{|\mathcal{B}|\gamma_1}{2n})^t\frac{1}{n}\sum_{i=1}^{n}\|u_{1,i}^{0}-g_{i}(\mathbf{x}_{0})\|+\frac{4 n L_gL_fM\eta }{|\mathcal{B}|\gamma_1^{1/2}}
      +\frac{2L_f\gamma_1^{1/2}\sigma_g}{|\mathcal{B}_{1,i}|^{1/2}}\Big]\\
      &+\Bar{\rho}\eta\Big[\rho_{f}(1-\frac{|\mathcal{B}|\gamma_1}{n})^t\frac{1}{n}\sum_{i=1}^{n}\|u_{1,i}^{0}-g_{i}(\mathbf{x}_{0})\|^2+\frac{8 n^2\eta^2 L_g^2\rho_{f}M^2}{|\mathcal{B}|^2\gamma_1}+\frac{2\gamma_1\sigma_g^2\rho_{f}}{|\mathcal{B}_{1,i}|}\Big]\\
      &+2\Bar{\rho}\eta\beta\Big[(1-\frac{|\mathcal{B}_c|\gamma_2}{2m})^t\frac{1}{m}\sum_{k=1}^{m}\|u_{2,k}^{0}-h_{k}(\mathbf{x}_{0})\|+\frac{4 m  L_h\eta M}{|\mathcal{B}_c|\gamma_2^{1/2}}
+\frac{2\gamma_2^{1/2}\sigma_h}{|\mathcal{B}_{2,k}|^{1/2}}\Big].
\end{align*}
Taking summation from $t=0$ to $T-1$ yields
\begin{align*}
      &\mathbb{E}[\Phi_{1/\Bar{\rho}}(\mathbf{x}_{T})]\\
      &\leq \Phi_{1/\Bar{\rho}}(\mathbf{x}_0)+\eta^2T\frac{\Bar{\rho}M^2}{2}-\frac{\eta }{2}\sum_{t=0}^{T-1}\EX[\|\nabla \Phi_{1/\Bar{\rho}}(\mathbf{x}_t)\|^2]\\
      & +2\Bar{\rho}\eta\Big[L_f\frac{2n}{|\mathcal{B}|\gamma_1}\frac{1}{n}\sum_{i=1}^{n}\|u_{1,i}^{0}-g_{i}(\mathbf{x}_{0})\|+T\frac{8 n L_gL_f\eta M}{|\mathcal{B}|\gamma_1^{1/2}}
      +T\frac{4L_f\gamma_1^{1/2}\sigma_g}{|\mathcal{B}_{1,i}|^{1/2}}\Big]\\
      &+\Bar{\rho}\eta\Big[\rho_{f}\frac{n}{|\mathcal{B}|\gamma_1}\frac{1}{n}\sum_{i=1}^{n}\|u_{1,i}^{0}-g_{i}(\mathbf{x}_{0})\|^2+T\frac{8 n^2\eta^2 L_g^2\rho_{f}M^2}{|\mathcal{B}|^2\gamma_1}+T\frac{2\gamma_1\sigma_g^2\rho_{f}}{|\mathcal{B}_{1,i}|}\Big]\\
      &+2\Bar{\rho}\eta\beta\Big[\frac{2m}{|\mathcal{B}_c|\gamma_2}\frac{1}{m}\sum_{k=1}^{m}\|u_{2,k}^{0}-h_{k}(\mathbf{x}_{0})\|+T\frac{8 m L_h\eta M}{|\mathcal{B}_c|\gamma_2^{1/2}}
+T\frac{4\gamma_2^{1/2}\sigma_h}{|\mathcal{B}_{2,k}|^{1/2}}\Big],
\end{align*}
where we use the fact that $\sum_{t=0}^{T-1}(1-\mu)\leq \frac{1}{\mu}$. 
Lower bounding the left-hand-side by $\min_\mathbf{x}\Phi(\mathbf{x})$ and dividing both sides by $T$, we obtain
\begin{align*}
     &\frac{1}{T} \sum_{t=0}^{T-1}\EX[\|\nabla \Phi_{1/\Bar{\rho}}(\mathbf{{w}}_t)\|^2]\\
     &\leq \frac{2}{\eta T}[\Phi_{1/\Bar{\rho}}(\mathbf{x}_0)-\min_\mathbf{x}\Phi(\mathbf{x})]+\eta\Bar{\rho}M^2+\frac{\mathcal{C}\Bar{\rho}}{T\gamma}\max\{\frac{n}{|\mathcal{B}|},\frac{\beta m}{|\mathcal{B}_c|}\}\\
     &+\mathcal{C}\Big(\Bar{\rho}\frac{\eta M}{\gamma^{1/2}}\max\{\frac{n}{|\mathcal{B}|},\frac{\beta m}{|\mathcal{B}_c|}\}+\frac{n^2\eta^2 M^2\Bar{\rho}}{|\mathcal{B}|^2\gamma}+ \gamma^{1/2}\Bar{\rho}\max\{\frac{1}{|\mathcal{B}_{1,i}|^{1/2}}, \frac{\beta}{|\mathcal{B}_{2,k}|^{1/2}}\}+\Bar{\rho}\frac{\gamma}{|\mathcal{B}_{1,i}|}\Big).
\end{align*}
where we set $\gamma_1 =\gamma_2= \gamma$ and 
\begin{align*}
    \mathcal{C}=\max\{&8L_f\frac{1}{n}\sum_{i=1}^{n}\|u_{1,i}^{0}-g_{i}(\mathbf{x}_{0})\|,2\rho_{f}\frac{1}{n}\sum_{i=1}^{n}\|u_{1,i}^{0}-g_{i}(\mathbf{x}_{0})\|^2,8\frac{1}{m}\sum_{k=1}^{m}\|u_{2,k}^{0}-h_{k}(\mathbf{x}_{0})\| \\
    &32 L_gL_f, 16 L_g^2\rho_{f}, 16L_f\sigma_g, 4\sigma_g^2\rho_{f},16 L_h ,  16\sigma_h\}.
\end{align*}

As we set $M^2\ge 2L_f^2L_g^2+2\beta^2L_h^2$ and $\frac{\rho_{g}L_f}{2}+\rho_{f}L_{g}^2+\beta\frac{\rho_{1}}{2}+\frac{C}{2}=\frac{\Bar{\rho}}{2}$, with  $\gamma = \mathcal{O}(\min\{|\mathcal{B}_{1,i}|, \frac{ |\mathcal{B}_{2, k}|}{\beta^2}\}\frac{\epsilon^4}{\beta^2})$, $\eta = \mathcal{O}(\min\{\frac{|\mathcal{B}|}{n},\frac{|\mathcal{B}_c|}{\beta m}\}\min\{|\mathcal{B}_{1,i}|^{1/2}, \frac{ |\mathcal{B}_{2, k}|^{1/2}}{\beta}\}\frac{\epsilon^4}{\beta^3})$ and $T=\mathcal{O}(\max\{\frac{\beta}{|\mathcal{B}_{1,i}|^{1/2}},\frac{\beta^2}{|\mathcal{B}_{2, k}|^{1/2}},\frac{1}{|\mathcal{B}_{1,i}|}\}\max\{\frac{n}{|\mathcal{B}|},\frac{\beta m}{|\mathcal{B}_c|}\}\frac{\beta^3}{\epsilon^6})$, we have

\begin{align*}
   \frac{1}{T} \sum_{t=0}^{T-1}\EX[\|\nabla \Phi_{1/\Bar{\rho}}(\mathbf{{x}}_t)\|^2]\leq \mathcal{O}(\epsilon^2).
\end{align*}

By Jensen’s inequality, we can get $\EX[\|\nabla \Phi_{1/\Bar{\rho}}(\mathbf{{x}}_{\hat{t}})\|]\leq \mathcal{O}(\epsilon)$, where $\hat{t}$ is selected uniformly at random from $\{1,\ldots, T\}$.
\end{proof}

Now, we give the proof of Theorem \ref{smoothweaklFCCO}.
\begin{proof}[Proof of Theorem \ref{smoothweaklFCCO}]
    
% $f_i$ and $g_i$  are \textbf{smooth} and Lipchitz continuous in problem \eqref{proofFCCOobjective}. In this case,  we approximate the gradient of $F(\mathbf{x}_t)$ by:
%     $G_1^{t} = \frac{1}{|\mathcal{B}^{t}|}\sum_{i \in \mathcal{B}^{t}} \nabla g_{i}(\mathbf{x}_{t}; \mathcal{B}_{1,i}^{t})\nabla f_i( u_{1,i}^{t} ).$
    
From \eqref{PHI_rho}, also denote $\bar{\mathbf{x}}_t:=\text{prox}_{\Phi/\Bar{\rho}}(\mathbf{x}_t)$ and consider change in the Moreau envelope,
we have
\begin{align*}
      \mathbb{E}_{t}[\Phi_{1/\Bar{\rho}}(\mathbf{x}_{t+1})]
   \leq  \Phi_{1/\Bar{\rho}}(\mathbf{x}_t)+\Bar{\rho}\eta \langle \Bar{\mathbf{x}}_t-\mathbf{x}_{t},\mathbb{E}_t[ G_1^{t}] \rangle +\Bar{\rho}\eta \langle \Bar{\mathbf{x}}_t-\mathbf{x}_{t},\mathbb{E}_t[ G_2^{t}]\rangle+\eta^2\frac{\Bar{\rho}M^2}{2},\numberthis \label{Phi_rho_smooth}
\end{align*}
where the gradient can be bounded by $M$, where $M^2\ge 2L_f^2L_g^2+2\beta^2L_h^2$ and
\begin{align*}
    \mathbb{E}_t[ G_1^{t}] \in\frac{1}{n}\sum_{i=1 }^{n}\nabla f_{i}(u_{1,i}^{t})^{\top}\nabla {g_{i}}(\mathbf{x}_t)^{\top},~~~~
      \mathbb{E}_t[ G_2^{t}] \in \frac{\beta}{m}\sum_{k=1}^m\partial h_k(\mathbf{x}_{t})[u_{2,k}^{t}]'_+.
\end{align*}
Next we give the bound of $\langle \bar{\mathbf{x}}_t-\mathbf{x}_{t},\mathbb{E}_t[ G_2^{t}]\rangle$ and $ \langle \bar{\mathbf{x}}_t-\mathbf{x}_{t},\mathbb{E}_t[ G_2^{t}]\rangle$. 
It's easy to verify that $F$
is $L_{\nabla F}$-smooth, where $L_{\nabla F}=L_fL_{\nabla g}+L_g^2L_{\nabla f}$. Then we have
\begin{align*}
   &F(\mathbf{x}_t)\leq F(\bar{\mathbf{x}}_{t})+\langle \nabla F(\bar{\mathbf{x}}_{t}), \mathbf{x}^{t} -\bar{\mathbf{x}}_t\rangle+\frac{L_{\nabla F}}{2}\|\bar{\mathbf{x}}_t-\mathbf{x}_{t}\|^2\\
   &= F(\bar{\mathbf{x}}_{t})+\langle \nabla F(\bar{\mathbf{x}}_{t})-\nabla F(\mathbf{x}_{t}), \mathbf{x}_{t}-\bar{\mathbf{x}}_t\rangle+\frac{L_{\nabla F}}{2}\|\bar{\mathbf{x}}_t-\mathbf{x}_{t}\|^2\\
   &+\langle \frac{1}{n}\sum_{i=1 }^{n}\nabla f_{i}({g_{i}}(\mathbf{x}_{t}))^{\top}\nabla{g_{i}}(\mathbf{x}_{t})^{\top}-\frac{1}{n}\sum_{i=1 }^{n}\nabla f_{i}(u_{1,i}^{t})^{\top}\nabla{g_{i}}(\mathbf{x}_t)^{\top}, \mathbf{x}_{t} -\bar{\mathbf{x}}_t\rangle\\
   &+\langle \frac{1}{n}\sum_{i=1 }^{n}\nabla f_{i}(u_{1,i}^{t})^{\top}\nabla{g_{i}}(\mathbf{x}_t)^{\top}, \mathbf{x}_{t} -\bar{\mathbf{x}}_t\rangle.
\end{align*}
Then, we have
\begin{align*}
    &\mathbb{E}_t[ \langle G_1^{t}, \bar{\mathbf{x}}_t -\mathbf{x}_{t} 
\rangle]=\langle \frac{1}{n}\sum_{i=1 }^{n}\nabla f_{i}(u_{1,i}^{t})^{\top}\nabla{g_{i}}(\mathbf{x}_t)^{\top}, \bar{\mathbf{x}}_t-\mathbf{x}_{t} \rangle\\
    &\leq  F(\bar{\mathbf{x}}_{t})-F(\mathbf{x}_{t})+\frac{L_{\nabla F}}{2}\|\bar{\mathbf{x}}_t-\mathbf{x}_{t}\|^2+L_{\nabla F}\|\bar{\mathbf{x}}_t-\mathbf{x}_{t}\|^2+L_{\nabla f} L_g\frac{1}{n}\sum_{i=1 }^{n}\|g_{i}(\mathbf{x}_t)-u_{1,i}^t\|\|\bar{\mathbf{x}}_t-\mathbf{x}_{t} \|\\
    &\leq  F(\bar{\mathbf{x}}_{t})-F(\mathbf{x}_{t})+(2L_{\nabla F}+2L_{\nabla f}L_g)\|\bar{\mathbf{x}}_t-\mathbf{x}_{t}\|^2+2L_{\nabla f}L_g \frac{1}{n}\sum_{i=1 }^{n}\|g_{i}(\mathbf{x}_t)-u_{1,i}^t\|^2.
\end{align*}
From the \eqref{EXhG}, we have
\begin{align*}
    \mathbb{E}_t[ \langle G_2^{t}, \bar{\mathbf{x}}_t -\mathbf{x}_{t} 
\rangle]&= \beta\frac{1}{m}\sum_{k=1}^m [u_{2,k}^{t}]'_+\partial{h_{k}}(\mathbf{x}_t)^{\top}(\Bar{\mathbf{x}}_t-\mathbf{x}_t)\\
       &\leq\frac{\beta}{m}\sum_{k=1 }^{m}\Big[[{h_{k}}(\Bar{\mathbf{x}}_t)]_{+}-[u_{2,k}^{t}]_+-[u_{2,k}^{t}]'_+({h_{k}}(\mathbf{x}_t)-u_{2,k}^{t})+\frac{\rho_{1}}{2}\|\Bar{\mathbf{x}}_t-\mathbf{x}_t\|^2\Big].
\end{align*}
    Adding above two estimation to \eqref{Phi_rho_smooth}, we have
    \begin{align*}
      &\mathbb{E}_{t}[\Phi_{1/\Bar{\rho}}(\mathbf{x}_{t+1})]\leq \Phi_{1/\Bar{\rho}}(\mathbf{x}_t)+\Bar{\rho}\eta \langle \bar{\mathbf{x}}_t-\mathbf{x}_{t},\mathbb{E}_t[ G_1^{t}] \rangle +\Bar{\rho}\eta \langle \bar{\mathbf{x}}_t-\mathbf{x}_{t},\mathbb{E}_t[ G_2^{t}]\rangle+\eta^2\frac{\Bar{\rho}M^2}{2}\\
      &\leq \Phi_{1/\Bar{\rho}}(\mathbf{x}_t)+\eta^2\frac{\Bar{\rho}M^2}{2}\\
      &+\Bar{\rho}\eta\Big[  F(\bar{\mathbf{x}}_{t})-F(\mathbf{x}_{t})+(2L_{\nabla F}+2L_{\nabla f}L_g)\|\bar{\mathbf{x}}_t-\mathbf{x}_{t}\|^2+2L_{\nabla f}L_g\frac{1}{n}\sum_{i=1 }^{n}\|g_{i}(\mathbf{x}_t)-u_{1,i}^t\|^2\Big]\\
      &+\Bar{\rho}\eta\frac{\beta}{m}\sum_{k=1 }^{m}\Big[[{h_{k}}(\Bar{\mathbf{x}}_t)]_{+}-[{h_{k}}(\mathbf{x}_t)]_{+}+[{h_{k}}(\mathbf{x}_t)]_{+}-[u_{2,k}^{t}]_+-[u_{2,k}^{t}]'_+({h_{k}}(\mathbf{x}_t)-u_{2,k}^{t})+\frac{\rho_{1}}{2}\|\Bar{\mathbf{x}}_t-\mathbf{x}_t\|^2\Big].\numberthis \label{Phinotweakly}
\end{align*}
Since smoothness of $F$ implies the weakly convexity of $F$,  we still have $(\bar{\rho}-C)-$strong convexity of $\mathbf{x}\mapsto \Phi(\mathbf{x})+\frac{\bar{\rho}}{2}\|\mathbf{x}_t-\mathbf{x}\|^2$
\begin{align*}
    \Phi(\mathbf{\bar{x}}_t)-\Phi(\mathbf{x}_t)\leq (\frac{C}{2}-\bar{\rho})\|\mathbf{\bar{x}}_t-\mathbf{x}_t\|^2.
\end{align*}
Adding above inequality to \eqref{Phinotweakly}, we have
\begin{align*}
    \mathbb{E}_{t}[\Phi_{1/\Bar{\rho}}(\mathbf{x}_{t+1})]&\leq \Phi_{1/\Bar{\rho}}(\mathbf{x}_t)+\eta^2\frac{\Bar{\rho}M^2}{2}+\Bar{\rho}\eta(2L_{\nabla F}+2L_{\nabla f}L_g+\beta\frac{\rho_{1}}{2}+\frac{C}{2}-\bar{\rho})\|\bar{\mathbf{x}}_t-\mathbf{x}_{t}\|^2\\
    &+2L_{\nabla f}L_g\Bar{\rho}\eta\frac{1}{n}\sum_{i=1 }^{n}\|g_{i}(\mathbf{x}_t)-u_{1i}^t\|^2+\Bar{\rho}\eta\frac{\beta}{m}\sum_{k=1 }^{m}\Big[[{h_{k}}(\mathbf{x}_t)]_{+}-[u_{2,k}^{t}]_+-[u_{2,k}^{t}]'_+({h_{k}}(\mathbf{x}_t)-u_{2,k}^{t})\Big].
\end{align*}
If we set $2L_{\nabla F}+2L_{\nabla f}L_f+\beta\frac{\rho_{1}}{2}+\frac{C}{2}=\frac{\bar{\rho}}{2}$,  we have
\begin{align*}
    \mathbb{E}_{t}[\Phi_{1/\Bar{\rho}}(\mathbf{x}_{t+1})]
    &\leq \Phi_{1/\Bar{\rho}}(\mathbf{x}_t)+\eta^2\frac{\Bar{\rho}M^2}{2}-\Bar{\rho}\eta\frac{\bar{\rho}}{2}\|\bar{\mathbf{x}}_t-\mathbf{x}_{t}\|^2\\
     &+2L_{\nabla f}L_g\Bar{\rho}\eta\frac{1}{n}\sum_{i=1 }^{n}\|g_{i}(\mathbf{x}_t)-u_{1i}^t\|^2+\Bar{\rho}\eta\frac{\beta}{m}\sum_{k=1 }^{m}\Big[[{h_{k}}(\mathbf{x}_t)]_{+}-[u_{2,k}^{t}]_+-[u_{2,k}^{t}]'_+({h_{k}}(\mathbf{x}_t)-u_{2,k}^{t})\Big]\\
    &\leq \Phi_{1/\Bar{\rho}}(\mathbf{x}_t)+\eta^2\frac{\Bar{\rho}M^2}{2}-\frac{\eta}{2}\|\nabla \Phi_{1/\Bar{\rho}}(\mathbf{x}_t)\|^2+2L_{\nabla f}\Bar{\rho}\eta\frac{1}{n}\sum_{i=1 }^{n}\|g_{i}(\mathbf{x}_t)-u_{1i}^t\|^2+2\Bar{\rho}\eta \frac{\beta}{m}\sum_{k=1 }^m\|{h_{k}}(\mathbf{x}_t)-u_{2,k}^{t}\|,
\end{align*}
where we use the fact
$0\leq [u_{2,k}^{t}]'_+\leq 1$.

Adding the result of Lemma \ref{Xi} and inequality \eqref{ex1h},  we get
\begin{align*}
     \mathbb{E}[\Phi_{1/\Bar{\rho}}(\mathbf{x}_{t+1})]&\leq \EX[\Phi_{1/\Bar{\rho}}(\bar{\mathbf{x}}_t)]+\eta^2\frac{\Bar{\rho}M^2}{2}-\frac{\eta}{2}\EX[\|\nabla \Phi_{1/\Bar{\rho}}(\mathbf{x}_t)\|^2]\\
     &+2\Bar{\rho}\eta\Big[L_{\nabla f }L_g(1-\frac{|\mathcal{B}|\gamma_1}{n})^t\frac{1}{n}\sum_{i=1}^{n}\|u_{1,i}^{0}-g_{i}(\mathbf{x}_{0})\|^2+\frac{8 n^2\eta^2 L_g^2L_{\nabla f}L_gM^2}{|\mathcal{B}|^2\gamma_1}+\frac{2\gamma_1\sigma_g^2L_{\nabla f}L_g}{|\mathcal{B}_{1,i}|}\Big]\\
  &+2\Bar{\rho}\eta\beta\Big[(1-\frac{|\mathcal{B}_c|\gamma_2}{2m})^t\frac{1}{m}\sum_{k=1}^{m}\|u_{2,k}^{0}-h_{k}(\mathbf{x}_{0})\|+\frac{4 m  L_h^2\eta M}{|\mathcal{B}_c|\gamma_2^{1/2}}
+\frac{2\gamma_2^{1/2}\sigma_h}{|\mathcal{B}_{2,k}|^{1/2}}\Big].
\end{align*}
Taking summation from $t=0$ to $T-1$ yields
\begin{align*}
      &\mathbb{E}[\Phi_{1/\Bar{\rho}}(\mathbf{x}_{T})]\\
      &\leq \EX[\Phi_{1/\Bar{\rho}}(\mathbf{x}_0)]+\eta^2T\frac{\Bar{\rho}M^2}{2}-\frac{\eta }{2}\sum_{t=0}^{T-1}\EX[\|\nabla \Phi_{1/\Bar{\rho}}(\mathbf{x}_t)\|^2]\\
&+2\Bar{\rho}\eta\Big[L_{\nabla f}L_g\frac{n}{|\mathcal{B}|\gamma_1}\frac{1}{n}\sum_{i=1}^{n}\|u_{1,i}^{0}-g_{i}(\mathbf{x}_{0})\|^2+T\frac{8 n^2\eta^2 L_g^3L_{\nabla f}M^2}{|\mathcal{B}|^2\gamma_1}+T\frac{2\gamma_1\sigma_g^2L_{\nabla f}L_g}{|\mathcal{B}_{1,i}|}\Big]\\
      &+2\Bar{\rho}\eta\beta\Big[\frac{2m}{|\mathcal{B}_c|\gamma_2}\frac{1}{m}\sum_{k=1}^{m}\|u_{2,k}^{0}-h_{k}(\mathbf{x}_{0})\|+T\frac{8 m L_h\eta M}{|\mathcal{B}_c|\gamma_2^{1/2}}
+T\frac{4\gamma_2^{1/2}\sigma_h}{|\mathcal{B}_{2,k}|^{1/2}}\Big],
\end{align*}

where we use the fact that $\sum_{t=0}^{T-1}(1-\mu)\leq \frac{1}{\mu}$. 

Lower bounding the left-hand-side by $\min_\mathbf{x}\Phi(\mathbf{x})$ and dividing both sides by $T$, we obtain
\begin{align*}
     &\frac{1}{T} \sum_{t=0}^{T-1}\EX[\|\nabla \Phi_{1/\Bar{\rho}}(\mathbf{{w}}_t)\|^2]\\
     &\leq \frac{2}{\eta T}[\Phi_{1/\Bar{\rho}}(\mathbf{x}_0)-\min_\mathbf{x}\Phi(\mathbf{x})]+\eta\Bar{\rho}M^2+\frac{\mathcal{C}\Bar{\rho}}{T\gamma}\max\{\frac{n}{|\mathcal{B}|},\frac{\beta m}{|\mathcal{B}_c|}\}\\
     &+\mathcal{C}\Big(\frac{m\beta\eta\Bar{\rho}M}{ |\mathcal{B}_c|\gamma^{1/2}}+\frac{n^2\eta^2M^2\Bar{\rho}}{|\mathcal{B}|^2\gamma}+ \frac{\beta\Bar{\rho}\gamma^{1/2}}{ |\mathcal{B}_{2,k}|^{1/2}}+ \frac{\gamma\Bar{\rho}}{|\mathcal{B}_{1, i}|}\Big),
\end{align*}
where we use $\gamma_1 =\gamma_2= \gamma$ and 
\begin{align*}
    \mathcal{C}=\max\{&4L_{\nabla f}L_g\frac{1}{n}\sum_{i=1}^{n}\|u_{1,i}^{0}-g_{i}(\mathbf{x}_{0})\|^2,  8\frac{1}{m}\sum_{k=1}^{m}\|u_{2,k}^{0}-h_{k}(\mathbf{x}_{0})\|\\
    &32L_{\nabla f} L_g^3,32 L_h ,
8L_{\nabla f}L_g\sigma_g^2, 16\sigma_h\}.
\end{align*}
As we set
$M^2\ge 2L_f^2L_g^2+2\beta^2L_h^2$  and $2L_{\nabla F}+2L_{\nabla f}L_f+\beta\frac{\rho_{1}}{2}+\frac{C}{2}=\frac{\bar{\rho}}{2}$, then  with  $\gamma = \mathcal{O}(\min\{\frac{ |\mathcal{B}_{2, k}|}{\beta^4}\epsilon^4, \frac{|\mathcal{B}_{1, i}|}{\beta}\epsilon^2\})$, $\eta = \mathcal{O}(\min\{\frac{|\mathcal{B}||\mathcal{B}_{1, i}|^{1/2}\epsilon^2}{n\beta^2},\frac{ |\mathcal{B}_c||\mathcal{B}_{2, k}|^{1/2}\epsilon^4}{ \beta^5 m}\})$ and $T=\mathcal{O}(\max\{\frac{m\beta^6}{|\mathcal{B}_{2, k}|^{1/2}|\mathcal{B}_c|\epsilon^6},\frac{n\beta^3}{|\mathcal{B}||\mathcal{B}_{1, i}|^{1/2}\epsilon^4}, \frac{n\beta^2}{|\mathcal{B}||\mathcal{B}_{1, i}|\epsilon^4}\})$, we have
\begin{align*}
   \frac{1}{T} \sum_{t=0}^{T-1}\EX[\|\nabla \Phi_{1/\Bar{\rho}}(\mathbf{{x}}_t)\|^2]\leq \mathcal{O}(\epsilon^2).
\end{align*}

By Jensen’s inequality, we can get $\EX[\|\nabla \Phi_{1/\Bar{\rho}}(\mathbf{{x}}_{\hat{t}})\|]\leq \mathcal{O}(\epsilon)$, where $\hat{t}$ is selected uniformly at random from $\{1,\ldots, T\}$.

\end{proof}

\newpage

\newpage

\end{document}